\newcommand{\E}{\mathsf{E}}
\newcommand{\var}{\mathsf{var}}
\newcommand{\cov}{\mathsf{cov}}
\newcommand{\R}{\mathbb{R}}
\newcommand{\1}{\mathbf{1}}
\newcommand{\PR}{\mathsf{P}}
\newcommand\fracp[2]{\frac{\partial {#1}}{\partial {#2}}}
\newtheorem{corollary}{Corollary}[section]
\newtheorem{definition}{Definition}[section]
\newtheorem{proposition}{Proposition}[section]
\newtheorem{remark}{Remark}[section]
\numberwithin{table}{section}
\numberwithin{figure}{section}
\title{\LARGE\bf Should Bank Stress Tests Be Fair?}
\author{Paul Glasserman and Mike Li \\ Columbia Business School} 
\date{May 2023}
\begin{document} 

\maketitle

\begin{abstract}
Regulatory stress tests have become one of the main tools for setting
capital requirements at the largest U.S. banks. The Federal Reserve
uses confidential models to evaluate bank-specific outcomes for
bank-specific portfolios in shared stress scenarios. As a matter of
policy, the same models are used for all banks, despite considerable
heterogeneity across institutions; individual banks have contended
that some models are not suited to their businesses. Motivated by this
debate, we ask, what is a fair aggregation of individually tailored
models into a common model? We argue that simply pooling data
across banks treats banks equally but is subject to two deficiencies:
it may distort the impact of legitimate portfolio features, and it is
vulnerable to implicit misdirection of legitimate information to infer
bank identity. We compare various notions of regression fairness to
address these deficiencies, considering both forecast accuracy and
equal treatment. In the setting of linear models, we argue for
estimating and then discarding centered bank fixed effects as
preferable to simply ignoring differences across banks. We present
evidence that the overall impact can be material. We also discuss
extensions to nonlinear models.
\end{abstract}

\baselineskip18pt

\section{Introduction}

In the aftermath of the 2008 financial crisis, U.S. banking regulators
adopted stress testing as a primary tool for monitoring the capital
adequacy of the largest banks. For each round of annual stress tests,
the Federal Reserve announces a ``severely adverse stress scenario,''
defined by a hypothetical path of economic variables over the next
several quarters. A typical path includes an increase in unemployment,
a decline in GDP, and projections for the level and volatility of the stock
market, among other variables. The largest banks provide the Fed with
detailed information about their loan portfolios and other assets. The
Fed then applies internally developed models to project revenues and
losses for each bank through the stress scenario. Banks are required to
have sufficient capital to weather the projected losses.

The Fed does not disclose details of the models it uses to project
revenues and losses. The Fed makes clear that to ensure consistent 
treatment for different banks it uses ``industry models,'' as opposed
to models tailored to individual banks. As a matter of policy, the same
models are applied to all banks. Quoting Board of Governors \cite{frb}
(p.3), ``two firms with the same portfolio receive the same results for
that portfolio.'' We will refer to this statement as the Fed's principle of
equal treatment.

Banks have countered that the Fed's models fail to capture bank-specific
features that could lower projected losses. They have made these
arguments in requests for reconsideration of stress test results. Of course,
banks are not objective critics of the Fed's supervision; but significant
heterogeneity among the largest banks is indisputable. The banks subject
to annual stress testing include universal banks, investment banks, large
regional banks, the U.S. subsidiaries of certain foreign banks, and a
variety of more specialized financial firms. It is certainly possible that
bank-specific models would produce more accurate forecasts than a
single industry model, in which case using a single model entails a
trade-off between forecast accuracy and consistency across banks. 

The heterogeneity among large banks motivates the questions we study: 
What is the best way to aggregate bank-specific models into an industry
model? How should the Fed's principle of equal treatment be interpreted
and implemented? Is simply ignoring bank identity in estimating and
applying models the best way to achieve fairness? To what extent is fairness
at odds with accuracy? Although the heterogeneity of large banks is widely
recognized, we know of no prior work that seeks to address this property
within the constraints of the Fed's policy of equal treatment. We will argue
that addressing heterogeneity is preferable to ignoring it.

The question of fairness in algorithms and models has received a great
deal of renewed interest in recent years, in some cases reviving earlier
debates over fairness in testing and related policies that were not explicitly
``algorithmic;'' see, for example, the overviews in Barocas, Hardt, and
Narayanan \cite{fairmlbook} and Hutchinson and Mitchell \cite{hutmit}.
We draw on this literature, but our setting differs in important ways from
most discussions of fairness. 

Algorithmic fairness is usually concerned with ensuring that certain
protected attributes --- race or gender, for example --- do not influence
outcomes such as hiring decisions or loan approvals. Different methods
can be compared based on alternative measures of influence and the degree
to which sensitive attributes are indeed protected. 

The counterpart of a protected attribute in our setting is a bank's identity;
but this attribute is not so much protected (in the sense that race and gender
are) as inadmissible for the Fed's purpose. In stating that ``two firms with
the same portfolio receive the same results for that portfolio,'' the Fed is
stating that bank identity is not a legitimate predictor of losses. Perhaps, then,
fairness is achieved as long as the Fed uses the same model for all banks. In
other words, perhaps ``fairness through unawareness,'' paraphrasing
Dwork et al.\ \cite{dwork}, is sufficient in this setting. Moreover, in questioning
whether the Fed's models apply to them, banks are not claiming discrimination;
on the contrary, they are asking for discrimination --- asking that the Fed 
change its models to recognize ways in which an individual bank differs from
other banks.

To investigate these issues, we focus primarily on a simple setting in which the
``true'' loss rate for each bank is described by a bank-specific regression on
portfolio features and scenario features. The regulator's goal is to aggregate
these bank-specific models into a single model. A natural interpretation of an
``industry'' model in this setting is a pooled regression based on combining
results across banks. The pooled model treats banks equally, but we show that it
has at least two significant deficiencies: when applied to heterogeneous banks,
it can produce poor measures of the marginal impact of individual features, even
resulting in the wrong sign; and it implicitly misdirects legitimate information in
portfolio features to infer (or proxy for) bank identity in forecasting losses. The
second of these deficiencies works against the spirit of equal treatment of banks,
even if bank identity is not explicitly used in the model.

We then investigate the application of ideas from algorithmic fairness in our
setting. The fairness literature has mainly focused on classification problems
(hiring decisions and credit approvals, for example), with regression problems
getting somewhat less attention. Chzhen et al.\ \cite{chzhen} and Le Gouic et
al.\ \cite{legouic} developed a method of particular importance for regression that
Le Gouic et al.\ \cite{legouic} call ``projection to fairness.'' This method produces
optimal forecasts (in the least-squares sense) subject to a fairness constraint
known as {\it demographic parity}. We examine the application of this approach in
our setting and conclude that it goes too far in leveling results across banks.

The pooled method ignores fairness and the projection method goes too far in
imposing fairness, so we seek an intermediate solution. Johnson, Foster, and Stine
\cite{jfs} introduce a variety of methods for introducing fairness considerations in
regression. These include methods they call ``full equality of opportunity'' (FEO)
and ``substantive equality of opportunity'' (SEO). We examine these methods in our
setting and conclude that the FEO method provides an attractive solution. In
particular, we show that it addresses the two deficiencies of the pooled method
highlighted above: it removes the distortion in the pooled coefficients that results
from bank heterogeneity, and it prevents the misdirection of legitimate information
to infer bank identity. Indeed, we show that the only way to achieve lower forecast
errors than the FEO method is through such misdirection, a result that sheds light on
the trade-off between accuracy and fairness.

Moreover, the method is easy to interpret and implement: fit a pooled model with
centered bank fixed effects, and then \textit{discard} the centered fixed effects to
forecast losses. Including the fixed effects prevents misdirection of legitimate
information; discarding them is necessary to treat banks equally; centering ensures
that the overall mean forecast remains unchanged. Although we mainly work with
linear models, we show that these ideas can be extended to nonlinear models as well. 
We also derive an extension of FEO to remove certain interaction effects, as opposed
to just fixed effects.

We then investigate the empirical relevance of these considerations. We regress the
loss rates of loan portfolios (credit cards, first lien mortgages, commercial real estate,
and commercial and industrial loans) on measures of portfolio quality (past-due rates
and allowances for losses) and macroeconomic variables. We document significant
heterogeneity across banks in their estimated coefficients, and we show that the
differences between pooled and FEO estimates can be material. This investigation is
limited by the information banks make public --- the Federal Reserve has access to far
more granular data in forecasting losses. We cannot claim to approximate the Fed's
forecasts; our goal is to provide evidence of the potential importance of heterogeneity.

To help position our work, we briefly discuss some other research on bank stress tests.
Covas, Rump, and Zakrajsek \cite{covas}, Kapinos and Mitnik \cite{kapmit}, and Kupiec
\cite{kupiec} find strong evidence of heterogeneity in banks' responses to
macroeconomic shocks, and Kapinos and Mitnik \cite{kapmit} argue that ignoring
heterogeneity can substantially underestimate projected capital requirements. The
related models of Hirtle et al.\ \cite{class} and Guerrieri and Welch \cite{guewel}
forecast aggregate results and are therefore not concerned with differences among
banks. Heterogeneity in the accuracy of the Fed's models for different banks is
suggested by the comparisons in Agarwal et al.\ \cite{agarwal}, Bassett and Berrospide
\cite{basber}, and Flannery, Hirtle, and Kovner \cite{fhk} between the Fed's results and
results based on the banks' own models.

A separate line of research considers the design of stress scenarios. Several studies
(including Breuer et al.\ \cite{breuer}, Flood and Korenko \cite{flokor},
Glasserman et al.\ \cite{gkk}, Pritsker \cite{pritsker}, and Schuermann \cite{til}) have
advocated the use of multiple scenarios to capture different combinations of risk
factors. Cope et al.\ \cite{cope} and Flood et al.\ \cite{flood} recommend designing
scenarios to reflect bank heterogeneity. Parlatore and Philippon \cite{parphi} propose
a theoretical framework for scenario design as a problem of optimal information
acquisition.

Several studies have investigated the information content of stress test results, either
through market responses (as in Fernandes, Igan, and Pinheiro \cite{fip}, Flannery, Hirtle,
and Kovner \cite{fhk}, Georgescu et al.\ \cite{ggkk}, Glasserman and Tangirala
\cite{glatan}, Guerrieri and Modugno \cite{guemod}, Morgan, Peristiani, and Savino
\cite{mps}, and Sahin, de Haan, and Neretina \cite{shn}) or through subsequent bank
performance (as in Kupiec \cite{kupiec} and Philippon, Pessarossi, and Camara \cite{ppc}).
Flannery \cite{flannery} discusses just how much information the Fed should disclose
about stress testing procedures and outcomes. For perspectives on the effectiveness of the
Fed's stress tests, see Kohn and Liang \cite{kohlia} and Schuermann \cite{til}.

We provide additional background on the Federal Reserve's stress tests in
Section~\ref{s:background}. Section~\ref{s:pool} lays out our modeling framework and
analyzes the pooled industry model within this framework. Section~\ref{s:freg} analyzes
various ways to introduce fairness considerations, including the projection-to-fairness
and FEO methods. Section~\ref{s:nonlinear} considers nonlinear models.
Section~\ref{sec:experiment} presents our empirical results. Proofs of our main results
appear in the appendix. Additional supporting
theoretical (Sections~\ref{s:cross}--\ref{a:cxcomb}) and
empirical (Sections~\ref{appendix:data}--\ref{appendix:ppnr}) material is included in
the Electronic Companion. Most of our discussion considers loss models, but we consider
revenue models in Section~\ref{appendix:ppnr}.

\section{Background}
\label{s:background}

This section provides background on the Federal Reserve's stress testing process and on
the heterogeneity of the participating banks.

\subsection{Regulatory Bank Stress Tests}
\label{s:back1}

In early 2009, in the depths of the Global Financial Crisis, the Federal Reserve launched a
stress test of the 19 largest U.S. bank holding companies to gauge how much more capital
they would need if economic conditions continued to worsen. The results of the stress test
were made public, and the transparency and credibility of the process have been credited
with restoring public confidence and helping to end the crisis.

The Dodd-Frank Act, the package of reforms that followed the crisis, codified the use of
stress testing for bank supervision. The number of banks subject to DFAST (Dodd-Frank
Act Stress Tests) has varied over time. The current requirement applies annually to banks
with over \$250 billion in assets and every other year to banks with assets between \$100
billion and \$250 billion. The 2022 DFAST covered 34 banks. We refer to the participating
firms as ``banks,'' but they are more precisely holding companies, including the U.S.
subsidiaries of some foreign banks.

The inputs to the stress test analysis are the stress scenario, which is
common to all banks, and bank-specific balance sheet information.
A scenario is specified through a hypothetical path of economic variables
over the next 13 quarters. The 2022 DFAST specified paths for 28
variables, including GDP, inflation, unemployment, stock market
and real estate indexes, interest rates, exchange rates, and measures
of overseas economic activity. Each bank submits detailed information
on its loans and other assets.

The Fed uses 21 models to integrate the stress scenarios with bank-level
information to make bank-level projections. For example, one model
applies to credit cards, one to first lien residential mortgages, one to
commercial real estate loans, and another to commercial and industrial loans.
These models project losses in each of these portfolios. Some other models
project revenues.

The Fed does not disclose details of its models, either to banks or the general public.
But it does describe its general modeling approach in public documents. At a high
level, a model assigns a loss rate to a set of bank-specific loan portfolio features
$x$ and a common set of scenario variables $z$ through a function $f(x,z)$. The
function $f$ is estimated from past observations of the macro variables and 
portfolio features for multiple banks. Thus, $f$ is estimated as an industry-wide
model and then applied individually to each bank.

This approach is described, for example, on p.3 of Board of Governors \cite{frb}, where
we read, ``The Federal Reserve generally develops its models under an industry-level
approach calibrated using data from many financial institutions.\dots The Federal
Reserve models the response of specific portfolios and instruments to variations in
macroeconomic and financial scenario variables such that differences across firms
are driven by differences in firm-specific input data, as opposed to differences in
model parameters and specifications. As a result, two firms with the same portfolio
receive the same results for that portfolio in the supervisory stress test, facilitating
the comparability of results.''

As noted in the introduction, we refer to the principle that banks with the same
portfolio receive the same results as {\it equal treatment}.

\subsection{Bank Heterogeneity}
\label{s:back2}

The appropriateness of equal treatment seems incontrovertible. 
But the right notion of consistency across firms becomes
less clear when portfolios vary widely, and the largest U.S. banks
are a highly heterogeneous group. We may not expect a regional
bank to have an investment bank's skill in the capital markets, nor
do we expect the investment bank to have the regional bank's skill
in making single-family residential loans. 

Heterogeneity among large banks is illustrated in Figure~\ref{f:gsib}.
The left panel applies to the banks that participated in the Federal
Reserve's 2022 stress test. It shows the distribution of the banks by their
Global Industry Classification Standard sub-industry classifications.
This group includes diversified banks (such as JPMorgan Chase and
Bank of America); regional banks (like PNC Financial and Citizens Financial);
consumer finance companies (including American Express and Discover);
custody banks (such as Bank of New York Mellon and State Street); 
investment banks (including Goldman Sachs and Morgan Stanley);
and intermediate holding companies comprising the U.S. subsidiaries
of foreign banks (such as TD Group and Credit Suisse USA). The
distribution of banks across categories reflects important differences
in their areas of specialization.

The right panel of Figure~\ref{f:gsib} applies to the U.S. Global
Systemically Important Banks (G-SIBs). It shows heterogeneity in the
fractions of loans the banks hold in each of four categories.
For example, for Wells Fargo (WFC) first lien mortgages are a relatively
large fraction of its loans, whereas for Citigroup (C), credit cards make
up a relatively large fraction. The figure suggests different areas of
specialization in lending, even among the largest U.S. banks.

\begin{figure}
\centerline{\includegraphics[trim={.25in 1.25in .25in 1.25in},clip,width=3.25in]{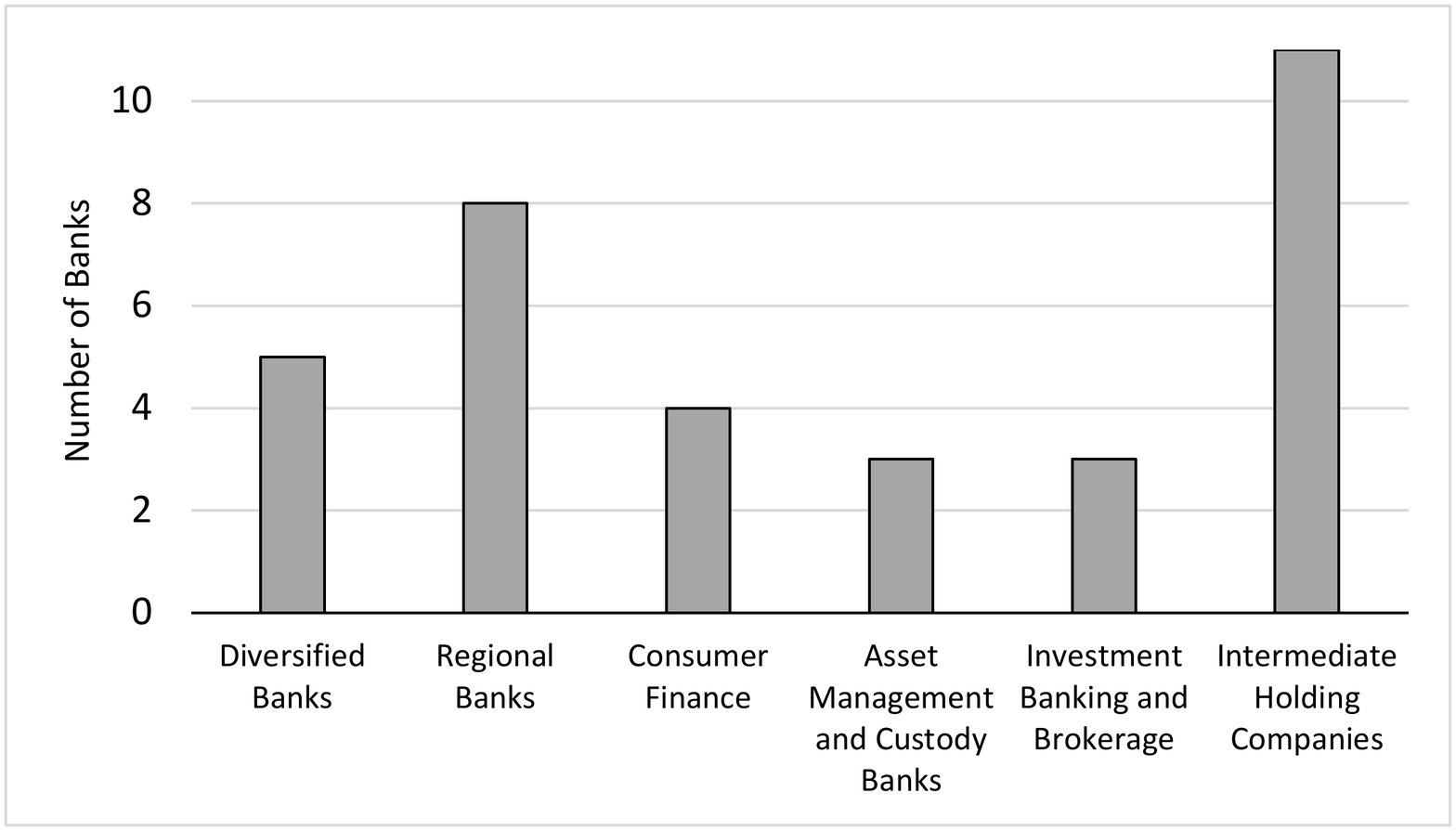}\quad\includegraphics[trim={.25in 1.25in .25in 1.25in},clip,width=3.25in]{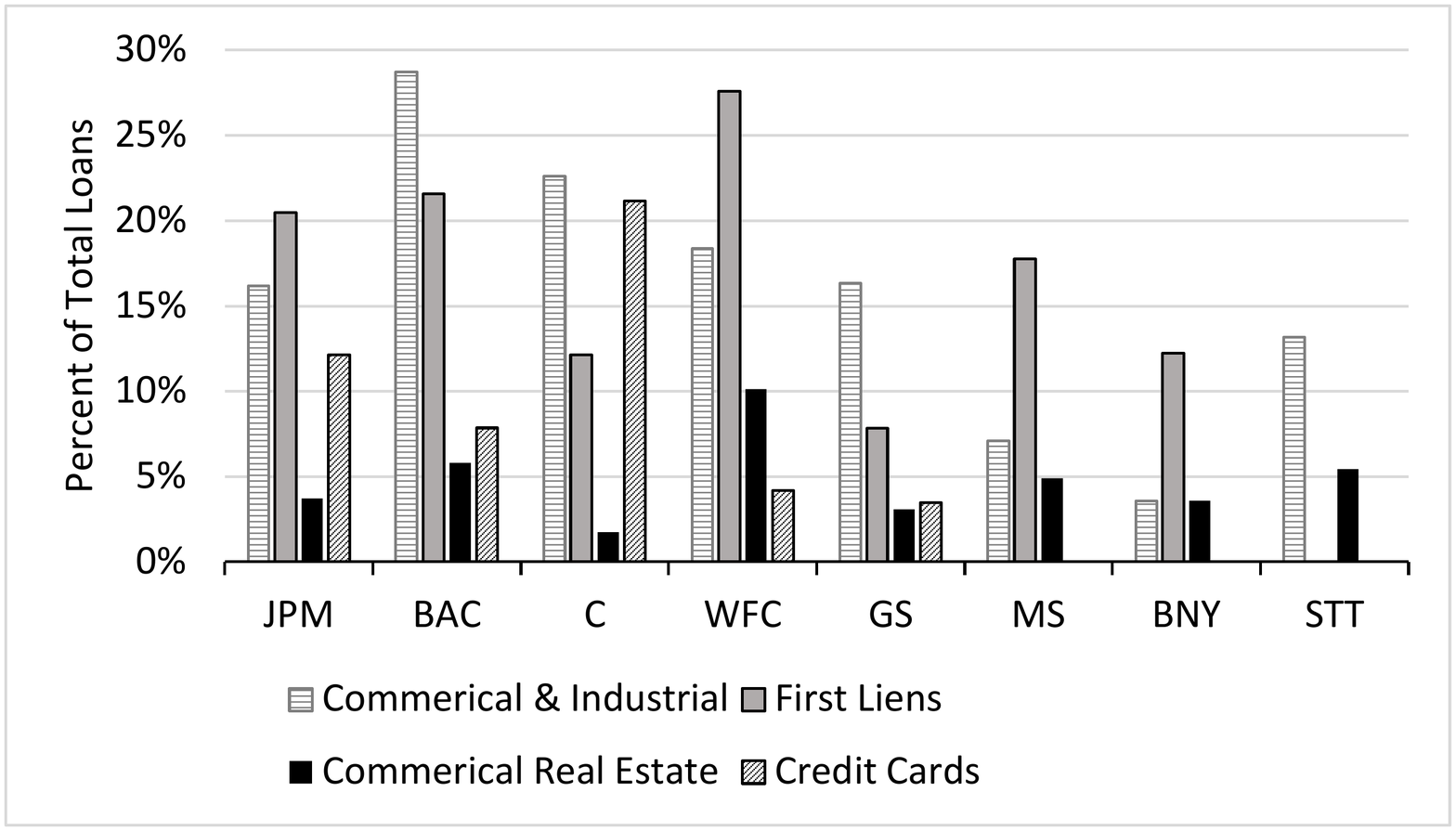}}
\caption{Heterogeneity among large banks. 
Left: Distribution of 2022 stress test banks by GIC sub-industry. 
Right: The percentage of loans in each of four categories for each of the U.S.
G-SIBs, based on Y-9C reports for Q4 2021.}
\label{f:gsib}
\end{figure}

Beginning in 2020, the Federal Reserve allowed banks to submit requests
for reconsideration of the stress capital buffer set by the Fed through
the stress testing process. (The capital buffer is set through the
Comprehensive Capital Analysis and Review, or CCAR, process, which
accompanies the stress test.) The banks' requests are confidential,
but the Fed's responses to these requests are public. The responses show
that the banks were arguing for reconsideration at least in part based on
claims that the Fed's models do not capture distinctive features of the banks'
businesses. For example, Regions Financial claimed that the Fed's models
overlook the bank's hedging of interest rate risk. Goldman Sachs claimed
that the Fed's models omit information relevant to the credit quality of the
bank's mortgage loans. Citizens Financial claimed that the Fed's models 
overlook the bank's loss-sharing agreements in its retail portfolio.

Five firms requested reconsideration in 2020, and all five requests were rejected.
In its response%
\footnote{https://www.federalreserve.gov/supervisionreg/files/goldman-sachs-group-inc-20200904.pdf}
to Goldman Sachs, the Fed wrote,
``the Board has determined that it will follow its published principles for stress testing,
including the principle of creating industry-level models, and not modify the existing
results of these models. In particular, models used in the supervisory stress test are
generally developed according to an industry-level approach, calibrated using data 
from many institutions.'' Similar statements appear in all five rejections. These exchanges
point to a debate in which the banks highlight their heterogeneity and the Fed asserts
the importance of consistency.

\subsection{Heterogeneity and Fairness}
\label{s:hf}

Read narrowly, the principle of equal treatment --- the Fed's statement that 
``two firms with the same portfolio receive the same results for that portfolio''
--- is easy to satisfy. It holds in any model that forecasts losses based only on
portfolio features and the stress scenario, without using any other bank-specific
information. Even this narrow reading has important implications. For example,
the quality of a bank's IT systems or the strength of its ``culture''%
\footnote{For a perspective on the importance of culture, see, for example, ``Enhancing Financial Stability by Improving Culture in the Financial Services Industry,''
a speech given by then president of the Federal Reserve Bank of New York, William C. Dudley,
on October 20, 2014, https://www.newyorkfed.org/newsevents/speeches/2014/dud141020a.html.}
may be important factors in determining a bank's losses under stress, but as
they are not features of a loan portfolio, 
the Fed's modeling principle would preclude incorporating them into the Fed's models.
Matters like the quality of a bank's internal governance and controls must be
addressed in other parts of the overall bank supervision process, outside of
stress testing.%
\footnote{The Fed's stress tests previously included a qualitative component, but this component was dropped in 2019.}
Within the Basel framework, these considerations are part of the Pillar 2
supervisory process, as described in BCBS \cite{bcbs}.

Under a broader interpretation of the principle of equal treatment, a regulatory
model should also exclude indirect proxies for bank identity. Suppose, for
example, that a bank with outdated IT systems had a particularly large number
of loans to the energy sector. Suppose further that because of its weak IT the
bank was a poor monitor of its borrowers and suffered abnormally large losses
in downturns. With information about IT excluded, a predictive model of losses
that uses this history would likely overstate the risk of loans to the energy sector.
This outcome is arguably unfair to all banks making energy loans, in that they
would be indirectly penalized for one bank's weak IT. Addressing these types of
indirect effects drives our investigation. In its narrow sense, equal treatment
requires an indifference to which banks hold which portfolios once a model is
selected; the broader interpretation seeks to remove the influence of bank
identity in the design of the model.

The Fed's stated principle implicitly responds to concerns for \textit{disparate
treatment} of banks. The broader interpretation --- precluding proxies for bank
identity --- aligns with a concern for a particular notion of \textit{disparate impact} 
used in the literature on algorithmic fairness (see, for example, Chapter 6 of
Barocas et al.\ \cite{fairmlbook}, Section 3 of Lipton et al.\ \cite{lipton}, and 
Prince and Schwarcz \cite{prince}), sometimes called ``proxy discrimination.''
The banks' objections, as reflected in their reconsideration requests, can be seen
as concerns for a different type of disparate impact: even if the same model is
applied to all banks, and even if the model is free of bank-identity proxies, some
banks may claim to be more adversely affected than others by the model's
limitations. Most of the objections raised by banks can be understood as 
pointing to omitted variables --- features omitted from the Fed's models that a
bank believes would result in a more favorable outcome if included in the models. 
The Fed's responses suggest a reluctance to incorporate overly narrow features 
into models, particularly features that might affect only a single bank. 
Model limitations, of the type claimed by the banks are likely inevitable, given the
limited data available on bank performance in scenarios of severe stress. The Fed
should strive to continue to improve its models, but our concern is not primarily
for the banks' objections. Our focus is rather on how best to interpret and implement
the Fed's stated principle of equal treatment, particularly under the broader
interpretation that addresses elements of both disparate treatment and disparate
impact, within the overarching goal of accurately forecasting stressed losses for
each bank.

The fairness literature distinguishes notions of individual fairness and group
fairness, where the members of a group often share a sensitive or protected attribute.
More abstractly, an individual is defined by a fixed set of features, and a group is
characterized by a probability distribution over features; see, for example, the
characterizations of individuals and groups in Sections 2 and 3 of Dwork et al.\ 
\cite{dwork}. From this perspective, individual fairness is concerned with fairness
conditional on a set of features, whereas measures of group fairness incorporate
distributions over features. Individual fairness typically requires that individuals with
similar features be treated similarly. For a portfolio loss model, this condition is
satisfied if the predicted loss is a suitably smooth function of the features of
individual portfolios. But group fairness is more relevant to our setting than individual
fairness because we think of each bank, with its particular mix of businesses and areas
of focus, not as one portfolio but as a distribution over portfolios the bank might hold
at different times. We are interested in accuracy and fairness with respect to these
distributions of bank portfolio features. We therefore view each bank as a group of
individual portfolios that share the attribute of bank identity. In contrast, individual
fairness would be relevant to evaluating accuracy and fairness conditional on a specific
portfolio for each bank. We will make this formulation of groups and individuals more
explicit in the next section after introducing our basic model.

\section{Pooling: Fairness Through Unawareness?}
\label{s:pool}

\subsection{Basic Model}
\label{s:model}

To capture bank heterogeneity, we consider a market with multiple banks, indexed by
$s=1,\dots,\bar{S}$. The loss rate (or net charge-off rate) $Y_s$ for bank $s$ is given by
\begin{equation}
Y_s = \alpha_s + \beta_s^{\top}X_s + \epsilon_s,
\label{ys}
\end{equation}
with $\alpha_s\in\R$ and $\beta_s\in\R^d$. Here, $X_s$ is a $d$-dimensional vector
of predictive variables; at this point, we do not distinguish between portfolio
characteristics and macro variables. The portfolio characteristics include information
about a bank's borrowers and loan terms. We use a linear specification in (\ref{ys})
because it offers the simplest setting to explore the interaction of heterogeneity and
fairness; we discuss nonlinear extensions in Section~\ref{s:nonlinear}. We take (\ref{ys})
to be the true relationship between the loss rate $Y_s$ for bank $s$ over the forecast
horizon and characteristics $X_s$ known at the date the forecast is made. Loss rates
are normalized by loan balances to make values of $Y_s$ comparable across banks of
different sizes.

We think of $X_s$ as a draw from some distribution with
\begin{equation}
\mu_s = \E[X_s]\in\R^d, \quad \Sigma_s = \var[X_s]\in \R^{d\times d}.
\label{mudef}
\end{equation}
The randomness in $X_s$ can be interpreted as reflecting the variation in the
characteristics for bank $s$ (and the macro variables) over time --- in particular, times
of stress. We assume throughout that each $\Sigma_s$ is nonsingular. The error
$\epsilon_s$ in (\ref{ys}) is assumed to satisfy, for each $s$,
\begin{equation}
\E[\epsilon_s]=0 \quad \mbox{ and } \quad \cov[X_s,\epsilon_s]=0.
\label{epcon}
\end{equation}

The regulator's problem is to choose a model $g$ that forecasts the loss rate $g(x,s)$
for bank $s$ if the bank's portfolio characteristic vector is $x$. The forecasts should,
at a minimum, satisfy the following narrow property, which prohibits the regulator
from applying different models to different banks:

\begin{definition}[Equal treatment]\label{d:et}
Model $g:\R^d\times\{1,\dots,\bar{S}\}\to \R$ satisfies
\emph{equal treatment} if $g(x,s)=g(x,s')$, for all
$x\in\R^d$, for all $s,s'\in\{1,\dots,\bar{S}\}$.
\end{definition}

As the true relationship for each bank is linear in (\ref{ys}), we mainly focus on the case
of a linear industry-wide model. The regulator's problem is then to choose a single
$\alpha\in\R$ and $\beta\in\R^d$ that it will use to form a forecast
\begin{equation}
\hat{Y}(x) = \alpha + \beta^{\top}x,
\label{yhat}
\end{equation}
given portfolio characteristics $x$. The forecast (\ref{yhat}) satisfies equal treatment
because it has no functional dependence on bank identity $s$. The parameters of the
industry model (\ref{yhat}) may depend on the bank-specific parameters
$(\alpha_s,\beta_s)$ and on the mean and variance in (\ref{mudef}), but they should
not depend on the realized features $X_s$.

The regulator would like the forecast loss $\hat{Y}(X_s)$ to be close to the actual loss
$Y_s$ in (\ref{ys}) for every bank $s$. To aggregate errors across banks, we introduce
a random variable $S$ that picks a bank according to a distribution
\begin{equation}
\PR(S=s) = p_s, \quad s=1,\dots,\bar{S},
\label{sdef}
\end{equation}
with the probabilities $p_s$ summing to 1. In the simplest case, all banks get equal
weight, and the $p_s$ are all equal; but the $p_s$ could also reflect relative asset sizes
or other weighting schemes. When we replace a bank label $s$ with the random
variable $S$, we get a mixture over banks. In particular, we can combine the
bank-specific models (\ref{ys}) into a mixture or hierarchical model by writing
\begin{equation}
Y_S = \alpha_S + \beta_S^{\top}X_S + \epsilon_S.
\label{ymix}
\end{equation}

In choosing parameters $\alpha$ and $\beta$ in (\ref{yhat}), the regulator would like
to make the forecast errors small for all banks. A natural way to aggregate forecast
errors across banks is to consider the average squared error, in which case the
regulator's problem becomes choosing $\alpha$ and $\beta$ in (\ref{yhat}) to solve
\begin{equation}
\min_{\alpha,\beta} \E[(\hat{Y}(X_S)-Y_S)^2].
\label{sqloss}
\end{equation}
The objective in (\ref{sqloss}) averages squared 
forecast errors over banks.
It can also be written as $\sum_sp_s\E[(\hat{Y}(X_s)-Y_s)^2]$.

\begin{remark}\label{r:formulation}
{\rm
Before solving (\ref{sqloss}), we make several comments on our problem formulation.

\noindent \textit{(i) Targets versus estimators.} The problem posed by (\ref{sqloss}), 
like the more general problem of choosing industry parameters in (\ref{yhat}), is one
of characterizing ideal coefficients $\alpha$ and $\beta$. This is a question of
choosing the correct {\it targets\/} of estimation, rather than a question of choosing
estimators. In particular, $\alpha$ and $\beta$ are population quantities rather than
sample quantities. In practice, the regulator may have a panel of time series of
observations across banks. Estimation methods for panel data ordinarily focus on
coefficients that are common to all units and exploit the panel structure to estimate
these shared values. Our concern is precisely with the case of heterogeneous
coefficients, where we need to identify suitable targets before we can consider their
estimation.

\noindent\textit{(ii) Groups versus individuals.} As discussed in Section~\ref{s:hf}, 
individual fairness is concerned with ensuring that if two feature vectors $x$ and $x'$
are close, then the predicted losses $\hat{Y}(x)$ and $\hat{Y}(x')$ are also close. Our
concern is for accuracy and fairness with respect to the distributions of $(X_s,Y_s)$,
$s=1,\dots,\bar{S}$, and not just for individual outcomes; we do not want the choice
of industry model to depend on the realization of $X_s$, $s=1,\dots,\bar{S}$. Each
$(X_s,Y_s)$ reflects a distribution over individual portfolio features and losses 
--- individuals that share the bank identity attribute $s$. Each bank thus represents
a group of potential individual portfolios, and we are interested in accuracy and
fairness with respect to the distributions that define these groups.

\noindent\textit{(iii) Stressed versus unstressed.} For the application to stress testing,
it is helpful to think of the portfolio features and scenario variables in $X_s$ and the
losses $Y_s$ in (\ref{ys}) as having their conditional distributions given stress
conditions. The regulator is then interested in forecasting the conditional mean loss
for each bank, given stress conditions. By focusing on the conditional mean, this
formulation makes the squared error (\ref{sqloss}) a reasonable benchmark for
studying accuracy and fairness. In our empirical work in Section~\ref{sec:experiment},
we approximate the restriction to stress conditions by giving more weight to data from
periods of stress in the squared error objective.

\noindent\textit{(iv) Regulator versus banks.} As discussed in Section~\ref{s:hf}, 
some of the objections raised by banks can be understood as pointing to features
missing from (\ref{ys}) and (\ref{yhat}), features that are rejected by the Fed as overly
narrow. Our investigation assumes the bank-specific models (\ref{ys})--(\ref{epcon})
are correct; in particular, under (\ref{epcon}) any relevant omitted features are
uncorrelated with included features. We focus on the regulator's problem of how best
to aggregate the bank-specific models (assuming their correctness) into an industry
model, considering both accuracy and fairness; we do not address the banks' claims
regarding which features should be included in the models.
} 
\end{remark}

For the solution to (\ref{sqloss}), write
\begin{equation}
\bar{\mu} = \E[X_S] = \E[\mu_S] = \sum_sp_s\mu_s \in \R^d,
\label{mubar}
\end{equation}
and
\begin{equation}
\var[X_S] = \E[(X_S-\bar{\mu})(X_S-\bar{\mu})^{\top}]
= \E[W_S] = \sum_s p_sW_s,
\label{vars}
\end{equation}
with
\begin{equation}
W_s = \Sigma_s + \mu_s\mu_s^{\top}-\bar{\mu}\mu_s^{\top}
\in\R^{d\times d}.
\label{wdef}
\end{equation}
Similarly,
$$
\cov[\alpha_S,\mu_S] = \sum_sp_s \alpha_s(\mu_s-\bar{\mu})\in\R^d.
$$

\begin{proposition}
Problem (\ref{sqloss}) is solved by
\begin{equation}
\beta_{Pool} = \E[W_S]^{-1}\left(\cov[\alpha_S,\mu_S] + \E[W_S\beta_S]\right)
\label{bpool}
\end{equation}
and
\begin{equation}
\alpha_{Pool} = \E[Y_S] - \beta_{Pool}^{\top}\bar{\mu}.
\label{apool}
\end{equation}
\label{p:pool}
\end{proposition}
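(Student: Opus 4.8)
The plan is to treat (\ref{sqloss}) as an unconstrained least-squares problem in the finite-dimensional parameter $(\alpha,\beta)\in\R\times\R^d$ and solve it by first-order conditions. The objective $\E[(\alpha+\beta^\top X_S-Y_S)^2]$ is a convex quadratic in $(\alpha,\beta)$ whose Hessian is proportional to the second-moment matrix of $(1,X_S^\top)^\top$; this matrix is positive definite precisely because $\var[X_S]=\E[W_S]$ is, and by the law of total variance $\var[X_S]=\sum_s p_s\Sigma_s+\var[\mu_S]$, which is positive definite since each $\Sigma_s$ is nonsingular by assumption. Hence the stationary point is the unique global minimizer, and it suffices to set the two partial derivatives to zero.

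Differentiating in $\alpha$ gives $\E[\alpha+\beta^\top X_S-Y_S]=0$, i.e.\ $\alpha=\E[Y_S]-\beta^\top\bar\mu$, which is (\ref{apool}) once the optimal $\beta$ is inserted. Differentiating in $\beta$ gives the normal equation $\E[X_S(\alpha+\beta^\top X_S-Y_S)]=0$; substituting the expression for $\alpha$ and rearranging converts this to $\var[X_S]\,\beta=\cov[X_S,Y_S]$, whose left-hand side is $\E[W_S]\,\beta$ by (\ref{vars}). So the entire calculation reduces to evaluating the cross-covariance $\cov[X_S,Y_S]$.

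That cross-covariance is the one step that needs care, since $Y_S=\alpha_S+\beta_S^\top X_S+\epsilon_S$ mixes over $S$ and thus $X_S$ is correlated with $\alpha_S$ and $\beta_S$ through the bank index. I would split it by linearity into three terms handled by iterated expectations. Conditioning on $S$ with $\E[X_S\mid S]=\mu_S$ gives $\cov[X_S,\alpha_S]=\E[\alpha_S\mu_S]-\E[\alpha_S]\bar\mu$, which is exactly $\cov[\alpha_S,\mu_S]$ as defined just before the proposition. The law of total covariance gives $\cov[X_S,\epsilon_S]=\E[\cov[X_s,\epsilon_s]]+\cov[\mu_S,0]=0$ by (\ref{epcon}). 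For the bilinear term, $\E[X_SX_S^\top\mid S]=\Sigma_S+\mu_S\mu_S^\top$ yields $\cov[X_S,\beta_S^\top X_S]=\E[(\Sigma_S+\mu_S\mu_S^\top)\beta_S]-\bar\mu\,\E[\mu_S^\top\beta_S]=\E[W_S\beta_S]$ by the definition (\ref{wdef}) of $W_s$. Summing, $\cov[X_S,Y_S]=\cov[\alpha_S,\mu_S]+\E[W_S\beta_S]$; left-multiplying the normal equation by $\E[W_S]^{-1}$ gives (\ref{bpool}), and substituting back into $\alpha=\E[Y_S]-\beta^\top\bar\mu$ gives (\ref{apool}). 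The only real obstacle is bookkeeping in the bilinear term: one must keep straight that $\E[X_S]=\bar\mu$ while $\E[X_S\mid S]=\mu_S$, so that the subtracted cross-term is $\bar\mu\mu_S^\top$ and assembles into $W_S$ rather than into the symmetric $\Sigma_S+\mu_S\mu_S^\top-\bar\mu\bar\mu^\top$.
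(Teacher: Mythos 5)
Your proposal is correct and follows essentially the same route as the paper: both reduce the problem to the linear-projection formula $\beta_{Pool}=\var[X_S]^{-1}\cov[X_S,Y_S]$ (you via explicit first-order conditions, the paper by citing the projection characterization), verify invertibility of $\E[W_S]=\sum_sp_s\Sigma_s+\var[\mu_S]$, and then compute $\cov[X_S,Y_S]=\cov[\alpha_S,\mu_S]+\E[W_S\beta_S]$ term by term using the law of total covariance. No gaps.
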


Loss forecasts using $\alpha_{Pool}$ and $\beta_{Pool}$ in (\ref{yhat})
provide {\it fairness through unawareness}, in that they ignore bank identity.
They satisfy equal treatment in the narrow sense of Definition~\ref{d:et}.
Given our starting point (\ref{ys}), problem (\ref{sqloss}) would seem to be the
most direct interpretation of the Fed's policy of developing an ``industry-level
approach calibrated using data from many financial institutions.''

However, the solution in (\ref{bpool}) is not a satisfactory target. Indeed, (\ref{bpool})
shows where heterogeneity is most problematic. If the intercepts $\alpha_s$ covary
with the means $\mu_s$, this effect can distort $\beta_{Pool}$ through what
is commonly known as Simpson's paradox. As an extreme example, consider the
case that $\beta_s = 0$ for all $s$; in other words, none of the features in $X_s$
is predictive of losses for any of the banks. The regulator's model (\ref{yhat}) using
$\beta_{Pool}$ would nevertheless forecast losses based on these features if
$\cov[\alpha_S,\mu_S]$ is nonzero. This covariation would create the illusion of
predictability. In applying (\ref{bpool}), we would be forecasting losses based on
irrelevant features, purely as a consequence of the way we aggregated the
bank-specific models. 

Even in a less extreme setting in which the $\beta_s$ are nonzero, the presence of
the $\cov[\alpha_S,\mu_S]$ term in (\ref{bpool}) reflects an indirect influence of bank
identity on loss forecasts. If the bank-level mean characteristics $\mu_s$ positively
covary with the bank-level intercepts $\alpha_s$, then in the pooled model this
covariance will lead to a higher loss forecast for a bank with a higher value of $X_s$.
This is arguably unfair, in the sense that the loss forecast is not based on the
legitimate influence of the feature $X_s$. We will formalize the idea that the pooled
method misdirects legitimate information in Sections~\ref{s:feo} and~\ref{s:unified}.

This effect is reminiscent of the bias incurred in panel regressions when fixed effects
are present in the data but omitted from a model. As we emphasized in
Remark~\ref{r:formulation}(i), in our setting the primary objective is to define the
appropriate target of estimation, given the heterogeneity in the coefficients.
We cannot say the term $\cov[\alpha_S,\mu_S]$
introduces bias until we have decided what we are trying to estimate.

\subsection{Average Treatment Effects}
\label{s:scalar}

We can gain additional insight by considering the case of scalar $X_s$. In this case,
the pooled coefficient $\beta_{Pool}$ in (\ref{bpool}) becomes
\begin{equation}
\beta_{Pool} = 
\frac{\cov[\alpha_S,\mu_S] + \sum_sp_s(\sigma^2_s + \mu_s^2-\bar{\mu}\mu_s)\beta_s}{\sum_sp_s (\sigma^2_s + \mu_s^2-\bar{\mu}\mu_s)}.
\label{bpool1}
\end{equation}
In the special case that $\cov[\alpha_S,\mu_S]=0$ and 
$\sigma^2_s+\mu^2_s-\bar{\mu}\mu_s\ge 0$, for all $s$,
(\ref{bpool1}) becomes a convex combination of the individual $\beta_s$.
In Section~\ref{a:cxcomb}, we state some simple properties that 
an aggregation of the individual $\beta_s$ into a single industry value
should satisfy, and we show that only a convex combination satisfies
these properties. Equation (\ref{bpool1}) thus shows a further potential
problem with the pooled method. Even if $\cov[\alpha_S,\mu_S]=0$,
the coefficient on some $\beta_s$ could be negative, which would mean
that a reduction in $\beta_s$ would increase $\beta_{Pool}$. This
could mean that an improvement in risk management by one bank
\textit{increases} predicted losses at all banks. We investigate these
types of cross-bank effects further in Section~\ref{s:cross}.

We will refer to any convex combination of the $\beta_s$ as a  {\it weighted
average treatment effect} or WATE parameter. This terminology is suggested
by thinking of a unit increase in a portfolio characteristic $X_s$ as a
treatment, and $\beta_s$ as the response to that treatment. The (ordinary)
average treatment effect is the expected coefficient,
\begin{equation}
\beta_\textit{ATE}=\E[\beta_S] = \sum_sp_s\beta_s,
\label{ate}
\end{equation}
but weighting the individual coefficients allows other combinations. In
particular, if the $\mu_s$ are all equal, the pooled coefficient (\ref{bpool1})
becomes
\begin{equation}
\beta_{Pool} = 
\frac{\sum_sp_s\sigma^2_s\beta_s}{\sum_sp_s \sigma^2_s}.
\label{bpf}
\end{equation}
We will say more about these cases in subsequent sections.

To translate a WATE coefficient into a loss projection $\hat{Y}$, as in
(\ref{yhat}), we also need to specify an intercept. Setting
$$
\alpha_\textit{WATE} = \E[Y_S] - \beta^{\top}_\textit{WATE}\bar{\mu},
$$
ensures that the forecasts
$$
\hat{Y}_\textit{WATE}(X_s) = \alpha_\textit{WATE} + \beta^{\top}_\textit{WATE}X_s,
\quad s=1,\dots,\bar{S},
$$
have zero expected error, in the sense that
$$
\E[\hat{Y}_\textit{WATE}(X_S) - Y_S]
= \sum_sp_s(\alpha_\textit{WATE} +\beta^{\top}_\textit{WATE}\mu_s) - \E[Y_S]=0.
$$

\section{Fair Regressions}
\label{s:freg}

We have seen that if the regulator's sole objective is to minimize average
squared forecast errors subject to equal treatment, then the solution is
given by the pooled coefficients in (\ref{bpool}) and (\ref{apool}).
However, we have also seen that (\ref{bpool}) has consequences that are
undesirable and even unfair, in the sense that it is indirectly influenced
by bank identity. In this section, we turn to methods that expand the
squared loss minimization objective (\ref{sqloss}) to include fairness
considerations. Because the pooled method minimizes (\ref{sqloss}), any
method that addresses fairness will entail a loss of accuracy as measured
by (\ref{sqloss}).

\subsection{Projection to Fairness}
\label{s:ptf}

In the literature on fairness in classification methods, {\it demographic
parity} is among the most widely discussed fairness principles; see, for
example, Chapter 3 of Barocas et al.\ \cite{fairmlbook}. In the simplest
classification setting, the counterpart of our forecast is a binary outcome
$\hat{Y}\in \{0,1\}$. For example, $\hat{Y}=1$ may indicate a hiring
decision, a loan approval, or a school admission decision. The decision
is to be based on certain features of a candidate that are deemed
legitimate. Demographic parity requires that the event $\{\hat{Y}=1\}$
be statistically independent of a protected attribute, such as race or
gender. This objective is difficult to achieve when legitimate features
covary with the protected attribute.

Chzhen et al.\ \cite{chzhen} and Le Gouic et al.\ \cite{legouic} extend the
notion of demographic parity to the regression setting by requiring that
model predictions be independent of a protected attribute. These two
articles solve the problem of finding the model that minimizes mean
squared prediction errors while achieving demographic parity. We will use
the term {\it projection to fairness} (PTF), coined in Le Gouic et al.\ 
\cite{legouic}, for the method in these papers.

Both papers reduce the problem of regression fairness to one of finding
the Wasserstein barycenter of a set of distributions, in the sense of Agueh
and Carlier \cite{agueh}. The barycenter is the distribution closest to the
set of distributions in an average sense. For a squared error and
one-dimensional distributions, the barycenter can be described as the
distribution whose quantile function is a weighted average of the individual
quantile functions. (The quantile function is the inverse of the cumulative
distribution function.)

In the setting of Section~\ref{s:model}, the resulting solution can be
interpreted as follows. For bank $s$, the regulator first forms the forecast
$\hat{Y}_s(x) = \alpha_s+\beta^{\top}_sx$, using the bank-specific
coefficients and the realized features $X_s=x$. Suppose $\hat{Y}_s$ falls
at the 80th percentile of the forecast distribution for bank $s$. The
regulator then takes a weighted average of the 80th percentile forecast for
all of the bank-specific models. That weighted average becomes the
forecast for bank $s$.

To make this procedure more explicit and to specialize the general
framework of Chzhen et al.\ \cite{chzhen} and Le Gouic et al.\ \cite{legouic}
to our setting, we consider the case (for this section only) that each feature
vector $X_s$ has a multivariate normal distribution $N(\mu_s,\Sigma_s)$.
Write $\Sigma_s^{1/2}$ for the symmetric square root of $\Sigma_s$,
and define the standardized feature vectors
\begin{equation}
Z_s = \Sigma_s^{-1/2}(X_s-\mu_s);
\label{zdef}
\end{equation}
each $Z_s$ has a multivariate standard normal distribution. Write the
basic identity (\ref{ys}) using standardized variables as
$$
Y_s = \alpha^o_s + \beta_s^{o\top}Z_s + \epsilon_s,
$$
with standardized coefficients
\begin{equation}
\beta^o_s = \Sigma^{1/2}_s\beta_s, \quad 
\alpha_s^o = \alpha_s + \beta_s^{\top}\mu_s.
\label{stab}
\end{equation}
Suppose $\|\beta^o_s\|\not= 0$, for all $s$, 
with $\|\cdot\|$ denoting the usual Euclidean norm.
Consider the model that assigns, to each bank $s=1,\dots,\bar{S}$, 
with features $X_s=x$ the forecast
\begin{equation}
\hat{Y}^o(x,s) = \sum_ip_i\alpha^o_i + 
\sum_i p_i\|\beta_i^o\|\frac{\beta_s^{o\top}z_s}{\|\beta_s^o\|}, \quad z_s = \Sigma_s^{-1/2}(x-\mu_s).
\label{ptfn}
\end{equation}
If there exists a $\beta\in\mathbb{R}^d$ and scalars $a_s>0$ for which
\begin{equation}
\beta_s^o = a_s\beta, \quad s=1,\dots,\bar{S},
\label{bprop}
\end{equation}
then we will see that (\ref{ptfn}) simplifies to the weighted average
\begin{equation}
\hat{Y}^o(x,s) = \bar{\alpha}^o + \bar{\beta}^{o\top}z_s, \quad
\bar{\alpha}^o= \sum_ip_i\alpha^o_i, \; \bar{\beta}^o= \sum_i p_i\beta_i^o.
\label{ptfs}
\end{equation}
In the case of scalar $X_s$, 
(\ref{bprop}) holds whenever all $\beta_s$ have the same sign.

\begin{proposition}
Suppose that the $X_s$ are multivariate normal and $\|\beta_s\|\not=0$,
for all $s=1,\dots,\bar{S}$. Then (\ref{ptfn}) is the projection-to-fairness
of the bank-specific models (\ref{ys}), meaning that (\ref{ptfn}) minimizes
$\E[(\hat{Y}^o(X_S,S)-Y_S)^2]$ among all models (whether linear or not)
that satisfy demographic parity. If (\ref{bprop}) holds, the
projection-to-fairness is given by (\ref{ptfs}).
\label{p:ptf}
\end{proposition}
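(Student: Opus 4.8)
The plan is to deduce Proposition~\ref{p:ptf} from the Wasserstein-barycenter characterization of demographic-parity-optimal regression in Chzhen et al.\ \cite{chzhen} and Le Gouic et al.\ \cite{legouic}, and then carry out the Gaussian specialization. Write $\eta_s(x) = \E[Y_s\mid X_s=x]$ for the group-wise regression function, which in the linear model (\ref{ys}) under (\ref{epcon}) equals $\alpha_s + \beta_s^{\top}x = \alpha_s^o + \beta_s^{o\top}z_s$ with $z_s = \Sigma_s^{-1/2}(x-\mu_s)$; let $\nu_s$ be the law of $\eta_s(X_s)$, with cumulative distribution function $F_s$ and quantile function $F_s^{-1}$. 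First I would record the standard decomposition: for any $\sigma(X_S,S)$-measurable predictor $\hat Y$,
\[
\E\!\left[(\hat Y - Y_S)^2\right] = \E\!\left[(\hat Y - \eta_S(X_S))^2\right] + \E\!\left[(\eta_S(X_S) - Y_S)^2\right],
\]
since the cross term vanishes upon conditioning on $(X_S,S)$, as $\eta_S(X_S)=\E[Y_S\mid X_S,S]$; hence minimizing the forecast error over \emph{all measurable} predictors obeying demographic parity (not just linear ones) is the same as minimizing $\E[(\hat Y - \eta_S(X_S))^2]$ over that class. The cited results identify the minimizer as $f^*(x,s) = \bar F^{-1}(F_s(\eta_s(x)))$, where $\bar F^{-1} = \sum_i p_i F_i^{-1}$ is the quantile function of the $W_2$-barycenter of $\{\nu_i\}$ with weights $\{p_i\}$; that is, $f^*(x,s)$ applies to $\eta_s(x)$ the monotone transport map $\bar F^{-1}\circ F_s$ that pushes $\nu_s$ onto the barycenter. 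It remains to evaluate $f^*$ explicitly in the Gaussian case.

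Second, I would compute the one-dimensional laws $\nu_s$. Since $X_s\sim N(\mu_s,\Sigma_s)$, the standardized vector $Z_s = \Sigma_s^{-1/2}(X_s-\mu_s)$ is $N(0,I_d)$, so $\eta_s(X_s) = \alpha_s^o + \beta_s^{o\top}Z_s \sim N(\alpha_s^o,\norm{\beta_s^o}^2)$; the hypothesis $\norm{\beta_s}\neq 0$ together with nonsingularity of $\Sigma_s$ makes $\norm{\beta_s^o}=\norm{\Sigma_s^{1/2}\beta_s}\neq 0$, so this is a non-degenerate Gaussian with $F_s(t)=\Phi((t-\alpha_s^o)/\norm{\beta_s^o})$, $\Phi$ the standard normal CDF. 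Next I would invoke the one-dimensional Gaussian-barycenter fact: because the quantile function of $N(m,\sigma^2)$ is $u\mapsto m + \sigma\,\Phi^{-1}(u)$ and $W_2$-barycenters in one dimension are obtained by averaging quantile functions, the barycenter of $\{N(\alpha_i^o,\norm{\beta_i^o}^2)\}$ is $N\big(\bar\alpha^o,\bar\sigma^2\big)$ with $\bar\alpha^o=\sum_i p_i\alpha_i^o$ and $\bar\sigma=\sum_i p_i\norm{\beta_i^o}$ --- note it is the standard deviations, not the variances, that are averaged --- so $\bar F^{-1}(u)=\bar\alpha^o + \bar\sigma\,\Phi^{-1}(u)$. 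Composing and using $F_s(\eta_s(x))=\Phi\big(\beta_s^{o\top}z_s/\norm{\beta_s^o}\big)$,
\[
f^*(x,s) = \bar\alpha^o + \bar\sigma\,\Phi^{-1}\!\left(\Phi\!\left(\frac{\beta_s^{o\top}z_s}{\norm{\beta_s^o}}\right)\right) = \sum_i p_i\alpha_i^o + \Big(\sum_i p_i\norm{\beta_i^o}\Big)\frac{\beta_s^{o\top}z_s}{\norm{\beta_s^o}},
\]
which is precisely (\ref{ptfn}); this proves the first assertion.

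For the second assertion I would substitute condition (\ref{bprop}) into (\ref{ptfn}). If $\beta_s^o=a_s\beta$ with $a_s>0$, then $\norm{\beta_s^o}=a_s\norm{\beta}$ and $\beta_s^{o\top}z_s/\norm{\beta_s^o}=\beta^{\top}z_s/\norm{\beta}$, which no longer depends on $s$; feeding this back into the displayed formula and using $\bar\beta^o=\sum_i p_i\beta_i^o=\big(\sum_i p_i a_i\big)\beta$, the second term collapses to $\bar\beta^{o\top}z_s$, giving (\ref{ptfs}). In the scalar case $\beta_s^o=\sigma_s\beta_s$ is a scalar, and since all $\beta_s$ are nonzero by hypothesis, taking $\beta=1,\ a_s=\sigma_s\beta_s$ when the $\beta_s$ are all positive, or $\beta=-1,\ a_s=-\sigma_s\beta_s$ when they are all negative, shows (\ref{bprop}) holds whenever the $\beta_s$ share a sign.

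The computations above are routine; I expect the real care to go into the invocation of the barycenter characterization --- checking that its hypotheses hold here (in particular that the $\nu_s$ are non-atomic, which is exactly $\norm{\beta_s^o}\neq 0$, so that the transport map $\bar F^{-1}\circ F_s$ is well defined, and that the relevant target is $\eta_s$, which relies on $\alpha_s+\beta_s^{\top}x$ being the conditional mean in (\ref{ys})) --- and into the Gaussian specialization, where the point most easily gotten wrong is that $\bar F^{-1}$ depends linearly on the standard deviations $\norm{\beta_i^o}$ rather than on the variances. One could alternatively verify the optimality of (\ref{ptfn}) from scratch: it plainly satisfies demographic parity, since $\beta_s^{o\top}Z_s/\norm{\beta_s^o}\sim N(0,1)$ for every $s$, making $\hat Y^o(X_S,S)\sim N(\bar\alpha^o,\bar\sigma^2)$ independent of $S$, and one would then confirm that it meets the first-order conditions for the constrained least-squares problem; but quoting \cite{chzhen,legouic} is cleaner.
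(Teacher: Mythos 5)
Your proposal is correct and follows essentially the same route as the paper: invoke the Chzhen et al./Le Gouic et al. characterization of the demographic-parity-optimal predictor as $\sum_i p_i F_i^{-1}(F_s(\alpha_s+\beta_s^{\top}x))$, then compute the Gaussian CDFs and quantile functions to obtain (\ref{ptfn}), and substitute (\ref{bprop}) to get (\ref{ptfs}). The extra touches you add (the bias--variance reduction to the regression function, the explicit identification of the barycenter as $N(\bar\alpha^o,(\sum_i p_i\|\beta_i^o\|)^2)$, and the scalar-case sign remark) are correct but do not change the substance of the argument.
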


We can see from (\ref{ptfn}) that the PTF model does not satisfy equal
treatment: to calculate the loss forecast for a bank, we need to know its
identity $s$. We have included the special case of (\ref{ptfs}) because it
more nearly parallels the type of model we seek in (\ref{yhat}). The
coefficients in (\ref{ptfs}) are weighted averages of bank-specific
coefficients. The model in (\ref{ptfs}) satisfies equal treatment with
respect to the standardized features $Z_s$, rather than the raw features
$X_s$: two banks with the same standardized features will receive the
same forecasts. But the means for the two banks could be very different
 --- the standardization is done separately for each bank --- indicating
that one bank's portfolio may be much riskier than the other bank's. In
treating standardized characteristics for different banks as comparable,
the PTF model implicitly evaluates the riskiness of each bank relative to
the distribution for that bank. The suitability of PTF in our setting is
therefore questionable.

The root of the problem is that demographic parity is too strong a property
for our setting. Ensuring that a hiring decision is independent of race or
gender is important; but forcing the distribution of loss projections to be
independent of bank identity ignores relevant differences in banks' portfolios.
Whereas the pooled model (\ref{bpool})--(\ref{apool}) does too little to
address heterogeneity across banks, the PTF model goes too far in leveling
differences. The next section provides a better balance.

\subsection{Formal Equality of Opportunity}
\label{s:feo}

Johnson, Foster, and Stine \cite{jfs} introduce the concept of formal equality
of opportunity (FEO) in regression, based on the use of the term in political
philosophy, for which they cite the review in Arneson \cite{arneson}.
According to Arneson \cite{arneson}, FEO means that ``positions and posts
that confer superior advantages should be open to all applicants. Applications
are assessed on their merits.''

In adapting this idea to our setting, it is helpful to make a contrast with the
previous section: whereas demographic parity requires that loss forecasts be
independent of bank identity, FEO allows bank-dependence, but only through
legitimate portfolio characteristics --- through the bank's ``merits.'' This
notion aligns well with the Fed policy, quoted earlier, that ``two firms with
the same portfolio receive the same results.'' The objective of FEO in regression,
as developed by Johnson et al.\ \cite{jfs}, is to ensure that a protected attribute
(for us, bank identity) has no direct or ``causal'' impact on a model's predictions. 
The predictions may be correlated with bank identity if different banks tend to
have different levels of exposure to legitimate portfolio features.

To develop this idea in our setting, we introduce the centered dummy variables
\begin{equation}
U_i(s) = \mathbf{1}\{s=i\} - p_i, \quad i=1,\dots,\bar{S}-1,\;
s=1,\dots,\bar{S}.
\label{usdef}
\end{equation}
We discuss the implications of centering below.
For any coefficients $\alpha,\delta_1,\dots\delta_{\bar{S}-1}\in\R$
and $\beta\in\R^d$, and any $x\in\R^d$, let
\begin{equation}
\hat{Y}(x,s) = \alpha + \sum_i \delta_i U_i(s) + \beta^{\top}x.
\label{ytilde}
\end{equation}
We have included the bank label $s$ as an argument of $\hat{Y}$ 
because $U_i$ depends on $s$.
Let $\alpha_F$, $\{\delta_i$, $i=1,\dots,\bar{S}-1\}$,
and $\beta_F$ solve the error minimization problem
\begin{equation}
\min_{\alpha,\{\delta_i\},\beta}\E[(\hat{Y}(X_S,S)-Y_S)^2].
\label{sqlossf}
\end{equation}
With the coefficients that minimize (\ref{sqloss}), (\ref{ytilde}) becomes
the linear projection of $Y_S$ onto the span of 
$\{1, U_1(S),\dots,U_{\bar{S}-1}(S),X_S\}$,
evaluated at $S=s$ and $X_S=x$.
Now drop the centered dummy variables $U_i$ and define
\begin{equation}
\hat{Y}_F(x) = \alpha_F +  \beta_F^{\top}x.
\label{yfeo}
\end{equation}
The FEO loss forecast for bank $s$ is $\hat{Y}_F(X_s)$.

Steps (\ref{ytilde})--(\ref{yfeo}) result from applying the definition of an
impartial estimate (their Definition 2) in Johnson et al.\ \cite{jfs}. (More
precisely, steps (\ref{ytilde})--(\ref{yfeo}) define a population counterpart
of the sample formulation in \cite{jfs}.) The procedure in
(\ref{ytilde})--(\ref{yfeo}) can be interpreted as follows: pool losses and
portfolio features across banks; regress losses on portfolio features with
bank fixed-effects included; throw away the fixed effects in forecasting
future losses. The resulting model (\ref{yfeo}) is an equal-treatment model,
with no explicit dependence on bank identity. Centering the discarded
variables $U_i$ ensures that $\E[\hat{Y}_F(X_S)]=\E[Y_S]$, so dropping the
fixed effects does not introduce an overall bias.

We will say more about the implications of this approach, but we first show
that our setting allows an explicit expression for the FEO coefficients:

\begin{proposition}\label{p:feo}
(i) The FEO coefficients are given by
\begin{equation}
\beta_F = \E[\Sigma_S]^{-1} \E[\Sigma_S\beta_S],
\label{bfeo}
\end{equation}
and
\begin{equation}
\alpha_F = \E[Y_S] - \beta_F^{\top}\bar{\mu}.
\label{afeo}
\end{equation}
In particular, in the scalar case,
\begin{equation}
\beta_F = \frac{\sum_sp_s\sigma^2_s\beta_s}{\sum_sp_s\sigma^2_s}.
\label{bfeo1}
\end{equation}
(ii) We also have
\begin{equation}
\beta_F = \var[X_S-\mu_S]^{-1}\cov[X_S-\mu_S,Y_S],
\label{bfw}
\end{equation}
so $\beta_F^{\top}(X_S-\mu_S)$ is the linear projection of $Y_S-\E[Y_S]$
onto $X_S-\mu_S$.
\end{proposition}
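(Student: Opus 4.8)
The plan is to identify $\beta_F$ as the coefficient on $X_S$ in the linear projection of $Y_S$ onto $\mathrm{span}\{1,U_1(S),\dots,U_{\bar S-1}(S),X_S\}$ and to compute it by partialling out the bank-identity variables, i.e.\ by the Frisch--Waugh--Lovell (FWL) theorem in its population ($L^2$) form. The first step is the observation that
$$
\mathrm{span}\{1,U_1(S),\dots,U_{\bar S-1}(S)\}
=\mathrm{span}\{1,\mathbf 1\{S=1\},\dots,\mathbf 1\{S=\bar S-1\}\},
$$
which is exactly the space of all square-integrable functions of $S$ (since $\mathbf 1\{S=\bar S\}=1-\sum_{i<\bar S}\mathbf 1\{S=i\}$). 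So centering the dummies does not change the subspace they generate together with the constant; centering only matters for how the intercept is reported.

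Second, I would apply FWL: $\beta_F$ equals the coefficient in the projection of the residual of $Y_S$ on the residual of $X_S$, where residuals are taken after projecting onto the subspace above. Projecting onto the functions of $S$ is conditional expectation given $S$, so the residuals are $X_S-\E[X_S\mid S]=X_S-\mu_S$ and $Y_S-\E[Y_S\mid S]$. Using $\E[\epsilon_s]=0$, we have $\E[Y_S\mid S]=\alpha_S+\beta_S^\top\mu_S$, hence $Y_S-\E[Y_S\mid S]=\beta_S^\top(X_S-\mu_S)+\epsilon_S$. Since $X_S-\mu_S$ is orthogonal to the constants, the projection requires no intercept and
$$
\beta_F=\var[X_S-\mu_S]^{-1}\,\cov[X_S-\mu_S,\,Y_S-\E[Y_S\mid S]].
$$
Because $\cov[X_S-\mu_S,f(S)]=\E[f(S)\E[X_S-\mu_S\mid S]]=0$ for any function $f$, the covariance is unchanged if $Y_S-\E[Y_S\mid S]$ is replaced by $Y_S$ (or by $Y_S-\E[Y_S]$), which gives (\ref{bfw}) and the projection interpretation in part (ii).

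Third, I would evaluate the two moments by iterated expectations together with (\ref{epcon}). Conditioning on $S$, $\E[(X_S-\mu_S)(X_S-\mu_S)^\top\mid S]=\Sigma_S$, so $\var[X_S-\mu_S]=\E[\Sigma_S]$, which is nonsingular as a convex combination of the nonsingular $\Sigma_s$. For the numerator, $X_S-\mu_S$ is conditionally orthogonal to $\alpha_S$ and, by (\ref{epcon}), to $\epsilon_S$, while $\E[(X_S-\mu_S)X_S^\top\mid S]=\Sigma_S$; hence $\cov[X_S-\mu_S,Y_S]=\E[\Sigma_S\beta_S]$. Combining gives $\beta_F=\E[\Sigma_S]^{-1}\E[\Sigma_S\beta_S]$, and the scalar formula (\ref{bfeo1}) is the special case $\Sigma_s=\sigma_s^2$. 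Finally, orthogonality of the projection residual to the constant $1$ (which lies in the regressor span), together with $\E[U_i(S)]=0$, yields $\E[Y_S]-\alpha_F-\beta_F^\top\bar\mu=0$, i.e.\ (\ref{afeo}).

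I do not expect a serious obstacle; the one point that deserves care is justifying FWL at the population level — orthogonal projections in $L^2(\Omega)$ rather than in a finite sample — and confirming that the centered dummies, with the constant adjoined, really do span all functions of $S$. Once those are in place, the remainder is bookkeeping with conditional expectations and the moment assumptions (\ref{epcon}).
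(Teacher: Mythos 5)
Your proposal is correct and follows essentially the same route as the paper's proof: both identify $\beta_F$ via the equivalence of the span of the centered dummies plus a constant with the span of the bank indicators, apply the population Frisch--Waugh--Lovell theorem to reduce to a regression of $Y_S$ on $X_S-\mu_S$, and then evaluate $\var[X_S-\mu_S]=\E[\Sigma_S]$ and $\cov[X_S-\mu_S,Y_S]=\E[\Sigma_S\beta_S]$ by conditioning on $S$ together with (\ref{epcon}). The only difference is presentational: you make explicit that projecting onto functions of $S$ is conditional expectation, where the paper simply exhibits the projection as $\sum_i\mu_i\mathbf{1}\{S=i\}=\mu_S$.
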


We encountered (\ref{bfeo1}) in (\ref{bpf}) as a special case of the pooled
coefficient when the bank means $\mu_s$ are constant. The general case
in (\ref{bfeo}) similarly coincides with the pooled coefficient in (\ref{bpool})
when the means are constant. In other words, introducing the bank-level
fixed effects in (\ref{ytilde}) purges $\beta_F$ of the effect of different
feature means across banks; dropping these fixed effects in (\ref{yfeo})
ensures that the regulator's model has no explicit dependence on bank
identity and satisfies equal treatment.

In what sense is this procedure fair? We adapt the interpretation in Johnson
et al.\ \cite{jfs} to our setting. Write $U = (U_1,\dots,U_{\bar{S}-1})^{\top}$
for the vector of centered dummy variables. Write $\cov[X_S,U(S)]$ for the
$d\times(\bar{S}-1)$ matrix of covariances between the components of
$X_S$ and $U(S)$. Let
\begin{equation}
\Lambda = (\var[X_S])^{-1}\cov[X_S,U(S)].
\label{lamdef}
\end{equation}
This matrix minimizes $\E[\|U(S) - \Lambda^{\top}(X_S-\bar{\mu})\|^2]$, 
so $\Lambda^{\top}(X_S-\bar{\mu})$ is the linear projection of the
bank-identity variables $U(S)$ onto the centered portfolio features
$X_S-\bar{\mu}$. The relationship between $\beta_{Pool}$ and $\beta_F$
can be expressed as follows. 

\begin{proposition}\label{p:bbld}
The coefficients $\beta_{Pool}$ and $\beta_F$ satisfy
\begin{equation}
\beta_{Pool} = \beta_{F} + \Lambda\delta,
\label{ovb}
\end{equation}
where  $\delta = (\delta_1,\dots,\delta_{\bar{S}-1})^{\top}$ is the vector
of coefficients from (\ref{ytilde})--(\ref{sqlossf}). In particular,
\begin{equation}
\delta_s = (\alpha_s + \beta_s\mu_s) - (\alpha_{\bar{S}}+\beta_{\bar{S}}\mu_{\bar{S}})
-\beta^{\top}_F(\mu_s - \mu_{\bar{S}}),
\quad s=1,\dots,\bar{S}-1.
\label{delta}
\end{equation}
\end{proposition}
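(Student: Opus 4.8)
The plan is to recognize that both $(\alpha_{Pool},\beta_{Pool})$ and $(\alpha_F,\delta,\beta_F)$ are $L^2$-projection coefficients, so their residuals are orthogonal to the respective regressor spans, and then to read off the relationship between the two from the way the centered dummy vector $U(S)=(U_1(S),\dots,U_{\bar S-1}(S))^\top$ decomposes along $X_S$. First I would record three facts. (a) By Proposition~\ref{p:pool}, the pooled residual $e_{Pool}:=Y_S-\alpha_{Pool}-\beta_{Pool}^\top X_S$ is orthogonal to the span of $\{1,X_S\}$. (b) Since $(\alpha_F,\delta,\beta_F)$ minimize (\ref{sqlossf}), the FEO residual $e_F:=Y_S-\alpha_F-\sum_i\delta_iU_i(S)-\beta_F^\top X_S$ is orthogonal to the span of $\{1,U_1(S),\dots,U_{\bar S-1}(S),X_S\}$, hence in particular to the smaller span $\{1,X_S\}$. (c) Centering gives $\E[U_i(S)]=\sum_sp_s(\1\{s=i\}-p_i)=p_i-p_i=0$, so with $\Lambda$ as in (\ref{lamdef}) the vector $\Lambda^\top(X_S-\bar\mu)$ is exactly the linear projection of $U(S)$ onto $\{1,X_S\}$; writing $V:=U(S)-\Lambda^\top(X_S-\bar\mu)$, the residual $V$ is orthogonal to $\{1,X_S\}$.

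For (\ref{ovb}) I would substitute $U(S)=\Lambda^\top(X_S-\bar\mu)+V$ into the FEO decomposition $Y_S=\alpha_F+\delta^\top U(S)+\beta_F^\top X_S+e_F$ and regroup, using $\delta^\top\Lambda^\top X_S=(\Lambda\delta)^\top X_S$, to obtain $Y_S=\bigl[\alpha_F-(\Lambda\delta)^\top\bar\mu\bigr]+(\beta_F+\Lambda\delta)^\top X_S+(\delta^\top V+e_F)$. The error term $\delta^\top V+e_F$ is orthogonal to $\{1,X_S\}$ because $e_F$ is (by (b)) and $V$ is (by (c)), so the displayed affine function of $X_S$ is the $L^2$-projection of $Y_S$ onto $\{1,X_S\}$. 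As in Proposition~\ref{p:pool} the matrix $\var[X_S]=\E[W_S]$ is nonsingular, so $\{1,X_S^{(1)},\dots,X_S^{(d)}\}$ are linearly independent in $L^2$ and this projection has a unique affine representation; matching coefficients with $\alpha_{Pool}+\beta_{Pool}^\top X_S$ yields $\beta_{Pool}=\beta_F+\Lambda\delta$ (and, consistently with (\ref{apool})--(\ref{afeo}), $\alpha_{Pool}=\alpha_F-(\Lambda\delta)^\top\bar\mu$).

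For (\ref{delta}) I would note that $\{1,U_1(S),\dots,U_{\bar S-1}(S)\}$ are $\bar S$ linearly independent functions on the $\bar S$-point set $\{1,\dots,\bar S\}$ (a short argument: if a linear combination vanishes identically, evaluating at $\bar S$ and at each $s\le\bar S-1$ and subtracting forces all the dummy coefficients, then the constant, to be zero), hence span every function of $S$. Therefore orthogonality of $e_F$ to this span is equivalent to $\E[e_F\mid S]=0$. Using $\E[Y_S\mid S=s]=\alpha_s+\beta_s^\top\mu_s=:a_s$ and $\E[X_S\mid S=s]=\mu_s$, which follow from (\ref{ys}) and (\ref{epcon}), this gives, for each $s$, $\alpha_F+\sum_i\delta_iU_i(s)+\beta_F^\top\mu_s=a_s$. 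Evaluating this at a fixed $s\le\bar S-1$ and at $s=\bar S$ (where $U_i(\bar S)=-p_i$) and subtracting eliminates both $\alpha_F$ and $\sum_i\delta_ip_i$ and leaves $\delta_s+\beta_F^\top(\mu_s-\mu_{\bar S})=a_s-a_{\bar S}$, which is exactly (\ref{delta}).

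These steps are routine once the two projections are set up; the only point that needs care is keeping them straight — the conditional expectation given $S$ disposes of the dummy block while the linear projection onto $X_S$ disposes of the continuous block — together with the observation that centering makes $\E[U(S)]=0$, so that the omitted-variable adjustment to $\beta_{Pool}$ is precisely $\Lambda\delta$ with no intercept term leaking into it. I do not anticipate a genuine obstacle.
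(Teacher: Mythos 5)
Your proof is correct. For the identity (\ref{ovb}) you and the paper do essentially the same thing: both arguments rest on the FEO orthogonality conditions and amount to the omitted-variable-bias formula. The paper substitutes the FEO decomposition of $Y_S$ into $\beta_{Pool}=\var[X_S]^{-1}\cov[X_S,Y_S]$ and reads off $\beta_F+\Lambda\delta$; you instead decompose $U(S)=\Lambda^{\top}(X_S-\bar{\mu})+V$ and match coefficients in the projection onto $\{1,X_S\}$. These are two presentations of one calculation. Where you genuinely diverge is in deriving (\ref{delta}): the paper computes $\delta=\var[U(S)]^{-1}\cov[U(S),Y_S-\beta_F^{\top}X_S]$ by writing out $\var[U(S)]$, verifying its inverse by direct multiplication, and assembling the entries of $\cov[U(S),Y_S]$ and $\cov[U(S),X_S]$. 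You observe instead that $\{1,U_1(S),\dots,U_{\bar{S}-1}(S)\}$ spans all functions of $S$, so orthogonality of the FEO residual to the dummy block is equivalent to $\E[e_F\mid S]=0$; evaluating the resulting moment identity at $s$ and at $\bar{S}$ and subtracting (using $U_i(s)-U_i(\bar{S})=\mathbf{1}\{s=i\}$) gives (\ref{delta}) with no matrix inversion at all. Your route is shorter and makes the structure of $\delta_s$ (a difference of conditional means net of the $\beta_F$ adjustment) transparent; the paper's explicit formula for $\var[U(S)]^{-1}$ buys something you do not get for free, namely the row computation (\ref{msol}) that is reused verbatim in the proof of Proposition~\ref{p:seo}. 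The one point worth making explicit in your write-up is that the joint regressor covariance matrix of $(U(S),X_S)$ is nonsingular, so that $\delta$ and $\beta_F$ are uniquely determined; this follows from the invertibility of $\var[U(S)]$ together with $\var[X_S-M^{\top}U(S)]=\E[\Sigma_S]\succ 0$, and is implicitly assumed by the paper as well.
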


We can write the forecast in (\ref{ytilde}), using the optimal coefficients
from (\ref{sqlossf}) as
\begin{equation}
\hat{Y}(x,s) = \E[Y_S] + \delta^{\top}U(s) + \beta_F^{\top}(x-\bar{\mu});
\label{ylp}
\end{equation}
This is the linear projection of $Y_S$ onto $(1,U(S),X_S)$, evaluated at
$S=s$, $X_s=x$. Let $\hat{Y}_P(x) = \alpha_{Pool}+\beta^{\top}_{Pool}x$
denote the forecast based on the pooled coefficients (\ref{bpool}) and
(\ref{apool}). Decomposing $U(S)$ into its projection onto $X_S-\bar{\mu}$
and an orthogonal component leads to the following contrast of these
forecasts:
\begin{align}
\hat{Y}(x,s) &= \E[Y_S] + \delta^{\top}\Lambda^{\top} (x-\bar{\mu}) +
\delta^{\top}[U(s)-\Lambda^{\top} (x-\bar{\mu})] \hspace*{-.9in}
& + \beta_F^{\top}(x-\bar{\mu})
\label{cytilde} \\
\hat{Y}_P(x) &= \E[Y_S] + \delta^{\top}\Lambda^{\top} (x-\bar{\mu})  
& + \beta_F^{\top}(x-\bar{\mu})
\label{cypool} \\
\hat{Y}_F(x) &= \E[Y_S]   &+ \beta_F^{\top}(x-\bar{\mu})
\label{cyfeo}
\end{align}

The term $\delta^{\top}[U(s)-\Lambda^{\top} (x-\bar{\mu})]$ in (\ref{cytilde})
affects the forecast through
information in bank identity that is orthogonal to the legitimate features $x$.  
This would be {\it disparate treatment}, as in Johnson et al.\ \cite{jfs}. 
Through ``unawareness'' (meaning that it has no functional dependence on
bank identity) the pooled forecast (\ref{cypool}) drops this term, but it retains
$\delta^{\top}\Lambda^{\top}(x-\bar{\mu})$, as can be seen from (\ref{ovb}).

The term $\delta^{\top}\Lambda^{\top}(x-\bar{\mu})$ is the problematic
component of the pooled method. Although it does not explicitly use bank
identity, this term relies on the fact that bank identity is to some extent
predictable from portfolio features. Imagine the regulator forming loss
forecasts from blinded data --- the regulator does not know the identity of
the bank. The term $\Lambda^{\top}(x-\bar{\mu})$ is the least-squares
prediction of $U(s)$ from $x-\bar{\mu}$. In the pooled forecast (\ref{cypool}),
the regulator is implicitly ``misdirecting'' the data in the features $x-\bar{\mu}$
to try to identify the bank and then to adjust the forecast based on the inferred
identity. The FEO forecast (\ref{cyfeo}) removes this effect and retains only the
direct effect of portfolio features on the loss rate.

In the terminology of Section~\ref{s:hf}, dropping 
$\delta^{\top}[U(s)-\Lambda^{\top} (x-\bar{\mu})]$ ensures the narrow sense
of equal treatment --- that loss forecasts not depend explicitly on bank identity.
Dropping $\Lambda^{\top}(x-\bar{\mu})$ ensures a broader sense of equal
treatment --- that loss forecasts not depend on proxies for bank identity.
Johnson et al.\ \cite{jfs} refer to their counterpart of
$\Lambda^{\top}(x-\bar{\mu})$ as {\it disparate impact}, which is consistent
with the notion of ``proxy discrimination'' as a particular type of disparate
impact (as in Prince and Schwarcz \cite{prince}). In our setting, as noted in
Section~\ref{s:hf}, the disparate impact of most immediate concern to banks is
the omission of features from the Fed's models that might otherwise benefit
individual banks. Omitted features may contribute to
$\Lambda^{\top}(x-\bar{\mu})$, but dropping this term does not necessarily
dispel banks' complaints. The banks' disagreements with the Fed concern the
scope of portfolio features that should be modeled. We therefore prefer to
associate $\delta^{\top}[U(s)-\Lambda^{\top} (x-\bar{\mu})]$ and
$\Lambda^{\top}(x-\bar{\mu})$ with narrow and broad interpretations of the
Fed's own principle of equal treatment (as discussed in Section~\ref{s:hf}), rather
than with separate concerns for disparate treatment and disparate impact by
the Fed and the banks. We emphasize the interpretation of
$\Lambda^{\top}(x-\bar{\mu})$ as a misdirection of legitimate information, 
rather than as a contributor to disparate impact.

The FEO method offers a further advantage over the pooled method.
Recall again from Section~\ref{s:hf} (and Remark~\ref{r:formulation}(iv)) 
that we interpret the banks' objections as calls for the inclusion
of features that are omitted from the Fed's models. Under the condition
$\cov[X_s,\epsilon_s]=0$ in (\ref{epcon}), the FEO coefficients of included
features are unaffected by the omission of other features. The pooled
coefficients do not in general have this property.

We will conclude in Section~\ref{s:unified} that the FEO forecast is, in a precise
sense, the best way to aggregate the bank-specific models into a single
regulatory model. The FEO forecast has no direct dependence on bank identity;
but it also removes the indirect dependence that results when bank identity is
partly predictable from portfolio features. We discuss other methods for
comparison.

\subsection{Conditional Expectation Model}
\label{s:cond}

A similar misdirection of information occurs if we project the bank-specific
models to an industry model in the sense of conditional expectation, rather
than least squares. Suppose $X_s$ has density $g_s$, and suppose
$\E[\epsilon_s|X_s]=0$, $s=1,\dots,\bar{S}$.
Then, by Bayes' rule,
\begin{equation}
\hat{Y}_C(x) \equiv \E[Y_S|X_S=x] = \frac{\sum_sp_s g_s(x)(\alpha_s+\beta_s^{\top}x)}{\sum_sp_sg_s(x)}.
\label{ycon}
\end{equation}
This model satisfies equal treatment --- $\hat{Y}_C(x)$ depends on the
portfolio features $x$ but not on a bank's identity. However, the point of
the weights $p_sg_s(x)$ is to infer the identity of the bank from the features.
Indeed, as discussed in Section~\ref{s:nonlinear}, the conditional expectation
$\E[Y_S|X_S=x]$ can be viewed as a nonlinear generalization of the pooled
method, with some of the same shortcomings.

\subsection{Substantive Equality of Opportunity}
\label{s:seo}

As discussed in Arneson \cite{arneson}, a system in which admission
decisions are made through a competitive exam open to everyone achieves
formal equality of opportunity; but if only the wealthy have access to the
preparation required for the exam, the system fails to achieve
{\it substantive\/} equality of opportunity (SEO). In the regression setting,
Johnson et al.\ \cite{jfs} interpret SEO to mean that any influence of
protected attributes should be removed from other variables included in a
regression model. In the analogy with Arneson's \cite{arneson} example, 
SEO would seek to remove the effect of economic status from performance
on the exam, whereas FEO would accept exam scores as a legitimate basis
for decision-making.
(Our use of ``SEO'' follows Johnson et al.\ \cite{jfs}. For a broader interpretation
of substantive equality in algorithmic fairness, see Green \cite{green}.)

To apply these ideas to our setting, define the $(\bar{S}-1)\times d$ matrix
\begin{equation}
M = \var[U(S)]^{-1}\cov[U(S),X_S];
\label{Mdef}
\end{equation}
then $M$ minimizes $\E[\|X_S-\bar{\mu} - M^{\top}U(S)\|^2]$.
In accordance
with Definition 2 of Johnson et al.\ \cite{jfs}, define
\begin{equation}
\hat{Y}_{SEO}(x,s) = 
\alpha_F + \beta_F^{\top}(x - M^{\top}U(s)),
\label{yseo}
\end{equation}
with $\alpha_F$ and $\beta_F$ defined by (\ref{sqlossf}). The SEO forecast
adjusts the portfolio features $x$ to remove the linear projection onto the
centered bank dummy variables $U$. We can write (\ref{yseo}) somewhat
more explicitly as follows:

\begin{proposition}\label{p:seo}
With $M$ as in (\ref{Mdef})
\begin{equation}
M^{\top}U(s) = \sum_i(\mu_i-\mu_{\bar{S}})U_i(s) =\mu_s-\bar{\mu}, 
\label{mtu}
\end{equation}
so the SEO forecast (\ref{yseo}) is given by
\begin{equation}
\hat{Y}_{SEO}(x,s) = 
\alpha_F + \beta_F^{\top}(x - \mu_s+\bar{\mu}).
\label{yseo2}
\end{equation}
The SEO forecast is the linear projection of $Y_S$ onto a constant and
$X_S-\mu_S$.
\end{proposition}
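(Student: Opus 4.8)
The statement has two parts: the identity (\ref{mtu}) for $M^{\top}U(s)$, and the characterization of $\hat{Y}_{SEO}$ as a linear projection of $Y_S$ onto a constant and $X_S-\mu_S$. The plan is to first pin down the matrix $M$ explicitly, then read off (\ref{yseo2}) by substituting into (\ref{yseo}), and finally identify $\hat{Y}_{SEO}$ with the claimed projection using Proposition~\ref{p:feo}(ii) and the intercept formula (\ref{afeo}).

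For (\ref{mtu}) I would identify $M$ through its normal equations rather than by direct inversion. Conditioning on $S$ gives $\E[\1\{S=i\}X_S^{\top}]=\E[\1\{S=i\}\mu_S^{\top}]=p_i\mu_i^{\top}$, so (since $\E[U(S)]=0$) the $i$-th row of $\cov[U(S),X_S]$ is $p_i(\mu_i-\bar{\mu})^{\top}$, while $\var[U(S)]$ has $(i,j)$ entry $p_i\delta_{ij}-p_ip_j$. Let $N$ be the $(\bar{S}-1)\times d$ matrix whose $i$-th row is $(\mu_i-\mu_{\bar{S}})^{\top}$. Using $\sum_{i=1}^{\bar{S}-1}p_i(\mu_i-\mu_{\bar{S}})=\bar{\mu}-\mu_{\bar{S}}$ --- which follows from $\sum_{s}p_s\mu_s=\bar{\mu}$ and $\sum_s p_s=1$ --- one checks directly that $\var[U(S)]\,N=\cov[U(S),X_S]$, so $M=N$ by uniqueness of the linear projection (the invertibility of $\var[U(S)]$ assumed in (\ref{Mdef}) forces every $p_s>0$). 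This gives the first equality in (\ref{mtu}). The second equality, $\sum_i(\mu_i-\mu_{\bar{S}})U_i(s)=\mu_s-\bar{\mu}$, is then a short case check: for $s=\bar{S}$ the sum is $-\sum_{i<\bar{S}}p_i(\mu_i-\mu_{\bar{S}})=\mu_{\bar{S}}-\bar{\mu}$, and for $s=k<\bar{S}$ it is $(\mu_k-\mu_{\bar{S}})-\sum_{i<\bar{S}}p_i(\mu_i-\mu_{\bar{S}})=\mu_k-\bar{\mu}$. (Equivalently and more conceptually: $\{1,U_1(S),\dots,U_{\bar{S}-1}(S)\}$ spans all of $L^2(\sigma(S))$, so $\bar{\mu}+M^{\top}U(S)$, being the $L^2$-projection of $X_S$ onto that span, equals $\E[X_S\mid S]=\mu_S$.)

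Substituting (\ref{mtu}) into (\ref{yseo}) immediately yields $\hat{Y}_{SEO}(x,s)=\alpha_F+\beta_F^{\top}(x-\mu_s+\bar{\mu})$, which is (\ref{yseo2}). For the projection claim, note that $\E[X_S-\mu_S]=\bar{\mu}-\bar{\mu}=0$, so the best linear predictor of $Y_S$ from $\{1,X_S-\mu_S\}$ is $\E[Y_S]+\gamma^{\top}(X_S-\mu_S)$ with $\gamma=\var[X_S-\mu_S]^{-1}\cov[X_S-\mu_S,Y_S]$; by Proposition~\ref{p:feo}(ii), i.e.\ (\ref{bfw}), this $\gamma$ is exactly $\beta_F$. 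Evaluated at $S=s$, $X_S=x$, this projection equals $\E[Y_S]+\beta_F^{\top}(x-\mu_s)$; and since (\ref{afeo}) gives $\alpha_F+\beta_F^{\top}\bar{\mu}=\E[Y_S]$, it coincides with $\hat{Y}_{SEO}(x,s)$ in (\ref{yseo2}).

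None of the steps is genuinely difficult; the points needing care are the conditioning computation of $\cov[U(S),X_S]$ and the index bookkeeping that separates $i=1,\dots,\bar{S}-1$ from the reference bank $\bar{S}$. The only mild conceptual content is the observation that the centered dummies together with the constant exhaust all functions of $S$, which is precisely why the SEO adjustment $x\mapsto x-M^{\top}U(s)$ reduces to the exact recentering $x\mapsto x-\mu_s+\bar{\mu}$.
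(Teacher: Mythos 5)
Your proof is correct and follows essentially the same route as the paper's: identify the rows of $M$ as $(\mu_i-\mu_{\bar{S}})^{\top}$, substitute into (\ref{yseo}), and invoke (\ref{bfw}) together with (\ref{afeo}) for the projection claim. The only cosmetic difference is that you verify the normal equations $\var[U(S)]\,N=\cov[U(S),X_S]$ directly, whereas the paper reuses the explicit inverse of $\var[U(S)]$ already computed in (\ref{msol}) in the proof of Proposition~\ref{p:bbld}; both yield the same identity.
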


Recall from Section~\ref{s:ptf} that a model satisfies demographic parity
if its forecasts are independent of bank identity. Let us say that a model
satisfies {\it weak\/} demographic parity if its forecasts are
{\it uncorrelated\/} with the bank-identity variables $U_i(S)$. The centered
features $X_S-\mu_S$ are uncorrelated with the $U_i(S)$. It therefore
follows from Proposition~\ref{p:seo} that SEO forecasts are uncorrelated
with the $U_i(S)$. In other words, we have the following result:

\begin{corollary}\label{c:wdp}
The SEO forecast satisfies weak demographic parity.
\end{corollary}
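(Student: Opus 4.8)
The plan is to reduce the statement to the single fact, asserted in the paragraph preceding the corollary, that the centered features $X_S-\mu_S$ are uncorrelated with the bank-identity variables $U_i(S)$, and then to verify that fact by conditioning on the bank label $S$.

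First I would invoke Proposition~\ref{p:seo}, which gives the explicit form $\hat{Y}_{SEO}(X_S,S) = \alpha_F + \beta_F^{\top}(X_S-\mu_S)$ (equivalently, the SEO forecast is the linear projection of $Y_S$ onto a constant and $X_S-\mu_S$). Since $\alpha_F$ is a constant, linearity of covariance gives, for each $i=1,\dots,\bar S-1$, the identity $\cov[\hat{Y}_{SEO}(X_S,S),U_i(S)] = \beta_F^{\top}\,\cov[X_S-\mu_S,U_i(S)]$. Hence it suffices to show $\cov[X_S-\mu_S,U_i(S)] = 0$ for every $i$.

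Next I would compute this covariance by a tower/conditioning argument on $S$. Because $U_i(S)$ is a function of $S$ alone, $\E[(X_S-\mu_S)U_i(S)\mid S] = U_i(S)\,\E[X_S-\mu_S\mid S]$, and by the definition $\mu_s=\E[X_s]$ in (\ref{mudef}) the inner conditional expectation is $0$; taking expectations yields $\E[(X_S-\mu_S)U_i(S)]=0$. The same argument gives $\E[X_S-\mu_S]=0$, and directly $\E[U_i(S)] = \PR(S=i)-p_i = p_i-p_i = 0$, so the cross moment coincides with the covariance, which is therefore $0$. Combined with the display in the previous paragraph, this shows $\hat{Y}_{SEO}(X_S,S)$ is uncorrelated with each $U_i(S)$, i.e.\ the SEO forecast satisfies weak demographic parity.

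There is no substantive obstacle; the only point requiring care is the bookkeeping around $\mu_S$ being the random, $S$-dependent population mean of $X_S$, so that $X_S-\mu_S$ has \emph{conditional} mean zero given $S$ even though $X_S$ has no $S$-free mean. Everything else is linearity of covariance and conditioning on $S$.
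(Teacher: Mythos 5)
Your proof is correct and follows the same route the paper takes (in the text immediately preceding the corollary): apply Proposition~\ref{p:seo} to write the SEO forecast as an affine function of $X_S-\mu_S$, then use the fact that $X_S-\mu_S$ is uncorrelated with the $U_i(S)$. The only difference is that you verify that uncorrelatedness explicitly by conditioning on $S$, whereas the paper simply asserts it; your added step is the right one.
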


Under additional conditions, we get a stronger conclusion:

\begin{corollary}
If the covariance matrix $\Sigma_s$ and the distribution of $Z_s$ in
(\ref{zdef}) are the same for all $s$, then the SEO model coincides with the
standardized model (\ref{ptfs}), and both satisfy demographic parity.
\label{c:ptfseo}
\end{corollary}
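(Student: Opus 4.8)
\emph{Proof proposal.} The plan is to prove the identity first and demographic parity second: once the SEO rule and the standardized rule in (\ref{ptfs}) are shown to agree as functions of $(x,s)$, demographic parity need only be verified for one of them. The reduction of the SEO rule to (\ref{ptfs}) will use only the hypothesis $\Sigma_s\equiv\Sigma$; the invariance of the law of $Z_s$ across $s$ is what will deliver demographic parity.

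First I would simplify the FEO/SEO coefficients. With $\Sigma_s=\Sigma$ for every $s$, formula (\ref{bfeo}) of Proposition~\ref{p:feo} collapses to $\beta_F=\Sigma^{-1}\E[\Sigma\beta_S]=\E[\beta_S]$, the ATE coefficient (\ref{ate}); invertibility of $\Sigma$ is part of the standing hypotheses on the $\Sigma_s$. Feeding this and (\ref{afeo}) into the SEO expression (\ref{yseo2}) gives $\hat{Y}_{SEO}(x,s)=\E[Y_S]+\E[\beta_S]^{\top}(x-\mu_s)$. On the other side, (\ref{stab}) gives $\bar{\beta}^o=\sum_i p_i\Sigma^{1/2}\beta_i=\Sigma^{1/2}\E[\beta_S]$, so with $z_s=\Sigma^{-1/2}(x-\mu_s)$ we get $\bar{\beta}^{o\top}z_s=\E[\beta_S]^{\top}\Sigma^{1/2}\Sigma^{-1/2}(x-\mu_s)=\E[\beta_S]^{\top}(x-\mu_s)$, while $\bar{\alpha}^o=\sum_i p_i(\alpha_i+\beta_i^{\top}\mu_i)=\E[\alpha_S+\beta_S^{\top}\mu_S]=\E[Y_S]$, the last equality because $\E[Y_s]=\alpha_s+\beta_s^{\top}\mu_s$ by (\ref{ys})--(\ref{epcon}). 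Hence (\ref{ptfs}) equals $\E[Y_S]+\E[\beta_S]^{\top}(x-\mu_s)$, which matches $\hat{Y}_{SEO}(x,s)$; the two models coincide.

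Then I would check demographic parity for the common model. Evaluating it along the random pair $(X_S,S)$, and noting that $z_s$ with $x$ replaced by $X_s$ is precisely $Z_s$ from (\ref{zdef}), the forecast random variable is $\bar{\alpha}^o+\bar{\beta}^{o\top}Z_S$. Conditioning on $\{S=s\}$, this has the law of $\bar{\alpha}^o+\bar{\beta}^{o\top}Z_s$, which by hypothesis does not depend on $s$; so the conditional law of the forecast is constant in $s$ and therefore equals its unconditional law, i.e., the forecast is independent of $S$. Since SEO and (\ref{ptfs}) coincide, both satisfy demographic parity, which is the claimed strengthening of Corollary~\ref{c:wdp}.

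I do not expect a genuine obstacle here; the only thing to watch is bookkeeping. The reduction $\Sigma_s\equiv\Sigma$ must be applied consistently in all three places it is needed (in $\beta_F$, in $\bar{\beta}^o$, and in the factor $\Sigma^{-1/2}$ inside $z_s$), and one must notice that $z_s$ evaluated at $x=X_s$ is exactly $Z_s$, which makes the demographic-parity step reduce verbatim to the assumed invariance of the law of $Z_s$. One could equivalently phrase the whole hypothesis as ``the centered features $X_s-\mu_s$ are identically distributed across banks,'' which makes the last step immediate, but the $(\Sigma_s,Z_s)$ formulation is the one naturally tied to (\ref{zdef}).
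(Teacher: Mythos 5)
Your proposal is correct and follows essentially the same route as the paper's proof: reduce $\beta_F$ to $\E[\beta_S]$ via (\ref{bfeo}), identify $\bar{\beta}^{o\top}z_s$ with $\beta_F^{\top}(x-\mu_s)$, match the overall means, and invoke the assumed invariance of the law of $Z_s$ for demographic parity. The extra bookkeeping you supply (verifying $\bar{\alpha}^o=\E[Y_S]$ explicitly) is just a spelled-out version of the paper's ``same overall mean'' remark.
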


Under the conditions in the corollary, the mean adjustment in (\ref{yseo2}) 
is sufficient to give $\hat{Y}_{SEO}(X_s,s)$ the same distribution for all $s$.
Put differently, PTF considers only the quantile of
$\alpha_s+\beta_s^{\top}X_s$, relative to the distribution for bank $s$,
to be legitimate information; SEO considers $X_s-\mu_s$ to be legitimate
information. Under the conditions of the corollary, the two concepts coincide.

The mean adjustment in (\ref{yseo2}) requires knowledge of the bank identity
$s$, so (\ref{yseo}) does not satisfy Definition~\ref{d:et}.
The intent of the mean adjustment is to achieve a greater degree of equality. Consider the example
with which began this section. If $x$ represents an exam score and
$\mu_1>\mu_0$ are the mean scores among wealthy and non-wealthy exam
takers, (\ref{yseo2}) adjusts scores downward for wealthy exam takers and
upward for non-wealthy exam takers.

Such an adjustment may be appropriate when the individuals or firms under
evaluation are, in some sense, not responsible for their mean characteristic
(or the mean in their peer group) and are therefore evaluated based on
deviations from the mean. This type of consideration does not seem
applicable to the stress-test setting, but it could arise more generally in
settings where capital regulation intersects with other policy objectives.

One such example is suggested by the Paycheck Protection Program Lending
Facility (PPPL) launched by the Federal Reserve early in the COVID crisis. The
PPPL provided for loans to small businesses to be made by banks and
guaranteed by the Small Business Administration. Under normal
circumstances, the loans would increase participating banks' balance sheets
and thus potentially increase their capital requirements. To promote use
of the facility, banking regulators issued a rule excluding PPPL loans from
capital requirements, thus ``neutralizing the effects of participating in the
PPPL Facility on regulatory capital requirements.''%
\footnote{Federal Register, Vol.~85, No.~71, p.20389, April 13, 2020.}
This ``neutralizing'' action is somewhat analogous to the SEO adjustment
in that it removes responsibility for the larger balance sheet from the bank.
The adjustments differ in that SEO adjusts for the mean whereas the PPPL
adjustment removes the amount lent through the program. 

\subsection{A Unified Perspective: Legitimate Information}
\label{s:unified}

All of the methods we have discussed can be seen as ways of choosing
forecasts $\hat{Y}_s$, $s=1,\dots\bar{S}$, (of the form $\hat{Y}(X_s)$
or $\hat{Y}(X_s,s)$) to minimize
\begin{equation}
\E[(\hat{Y}_S-Y_S)^2],
\label{sqloss2}
\end{equation}
subject to additional considerations. Table~\ref{t:uni} summarizes the cases
we have considered. In rows (i), (iv), and (v), we minimize (\ref{sqloss2}) over
the indicated coefficients. In (ii) and (iii), we allow $g$ to be an arbitrary
(suitably measurable) function of the indicated arguments. In (iii) we
strengthen the condition (\ref{epcon}) on the errors $\epsilon_s$.

\begin{table}
\begin{tabular}{clll}
		&Form                                                       & Constraint           & Forecast \\ \hline
(i)   &	$\hat{Y}_s = \alpha + \beta^{\top}X_s$    &                           & Pooled (\ref{bpool})--(\ref{apool})  \rule{0pt}{12pt}\\
(ii)  &	$\hat{Y}_s = g(X_s,s)$, some $g$              & $\hat{Y}_S$ independent of $S$ & PTF (\cite{chzhen,legouic}) \\ 
(iii)  &	$\hat{Y}_s = g(X_s)$, some $g$, $\E[\epsilon_s|X_s]=0$&       & Cond.~exp. (\ref{ycon}) \\
(iv) &	$\hat{Y}_s = \alpha + \beta^{\top}X_s$ & $\cov[Y_S-\hat{Y}_S,X_S-\mu_S]=0$ & FEO (\ref{yfeo}) \\
(v) &   $\hat{Y}_s = \alpha + \lambda^{\top}U(s) + \beta^{\top}X_s$ & $\cov[\hat{Y}_S,U(S)]=0$ & SEO (\ref{yseo}) \\
\hline
\end{tabular}
\caption{Summary of forecast model forms and constraints.}
\label{t:uni}
\end{table}

\begin{proposition}
In each row of Table~\ref{t:uni}, the squared loss (\ref{sqloss2}) is minimized
over forecasts of the form in the first column, subject to the constraint in the
second column, by the model in the last column.
\label{p:unified}
\end{proposition}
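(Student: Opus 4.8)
The plan is to handle the five rows of Table~\ref{t:uni} separately, observing that rows (i)--(iii) are essentially restatements of results already established, while rows (iv) and (v) share a single two-step pattern: the stated constraint is \emph{linear} in the forecast coefficients, so it restricts the parameter space to an affine subspace on which the squared error remains a convex quadratic, and the claimed model is then simply the (unique) minimizer over that subspace. Concretely, row (i) is Proposition~\ref{p:pool}; row (ii) is Proposition~\ref{p:ptf} (equivalently the Wasserstein-barycenter characterization of \cite{chzhen,legouic}); and row (iii) is the defining $L^2$-optimality of conditional expectation among measurable functions of $X_S$, combined with the Bayes'-rule identity~(\ref{ycon})---the point of strengthening~(\ref{epcon}) to $\E[\epsilon_s\mid X_s]=0$ is exactly that it makes $\E[Y_s\mid X_s=x]=\alpha_s+\beta_s^\top x$, so that~(\ref{ycon}) holds.

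For row (iv) I would first expand the constraint. Since covariances annihilate constants, $\cov[Y_S-\hat Y_S,\,X_S-\mu_S]=\E[(Y_S-\beta^\top X_S)(X_S-\mu_S)]$ when $\hat Y_s=\alpha+\beta^\top X_s$. Substituting the bank-specific model~(\ref{ys}), using $\E[X_S-\mu_S]=0$, the within-bank identity $\E[(X_s-\mu_s)(X_s-\mu_s)^\top]=\Sigma_s$, and $\cov[X_s,\epsilon_s]=0$ from~(\ref{epcon}), this collapses to $\E[\Sigma_S\beta_S]-\E[\Sigma_S]\beta$. Because each $\Sigma_s$ is nonsingular, $\E[\Sigma_S]=\sum_s p_s\Sigma_s$ is positive definite, so the constraint is feasible and pins down $\beta=\E[\Sigma_S]^{-1}\E[\Sigma_S\beta_S]=\beta_F$ (Proposition~\ref{p:feo}). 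The constraint does not involve $\alpha$, so minimizing $\E[(\alpha+\beta_F^\top X_S-Y_S)^2]$ over the scalar $\alpha$ gives $\alpha=\E[Y_S]-\beta_F^\top\bar\mu=\alpha_F$, i.e.\ the FEO forecast~(\ref{yfeo}).

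For row (v) the same expansion gives $\cov[\hat Y_S,U(S)]=\var[U(S)]\,\lambda+\cov[U(S),X_S]\,\beta$ when $\hat Y_s=\alpha+\lambda^\top U(s)+\beta^\top X_s$, so (provided all $p_s>0$, which is what makes $\var[U(S)]$ nonsingular and $M$ in~(\ref{Mdef}) well defined) the constraint is equivalent to $\lambda=-M\beta$. Substituting back and invoking Proposition~\ref{p:seo} ($M^\top U(S)=\mu_S-\bar\mu$), the forecast becomes $\hat Y_S=\alpha+\beta^\top(X_S-\mu_S+\bar\mu)$, reducing the problem to the unconstrained regression of $Y_S$ on a constant and $X_S-\mu_S$. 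Its solution is $\beta=\var[X_S-\mu_S]^{-1}\cov[X_S-\mu_S,Y_S]=\beta_F$ by Proposition~\ref{p:feo}(ii) and $\alpha=\E[Y_S]-\beta_F^\top\bar\mu=\alpha_F$, which is precisely~(\ref{yseo2}).

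The calculations are routine, so I do not expect a real obstacle beyond bookkeeping; the two places that need care are (a) checking that the constraints in rows (iv) and (v) really are affine in the coefficients and non-empty, which rests on the positive-definiteness of $\E[\Sigma_S]$ and of $\var[U(S)]$ respectively (and hence on the nonsingularity of each $\Sigma_s$ and on $p_s>0$ for all $s$), and (b) keeping within-bank conditional expectations (given $S=s$) distinct from mixture expectations when simplifying the covariance conditions, since it is exactly the interaction of~(\ref{ys}) with~(\ref{epcon}) that kills the cross terms and produces the clean forms $\E[\Sigma_S\beta_S]$ and $\E[\Sigma_S]$.
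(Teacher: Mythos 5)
Your proposal is correct and follows essentially the same route as the paper's proof: rows (i)--(iii) are treated as restatements of earlier results, and rows (iv)--(v) are handled by observing that the affine constraint pins down the slope, reducing the problem to a regression on $X_S-\mu_S$ whose coefficient is $\beta_F$ by Proposition~\ref{p:feo}. The only differences are cosmetic --- in row (iv) you verify $\beta=\beta_F$ by expanding the covariance constraint through the structural model (\ref{ys})--(\ref{epcon}) to get $\E[\Sigma_S\beta_S]-\E[\Sigma_S]\beta=0$, whereas the paper identifies $\beta$ as the coefficient in the regression of $Y_S$ on $X_S-M^{\top}U(S)$ via (\ref{mtu}); and you spell out row (iii), which the paper's proof leaves implicit.
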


The constraint in Table~\ref{t:uni}(v) is weak demographic parity. SEO implicitly
takes the view that the only legitimate information in forecasting losses for
bank $s$ is the deviation $X_s-\mu_s$.
In contrast, FEO takes the full set of features $X_s$ as legitimate information.
Through the constraint in Table~\ref{t:uni}(iv), it enforces a requirement we
call \emph{no misdirection of legitimate information}. FEO uses all of $X_s$ in
forecasting losses; but it chooses the coefficient $\beta_F$ to be the
coefficient in a regression of $Y_S$ on $X_S-\mu_S$, which is the part of
$X_S$ orthogonal to bank identity. This condition ensures that the
information in $X_S$ is not misdirected to infer bank identity.

To make this idea precise, consider any model of the form (\ref{yhat}). If
we assume the intercept is chosen to match the unconditional mean, we
may write the model as
\begin{equation}
\hat{Y}_{\gamma}(x) = \E[Y_S] + (\beta_F+\gamma)^{\top}(x-\bar{\mu}),
\label{bg}
\end{equation}
for some $\gamma\in\mathbb{R}^d$. With $\gamma=0$, we get the FEO
forecast (\ref{yfeo}).

\begin{proposition}\label{p:nmli}
If $\gamma$ reduces errors in the sense that
$\E[(\hat{Y}_{\gamma}(X_S)-Y_S)^2]<\E[(\hat{Y}_F(X_S)-Y_S)^2]$,
then the forecast
$\hat{Y}_{\gamma}$ misdirects legitimate information in the sense that
\begin{itemize}
\item[(i)] 
$\cov[\gamma^{\top}X_S,\delta^{\top}\Lambda^{\top}X_S]>0$, and
\item[(ii)] 
$\cov[\gamma^{\top}M^{\top}U(S),\delta^{\top}U(S)]>0$.
\end{itemize}
\end{proposition}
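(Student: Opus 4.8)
The plan is to reduce both (i) and (ii) to the single scalar $q:=\delta^{\top}\cov[U(S),X_S]\gamma$ and to show that the error-reduction hypothesis is equivalent to $q>0$. First I would put the FEO forecast error in projection form. By~(\ref{ylp}), $\E[Y_S]+\delta^{\top}U(S)+\beta_F^{\top}(X_S-\bar{\mu})$ is the linear projection of $Y_S$ onto $(1,U(S),X_S)$; writing $\eta$ for its residual, we have $\E[\eta]=0$ and $\cov[\eta,X_S]=0$ componentwise. Since $\hat Y_F(X_S)$ is exactly this projection with the $\delta^{\top}U(S)$ term deleted (using $\alpha_F=\E[Y_S]-\beta_F^{\top}\bar\mu$ from~(\ref{afeo})), the FEO error $R_F:=\hat Y_F(X_S)-Y_S$ satisfies the identity $R_F=-\delta^{\top}U(S)-\eta$, so in particular $\E[R_F]=0$.

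Next I would unpack the hypothesis. Since $\hat Y_{\gamma}(X_S)-Y_S=R_F+\gamma^{\top}(X_S-\bar{\mu})$, expanding the square and using $\E[R_F]=0$ gives
\[
\E\bigl[(\hat Y_{\gamma}(X_S)-Y_S)^2\bigr]-\E[R_F^2]=2\,\cov[R_F,\gamma^{\top}X_S]+\var[\gamma^{\top}X_S].
\]
If the left-hand side is negative then, because $\var[\gamma^{\top}X_S]\ge0$, necessarily $\cov[R_F,\gamma^{\top}X_S]<0$. Substituting $R_F=-\delta^{\top}U(S)-\eta$ and using $\cov[\eta,X_S]=0$ gives $\cov[R_F,\gamma^{\top}X_S]=-\delta^{\top}\cov[U(S),X_S]\gamma=-q$, so the hypothesis is precisely the statement $q>0$.

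Finally I would recognize $q$ as each of the two covariances in the conclusion. Using $\Lambda=\var[X_S]^{-1}\cov[X_S,U(S)]$ and the symmetry of $\var[X_S]$, $\cov[\gamma^{\top}X_S,\delta^{\top}\Lambda^{\top}X_S]=\gamma^{\top}\var[X_S]\Lambda\delta=\gamma^{\top}\cov[X_S,U(S)]\delta=q$, which is~(i). Using $M=\var[U(S)]^{-1}\cov[U(S),X_S]$ and the symmetry of $\var[U(S)]$, $\cov[\gamma^{\top}M^{\top}U(S),\delta^{\top}U(S)]=\gamma^{\top}M^{\top}\var[U(S)]\delta=\gamma^{\top}\cov[X_S,U(S)]\delta=q$, which is~(ii).

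I do not expect a real obstacle once the decomposition $R_F=-\delta^{\top}U(S)-\eta$ is established --- it is just~(\ref{ylp}) with one term removed, and everything after is linear-algebra bookkeeping. The two points that need attention are (a) confirming that a \emph{strict} inequality on the mean squared errors forces a strictly negative cross-covariance, which holds because the quadratic term $\var[\gamma^{\top}X_S]$ is nonnegative; and (b) keeping transposes straight when rewriting $q$, which uses only symmetry of the two variance matrices together with $\cov[U(S),X_S]=\cov[X_S,U(S)]^{\top}$.
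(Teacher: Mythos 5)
Your proof is correct, and it takes a more unified route than the paper's. The paper handles (i) and (ii) with two separate orthogonal decompositions: for (i) it writes $Y_S$ as its pooled projection onto $(1,X_S)$ plus an orthogonal error and uses $\beta_{Pool}=\beta_F+\Lambda\delta$ to reduce the error difference to $\E[\{(\gamma-\Lambda\delta)^{\top}(X_S-\bar{\mu})\}^2]-\E[\{(\Lambda\delta)^{\top}(X_S-\bar{\mu})\}^2]<0$; for (ii) it starts from the projection onto $(1,U(S),X_S)$ and further splits $X_S-\bar{\mu}$ into the within-bank part $X_S-\mu_S$ and the between-bank part $M^{\top}U(S)$, exploiting their orthogonality to complete the square in $U(S)$. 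You instead work only with the projection onto $(1,U(S),X_S)$, extract the single cross term $\cov[R_F,\gamma^{\top}X_S]=-q$ with $q=\delta^{\top}\cov[U(S),X_S]\gamma$, and then observe that both target covariances in (i) and (ii) are literally equal to $q$ once $\Lambda$ and $M$ are unwound. That observation --- that (i) and (ii) are the same scalar inequality in disguise --- is not made explicit in the paper and is a genuine economy: one decomposition and one sign argument cover both claims, at the cost of losing the paper's explicit complete-the-square identity for (ii), which exhibits exactly how the excess error splits into a within-bank and a between-bank piece. One small wording correction: the error-reduction hypothesis is not \emph{precisely} $q>0$; it implies $q>0$ (indeed $2q>\var[\gamma^{\top}X_S]$), but $q>0$ alone does not imply error reduction. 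Since your argument only uses the forward implication, this does not affect validity.
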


Recall that $\Lambda^{\top}(X_S-\bar{\mu})$ is the linear projection
of the centered bank identity variables $U(S)$ onto the centered
portfolio features $X_S-\bar{\mu}$. The condition in (i) therefore
indicates that $\gamma$ misdirects some of the legitimate information
in $X_S$ toward inferring bank identity. Thus, deviating from 
$\beta_F$ in (\ref{bg}) either increases errors or misdirects information.

Property (ii) has a similar interpretation. The term $\delta^{\top}U(S)$
is the direct influence of bank identity on losses $Y_S$. The proposition
states that any deviation $\gamma$ that reduces forecast errors
(relative to $\gamma=0$) implicitly picks up some of the information
in bank identity.

To further illustrate the contrast between FEO and SEO
consider a simple example in which some
component of $X_s$ measures exposure to community development
projects. Suppose for simplicity that this feature is uncorrelated with
other features. In the SEO forecast, the only legitimate information
from this exposure is a bank's deviation from its own mean. Years in
which a bank had above average exposure would lead to higher loss
forecasts, but the bank's average exposure to community development
would not directly inform the forecasts --- it is neutralized. In contrast,
FEO treats the bank's total exposure (mean plus deviation) as legitimate
information. Like SEO, in evaluating the impact of this exposure --- that
is, in estimating the coefficient on the exposure --- it relies only on the
within-bank variation. This ensures that the information in the exposure
is not misdirected toward inferring the bank's identity, as could happen
in the pooled regression.

\subsection{Extension of FEO for Interaction Effects}
\label{s:atefeo}

Recall that the FEO forecast controls for bank fixed effects. One might
similarly consider controlling for interactions between bank indicators
and components of the feature vectors. This leads to a family of
extensions of FEO that differ in which interactions they include. We will
show that with a full set of interactions, the extended FEO model becomes
the ATE model (\ref{ate}).

To examine this case, suppose the feature vector for each bank $s$ is
partitioned into two components, $X_s$ and $V_s$. We extend FEO by including
interactions with components of $V_s$ but not with components of $X_s$.
(Thus, in our discussion of FEO, $V_s$ was empty.) We assume that for
every bank $s$, the components of $X_s$ are uncorrelated with the
components of $V_s$. This allows a clear delineation between variables
with and without interactions. Let $\nu_s=\E[V_s]$. The bank-specific
models (\ref{ys}) now take the form
\begin{equation}
Y_s = \alpha_s + \beta_s^{\top}X_s + \gamma_s^{\top}V_s + \epsilon_s,
\label{ysz}
\end{equation}
with $\epsilon_s$ uncorrelated with $X_s$ and $V_s$.

We extend FEO to the following procedure:
\begin{itemize}
\item[1)] Project $Y_S$ linearly onto 1, $U_1(S),\dots,U_{\bar{S}-1}(S)$,
$X_S-\mu_S$, $V_S-\nu_S$, $U_1(S)V_S$, $\dots$, $U_{\bar{S}}(S)V_S$.
Let $\beta_F$ denote the coefficient of $X_S-\mu_S$ and let $\gamma_F$
denote the coefficient of $V_S-\nu_S$.
\item[2)] Set $\hat{Y}_F(x,v) = \alpha_F + \beta_F^{\top}x + \gamma_F^{\top}v$,
with $\alpha_F$ chosen so that $\E[\hat{Y}(X_S,V_S)]=\E[Y_S]$.
\end{itemize}
If $V_S$ is empty, then we know from (\ref{bfw}) that these steps do
indeed reduce to the original FEO forecast. We have included the interaction
$U_{\bar{S}}(S)V_S$ in the first step (even though we omitted $U_{\bar{S}}(S)$)
to simplify the derivation of $\gamma_F$. Including this term means that
the coefficients on the interactions $U_i(S)V_S$ are determined only up to 
constant, because $U_1(S)V_S + \cdots +U_{\bar{S}}(S)V_S=0$. These
coefficients are dropped in the second step, so their value is immaterial.

\begin{proposition}\label{p:ate}
Suppose $\var[X_s]$ and $\var[V_s]$ have full rank and $X_s$ and $V_s$
are uncorrelated, for each $s=1,\dots,\bar{S}$. Then $\beta_F$ is given
by (\ref{bfeo}) and (\ref{bfw}), and $\gamma_F = \bar{\gamma} =
\sum_sp_s\gamma_s$. In particular, if interactions with $U(S)$ are
included for all features, the FEO vector of coefficients reduces to the
average treatment effect (\ref{ate}).
\end{proposition}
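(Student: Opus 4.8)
The plan is to compute the linear projection in Step 1 directly, exploiting the uncorrelatedness structure to decouple the normal equations. First I would observe that the regressor set $\{1, U_1(S),\dots,U_{\bar S-1}(S), X_S-\mu_S, V_S-\nu_S, U_1(S)V_S,\dots,U_{\bar S}(S)V_S\}$ splits into three mutually orthogonal blocks in the $L^2$ inner product: the ``fixed effects'' block $\{1,U_i(S)\}$, the $X$-block $\{X_S-\mu_S\}$, and the $V$-interaction block $\{V_S-\nu_S, U_i(S)V_S\}$. Orthogonality of $X_S-\mu_S$ to the fixed-effects block is exactly the computation behind Proposition~\ref{p:feo}(ii); orthogonality of $X_S-\mu_S$ to the $V$-block follows because each $U_i(S)V_S$ and $V_S-\nu_S$ is, conditional on $S$, a function of $V_S$ alone, so the covariance with $X_S-\mu_S$ factors through the within-bank covariance $\cov[X_s,V_s]=0$ and the within-bank mean $\E[X_s-\mu_s]=0$. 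Once the $X$-block is orthogonal to everything else, the coefficient $\beta_F$ is just the univariate-style projection of $Y_S$ onto $X_S-\mu_S$, which is (\ref{bfw}), hence (\ref{bfeo}) by Proposition~\ref{p:feo}.

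Next I would pin down $\gamma_F$, the coefficient of $V_S-\nu_S$. Within the $V$-block, I would project $Y_S$ onto $\mathrm{span}\{V_S-\nu_S, U_1(S)V_S,\dots,U_{\bar S}(S)V_S\}$ (the constant and $X$-block having been removed). The key identity is that $\sum_i U_i(S)V_S$ (sum over all $\bar S$ banks) equals $V_S$ minus nothing useful --- more precisely, $\mathbf 1\{S=i\}V_S$ picks out $V_s$ on bank $s$, and since $\sum_i \mathbf 1\{S=i\}=1$ the interaction terms together span the ``bank-specific slope'' directions. I would argue that the span of $\{V_S-\nu_S\}\cup\{U_i(S)V_S\}$ is the same as the span of $\{\mathbf 1\{S=i\}V_S : i=1,\dots,\bar S\}$ together with a constant-in-$V$ adjustment, so that projecting $Y_S$ onto this block recovers, bank by bank, the bank-specific slope $\gamma_s$ on $V_s$ (this is where $Y_s=\alpha_s+\beta_s^\top X_s+\gamma_s^\top V_s+\epsilon_s$ and $\cov[V_s,\epsilon_s]=0$, $\cov[V_s,X_s]=0$ enter). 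The common part of the fitted value attributable to $V_S-\nu_S$ across banks --- the piece that survives after the $U_i(S)V_S$ interactions absorb the bank-to-bank variation --- then has coefficient equal to the $p_s$-weighted average $\sum_s p_s\gamma_s=\bar\gamma$. Concretely, I would write the fitted slope on $V_S$ as $\gamma_S$ (bank-dependent) decomposed as $\bar\gamma + (\gamma_S-\bar\gamma)$, note that $(\gamma_S-\bar\gamma)^\top(V_S-\nu_S)$ is spanned by the interaction terms $U_i(S)V_S$ (up to the indeterminacy noted in the text from including $U_{\bar S}(S)V_S$), and conclude that the coefficient of the non-interacted $V_S-\nu_S$ is $\bar\gamma$.

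Finally, since $V_S$ empty forces $\gamma_F$ and all interactions to disappear and leaves exactly the FEO setup, the case ``interactions included for all features'' is the mirror image: take $X_S$ empty and $V_S$ the full feature vector, so Step 1 projects onto $\{1,U_i(S), V_S-\nu_S, U_i(S)V_S\}$, and the argument just given yields slope $\bar\gamma=\sum_s p_s\beta_s=\beta_{\mathit{ATE}}$ on the (now sole) feature block, which is (\ref{ate}). The intercept statement $\alpha_F=\E[Y_S]-\beta_F^\top\bar\mu-\gamma_F^\top\bar\nu$ (matching $\E[\hat Y(X_S,V_S)]=\E[Y_S]$) is immediate from taking expectations, using $\E[U_i(S)]=0$.

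The main obstacle I anticipate is the bookkeeping around the $V$-interaction block: showing rigorously that after removing the constant, the fixed effects, and the $X$-block, the projection of $Y_S$ onto $\{V_S-\nu_S\}\cup\{U_i(S)V_S : i=1,\dots,\bar S\}$ assigns coefficient exactly $\bar\gamma$ to $V_S-\nu_S$. One has to handle the rank deficiency introduced by including the redundant $U_{\bar S}(S)V_S$ (so the interaction coefficients are only determined modulo a constant shift, as the text flags), verify that this indeterminacy does not leak into the coefficient of $V_S-\nu_S$, and correctly account for the cross-covariances $\cov[U_i(S)V_S, V_S-\nu_S]$ and $\cov[U_i(S)V_S, U_j(S)V_S]$, which involve the bank-level second moments of $V_s$ (not just $\var[V_s]$). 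The cleanest route is probably to change basis within the $V$-block to $\{\mathbf 1\{S=i\}(V_S-\nu_i) : i\}$ plus $\mathbf 1\{S=i\}$ combinations already in the fixed-effects block, which diagonalizes the relevant Gram matrix over banks and makes the bank-by-bank recovery of $\gamma_s$ transparent; then reassembling in the original basis gives the $\bar\gamma$ on $V_S-\nu_S$. Everything else is routine normal-equation algebra of the kind already used for Propositions~\ref{p:feo}--\ref{p:seo}.
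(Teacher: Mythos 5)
Your proposal is essentially the paper's argument: both rest on showing that $X_S-\mu_S$ is uncorrelated with every other regressor (the constant, the $U_i(S)$, $V_S-\nu_S$, and the interactions $U_i(S)V_S$), which reduces $\beta_F$ to the projection of $Y_S$ onto $X_S-\mu_S$ alone and hence to (\ref{bfw})--(\ref{bfeo}); and both then extract $\gamma_F=\bar{\gamma}$ by decomposing the $V$-dependent part of $Y_S$ as $\bar{\gamma}^{\top}(V_S-\nu_S)$ plus a remainder absorbed by the interactions and fixed effects, with uniqueness from the full rank of $\var[V_s]$. One inaccuracy to flag: your opening claim that the regressors split into \emph{three mutually orthogonal blocks} is false as stated. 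The interaction terms are not orthogonal to the fixed-effects block or even to the constant --- $\E[U_j(S)V_S]=p_j(\nu_j-\bar{\nu})\neq 0$ and $\cov[U_i(S),U_j(S)V_S]\neq 0$ in general --- so ``project $Y_S$ onto the $V$-block with only the constant and $X$-block removed'' would give the wrong coefficients if taken literally. Your own remedy (re-basing the $V$-block as $\{\mathbf{1}\{S=i\}(V_S-\nu_i)\}$, pushing the leftover $\mathbf{1}\{S=i\}\nu_i$ pieces into the fixed-effects span, and then reassembling) repairs this and yields $\bar{\gamma}$ correctly. The paper sidesteps the issue with a slightly cleaner device: it writes $Y_S=\beta_S^{\top}(X_S-\mu_S)+\tilde{Y}+\epsilon_S$ and shows by direct algebra that $\tilde{Y}$ lies \emph{exactly} in the span of the non-$X$ regressors with coefficient $\bar{\gamma}$ on $V_S-\nu_S$, so no orthogonalization within that block is ever needed.
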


This result allows us to interpret the ATE forecast as a version of the FEO 
forecast that removes the effects of certain interactions. As a convex
combination of the bank-specific coefficients, the ATE coefficient retains
some of the advantages of the FEO coefficient, particularly for the
cross-bank effects studied in Section~\ref{s:cross}.

However, we do not see a compelling case for controlling for interactions
between bank identity and portfolio features. When we control for the
bank-identity variables in FEO, we are ensuring that the industry $\beta$
for legitimate features is not affected by heterogeneity in the banks'
constants (the fixed effects). This reasoning does not necessarily extend
to removing the influence of heterogeneity in exposures to portfolio features.

\section{Nonlinear Models}
\label{s:nonlinear}

Most of the ideas developed in previous sections for linear regressions extend to generalized
linear models through a transformation of the response variable. For example, instead of
working with the loss rate $Y_s$, we could specify a linear model for its logit transformation
$\log(Y_s/(1-Y_s))$.

But we can also extend ideas from previous sections to more fully nonlinear models.
Replace the mixture model in (\ref{ymix}) with a general representation of the form
\begin{equation}
Y_S = g(S,X_S) + \epsilon_S, \quad \E[\epsilon_S|S,X_S]=0.
\label{ynon}
\end{equation}
In other words, the loss for bank $s$ is given by $g(s,X_s)+\epsilon_s$.
We assume that $g(S,X_S)$ and $\epsilon_S$ are square-integrable.

The counterpart of the pooled estimate becomes
$$
f_{Pool}(x) \equiv \E[Y_S|X_S=x] = \E[g(S,X_S)|X_S=x].
$$
This rule satisfies equal treatment --- it has no functional dependence on $S$
--- but we argued earlier (in Section~\ref{s:cond}) that this forecast implicitly
uses the information in the portfolio features $x$ to infer bank identity.

To introduce a nonlinear version of the FEO forecast, we will make the relatively
modest assumption that (\ref{ynon}) admits a decomposition of the form
\begin{equation}
Y_S = f_0 + f_1(S) + f_2(X_S) + \epsilon, \quad \E[\epsilon|S]=\E[\epsilon|X_S]=0,
\label{ygam}
\end{equation}
with $f_0 = \E[Y_S]$, $f_1:\{1,\dots,\bar{S}\}\to\R$, $f_2:\R^d\to\R$, and
\begin{eqnarray}
\E[Y_S - f_0 - f_1(S)|X_S] &=& f_2(X_S)
\label{res1} \\
\E[Y_S - f_0 - f_2(X_S)|S] &=& f_1(S),
\label{res2}
\end{eqnarray}
$\E[f^2_1(S)]<\infty$, $\E[f^2_2(X_S)]<\infty$, and
$$
\E[f_1(S)] = \E[f_2(X_S)]=0.
$$

Equations (\ref{res1})--(\ref{res2}) are population versions of the backfitting
algorithm in Hastie and Tibshirani \cite{hastie}, which is a special case of the
alternating conditional expectations algorithm of Breiman and Friedman \cite{ace}.
Given an initial choice of $f_1$ (and known $f_0$), (\ref{res1}) defines an initial
choice of $f_2$ through the regression of the residual $Y_S - f_0 - f_1(S)$ on
$X_S$. Equation (\ref{res2}) then defines an updated choice of $f_1$. The
algorithm iterates over (\ref{res1}) and (\ref{res2}). In writing (\ref{ygam}), we
are positing that this algorithm has a fixed point. Convergence of the backfitting
algorithm is established under widely applicable conditions in Ansley and Kohn \cite{anskoh}.

We now introduce
\begin{equation}
\hat{Y}_F(x) = f_0 + f_2(x)
\label{yfnon}
\end{equation}
as a nonlinear counterpart of the FEO forecast. We justify this interpretation
by showing that $\hat{Y}_F$ exhibits properties that are nonlinear counterparts
of the key properties of the FEO forecast in Section~\ref{s:feo} and~\ref{s:unified}.
To state the result, consider forecasts of the form
\begin{equation}
\hat{Y}_{\gamma}(x) = f_0 + f_2(x) + \gamma(x),
\label{bgnon}
\end{equation}
for some $\gamma:\mathbb{R}^d\to\mathbb{R}$ with $\E[\gamma(X_S)^2]<\infty$.

\begin{proposition}\label{p:non}
The nonlinear FEO forecast (\ref{yfnon}) satisfies
\begin{equation}
\cov[\hat{Y}_F(X_S)-Y_S,X_S - \E[X_S|S]]=0.
\label{covnon1}
\end{equation}
For $\hat{Y}_{\gamma}$ as in (\ref{bgnon}), if $\gamma$ reduces errors,
in the sense that $\E[(\hat{Y}_{\gamma}(X_S)-Y_S)^2]
<  \E[(\hat{Y}_F(X_S)-Y_S)^2]$, then it misdirects legitimate information,
in the sense that
\begin{equation}
\cov[\gamma(X_S), \E[f_1(S)|X_S]]>0
\label{covnon2}
\end{equation}
and
\begin{equation}
\cov[\E[\gamma(X_S)|S], f_1(S)]>0.
\label{covnon3}
\end{equation}
\end{proposition}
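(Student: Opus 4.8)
The plan is to reduce every claim to the algebraic identity
\[
\hat{Y}_F(X_S) - Y_S = -\bigl(f_1(S) + \epsilon\bigr),
\]
which is immediate from the decomposition (\ref{ygam}) and the definition $\hat{Y}_F(x) = f_0 + f_2(x)$ in (\ref{yfnon}). Once this is in hand, each of (\ref{covnon1})--(\ref{covnon3}) follows from the orthogonality properties $\E[\epsilon|S] = \E[\epsilon|X_S] = 0$ and the centering $\E[f_1(S)] = \E[f_2(X_S)] = 0$ in (\ref{ygam}), together with repeated use of the tower property.

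For (\ref{covnon1}), I would write the covariance as $-\cov[f_1(S) + \epsilon,\, X_S - \E[X_S|S]]$ and handle the two summands separately. Conditioning on $S$ gives $\E[f_1(S)(X_S - \E[X_S|S])] = \E[f_1(S)(\E[X_S|S] - \E[X_S|S])] = 0$ for the $f_1(S)$ piece. For the $\epsilon$ piece I would split $X_S - \E[X_S|S]$ into $X_S$ and $m(S) := \E[X_S|S]$: conditioning on $X_S$ kills $\E[\epsilon X_S]$ via $\E[\epsilon|X_S] = 0$, and conditioning on $S$ kills $\E[\epsilon\,m(S)]$ via $\E[\epsilon|S] = 0$; since $\E[\epsilon]=0$ the raw moment equals the covariance.

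For (\ref{covnon2}) and (\ref{covnon3}), I would compute the change in mean-squared error directly. Using $\hat{Y}_{\gamma}(X_S) - Y_S = \gamma(X_S) - f_1(S) - \epsilon$ from (\ref{bgnon}) and expanding the squares yields
\[
\E[(\hat{Y}_{\gamma}(X_S) - Y_S)^2] - \E[(\hat{Y}_F(X_S) - Y_S)^2] = \E[\gamma(X_S)^2] - 2\,\E[\gamma(X_S)(f_1(S) + \epsilon)].
\]
Since $\E[\gamma(X_S)\epsilon] = \E[\gamma(X_S)\E[\epsilon|X_S]] = 0$, the error-reduction hypothesis becomes $\E[\gamma(X_S)^2] < 2\,\E[\gamma(X_S)f_1(S)]$; because a strict reduction forces $\gamma(X_S)$ to be not almost surely zero, the left side is strictly positive and hence $\E[\gamma(X_S)f_1(S)] > 0$. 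Finally I would rewrite this one scalar in two ways: conditioning on $X_S$ (and using $\E[f_1(S)] = 0$) gives $\E[\gamma(X_S)f_1(S)] = \cov[\gamma(X_S),\, \E[f_1(S)|X_S]]$, which is (\ref{covnon2}); conditioning on $S$ (and using the same centering) gives $\E[\gamma(X_S)f_1(S)] = \cov[\E[\gamma(X_S)|S],\, f_1(S)]$, which is (\ref{covnon3}).

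I do not expect a serious obstacle, since the argument is the nonlinear transcription of Propositions \ref{p:feo}(ii) and \ref{p:nmli} with linear projections replaced by conditional expectations. The one delicate point is bookkeeping on which conditioning is legitimate: unlike in (\ref{ynon}), the residual $\epsilon$ in the additive decomposition (\ref{ygam}) is \emph{not} assumed to satisfy $\E[\epsilon|S,X_S]=0$, only $\E[\epsilon|S]=0$ and $\E[\epsilon|X_S]=0$ separately, so each cross term involving $\epsilon$ must be organized around the single variable that annihilates it, and the mean terms must be dropped using $\E[f_1(S)]=0$ and $\E[f_2(X_S)]=0$ rather than any joint statement.
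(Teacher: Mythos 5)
Your proposal is correct and follows essentially the same route as the paper's proof: both reduce everything to $\hat{Y}_F(X_S)-Y_S = -(f_1(S)+\epsilon)$, kill the $\epsilon$ cross-terms by conditioning, and derive $\E[\gamma(X_S)f_1(S)]>0$ from the expansion of the squared error before rewriting that scalar as the two covariances. If anything, your handling of the cross term $\E[(\gamma(X_S)-f_1(S))\epsilon]$ is more careful than the paper's (which conditions the whole expression on $S$ even though $\gamma(X_S)$ is not $S$-measurable), since you correctly annihilate $\E[\gamma(X_S)\epsilon]$ via $\E[\epsilon|X_S]=0$ and $\E[f_1(S)\epsilon]$ via $\E[\epsilon|S]=0$ separately.
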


Property (\ref{covnon1}) parallels the condition in row (iv) of
Table~\ref{t:uni} that characterizes the FEO forecast in the linear setting. 
It says that the forecast error $\hat{Y}_F(X_S)-Y_S$ is uncorrelated
with the legitimate information $X_S - \E[X_S|S]$, which is the component
of $X_S$ orthogonal to bank identity $S$.
Properties (\ref{covnon2})--(\ref{covnon3}) parallel conditions (i) and (ii)
in Proposition~\ref{p:nmli}.
In particular, in (\ref{covnon2}), $\E[f_1(S)|X_S]$ is the 
expected impact of bank identity inferred from portfolio features; 
the positive covariance with $\gamma(X_S)$
thus indicates that $\gamma$ misdirects some of the information in $X_S$
to inferring $S$.

We briefly contrast our FEO forecast in (\ref{yfnon}) with an alternative approach
to extending fairness concerns to complex, nonlinear models. The alternative
seeks to strip $X_S$ of any protected attributes before a model is estimated.
Examples of this general approach include Gr{\H u}new{\H a}lder and Khaleghi \cite{grune}
and Madras et al.\ \cite{madras}. This approach is primarily concerned with ensuring 
demographic parity: if a model has no access --- not even indirect access ---
to a protected attribute, its forecasts will be independent of the attribute. But we
argued previously that demographic parity is too strong a condition for our setting.
Our FEO forecast in (\ref{yfnon}) treats all the information in $X_S$ as legitimate
information --- even elements that could help infer $S$ --- but it ensures that the
information is not in fact misdirected to infer $S$.

\section{Empirical Evidence}\label{sec:experiment}

In this section, we document empirical evidence of heterogeneity in bank-specific
models of loss rates, and we examine the implications of this heterogeneity for
the choice of an industry-wide model. We find strong evidence of statistically
significant differences in model parameters across banks. These differences can
lead to material differences between pooled and FEO coefficients in an industry model.

We must emphasize, however, that our investigation is constrained by the very
limited information made publicly available by banks about the risk characteristics
and losses in their loan portfolios. The Federal Reserve has far more granular
information about banks' loans and losses. Our results can therefore provide only a
rough indication of the impact of bank heterogeneity in the Fed's stress tests.

\subsection{Data}

We use two types of data and data sources: historical
macroeconomic data and loan information for individual banks. 

\subsubsection{Macroeconomic Data}
\label{s:macro}

We use data on seven of the macro variables used in the Federal Reserve's stress tests:
real disposable income growth, real GDP growth, house price index level, inflation rate,
unemployment rate, Dow Jones total stock index level, and the Treasury spread. The
Federal Reserve provides historical data on its website for all variables used in forming
stress scenarios, including these. We use the values reported by the Fed for these
variables in the June 2020 stress test; these values run from 1990 through 2019.

We aggregate these variables into a single macro variable by taking the first principal
component of their correlation matrix. Table~\ref{tbl:pca} shows the corresponding
loadings. We see that an increase in the principal component corresponds to decreases
in income growth and GDP growth and an increase in unemployment, suggesting that
this composite variable serves as a reasonable measure of overall economic conditions.
Figure~\ref{fig:pca} plots the level of this variable over time and shows a sharp climb
around 2008 and 2020.%
\footnote{The loadings in Table~\ref{tbl:pca} are calculated using data through 2019,
and we use these loadings to extend PC1 through the end of 2021. When we include
the COVID period in the calculation of the principal components, PC1 becomes
harder to interpret. For example, the coefficients for income growth and
unemployment have the same sign.}

\begin{table}[H]
\centering
\begin{tabular}{lr}
%\hline
Macro Factor &  PC1 Loading \\
\hline
Real disposable income growth &  -0.229 \\
Real GDP growth               &  -0.525 \\
Change House Price Index      &  -0.467 \\
CPI inflation rate            &  -0.079 \\
Change unemployment           &   0.529 \\
Change Dow                    &  -0.293 \\
Change Treasury Spread        &   0.287 \\
\hline
\end{tabular}
\caption{Loadings of first principal component on macro variables.}\label{tbl:pca}
\end{table}

\begin{figure}[H]
    \centering
    \includegraphics[width = 0.6\textwidth]{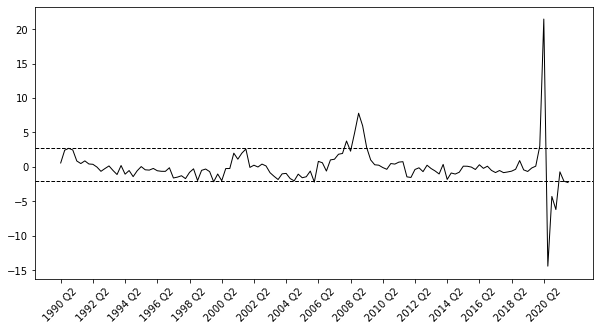}
    \caption{First principal component (PC1) of macro variables from 1990 Q2 to 2021 Q4. 
    The dashed lines correspond to the 5th and 95th percentiles of PC1.}\label{fig:pca}
\end{figure}

\subsubsection{Loan Information for Individual Banks.} 

Bank holding companies publicly report financial information quarterly
through the Federal Reserve's form Y-9C. We use these filings to
collect information on four loan types that are treated separately
in the Fed's stress tests: credit cards, first lien mortgages,
commercial real estate loans, and commercial and industrial loans.
For each category, each bank, and each quarter, we collect loan balances,
charge-offs, recoveries, and total amounts past due serving as
our proxy measures of loan portfolio risk.

We collect this data from 2001 to 2021 for the thirty-five 
largest banks by total assets (as of December 2021). 
The banks are listed in Table~\ref{t:names}.
The stress test focuses on adverse economic conditions;
we weight each observation by
the level of stress in each quarter and each bank's load balance:
for each bank $s$, each quarter $t$, and loan category $p$, we weight the
observations by
\begin{equation}\label{eqn:weight}
    w_{s,t}^p = e^{\lambda \textit{MacroPC}_t} \times \textit{Loan}_{s,t}^p,
\end{equation}
where $\textit{Loan}_{s,t}^p$ is the size of the loan portfolio of type $p$
for bank $s$ in quarter $t$.
We choose $\lambda$ so that for the same loan level, 
the worst economic quarter (as measured by $\textit{MacroPC}_t$)
is given twice the weight as the best quarter.

We would prefer to conduct our analysis using data from stress periods only,
but that would leave us with too few observations. 
Weighting by the level of stress in a quarter allows us to approximate the effect
of conditioning on stress while making greater use of the available data.
This approach relies on the assumption that data from non-stressful periods
is relevant to forecasting losses in periods of stress.

We merger-adjust all bank data. For example, Truist Financial, one of the banks
in Table~\ref{t:names}, was formed from the 2019 merger of BB\&T and SunTrust,
so our data for Truist in earlier years combines data from those two banks. We
repeat this process as we work backwards in time. We obtain information on
mergers and acquisitions from the Federal Financial Institutions Examination
Council website. (We have also run our analysis without merger-adjusting the data;
doing so does not change our conclusions and generally increases heterogeneity
across banks.)

In each loan category, we calculate a loss rate (net charge-off rate) for each bank
$s$ and each quarter $t$ as the ratio
\begin{equation}\label{eqn:normal}
    \textit{LossRate}_{s,t} =  \frac{\textit{Charge-offs}_{s,t} - 
    \textit{Recoveries}_{s,t}}{\textit{Total Loans in Category}_{s,t-1}}.
\end{equation}

This measure is commonly used in stress testing; see, for example, 
Guerrieri and Welch \cite{guewel}, Hirtle et al.\ \cite{class}, 
and Kapinos and Mitnik \cite{kapmit}.
We similarly normalize the amounts past due to get a 
$\textit{PastDueRate}_{s,t}$ for each bank-quarter.
We remove values less than $-50$\% or greater than $50$\% of 
\textit{LossRate} and values greater than $20$\% of \textit{PastDueRate}.
We winsorize \textit{PastDueRate} at the upper and lower 5\% levels.
To attain a mostly balanced panel for more reliable estimates, 
in each loan category we include only banks with at least 18 years (72 quarters)
of history from 2001 Q1 to 2021 Q4.

Table~\ref{tbl:descriptive} shows descriptive statistics for these variables. 
Loss rates and past due rates are shown by loan category --- credit cards (CC),
first liens (FL), commercial real estate (CRE), and commercial and industrial (CI).
Columns 2--4 of the table summarize time-averaged values across banks.
Columns 5--8 summarize observations across all banks and quarters.

\begin{table}[H]
    \centering
\begin{tabular}{l|rrr|rrrr}
\toprule
{} & \multicolumn{3}{|c|}{bank averages} & \multicolumn{4}{|c}{all observations} \\
{} &  min &  mean &  max &  lower 5\% &  mean &  upper 5\% &   std \\
\midrule
Loss Rate: CC      &      -0.20 &            2.50 &       3.36 &      0.31 &  3.00 &      5.88 &  2.01 \\
Loss Rate: FL      &       0.01 &            0.22 &       0.66 &     -0.01 &  0.27 &      1.03 &  0.51 \\
Loss Rate: CRE     &       0.06 &            0.19 &       0.46 &     -0.04 &  0.16 &      1.00 &  0.41 \\
Loss Rate: CI      &       0.05 &            0.41 &       1.00 &      0.00 &  0.42 &      1.56 &  0.52 \\ \hline
Past Due Rate: CC  &       1.10 &            2.90 &       4.23 &      1.06 &  3.30 &      5.74 &  1.53 \\
Past Due Rate: FL  &       0.71 &            4.01 &       8.02 &      0.45 &  6.20 &     11.80 &  4.60 \\
Past Due Rate: CRE &       1.07 &            2.15 &       3.80 &      0.36 &  2.12 &      6.52 &  1.88 \\
Past Due Rate: CI  &       0.05 &            1.65 &       2.85 &      0.05 &  1.74 &      3.97 &  1.23 \\
\bottomrule
\end{tabular}
    \caption{Descriptive statistics in percent. Columns 2--4 are calculated from banks' time averages, and columns 5--8 are calculated from all observations, with mean and standard deviation are stressed time and loan balance weighted.}
    \label{tbl:descriptive}
\end{table}

Figure \ref{fig:past_due_het_w} plots the mean past due rate
($\pm1.96$ standard errors) for each bank in each loan category.
The banks are identified by their stock tickers. 
The figure illustrates substantial heterogeneity across banks in their
loan portfolios. 
For example, Bank of America (BAC) has among the highest past due rates
for credit card loans,
but in the commercial real estate category it has among the lowest.
This type of pattern is consistent with the idea that
banks have different areas of specialization and may target different markets.

The widths of the bars in Figure \ref{fig:past_due_het_w} 
show differences across loan categories
and banks in the volatility of their past due rates. 
We again observe significant heterogeneity among different banks.

\begin{figure}[H]
    \centering
    \includegraphics[width = 1.0\textwidth]{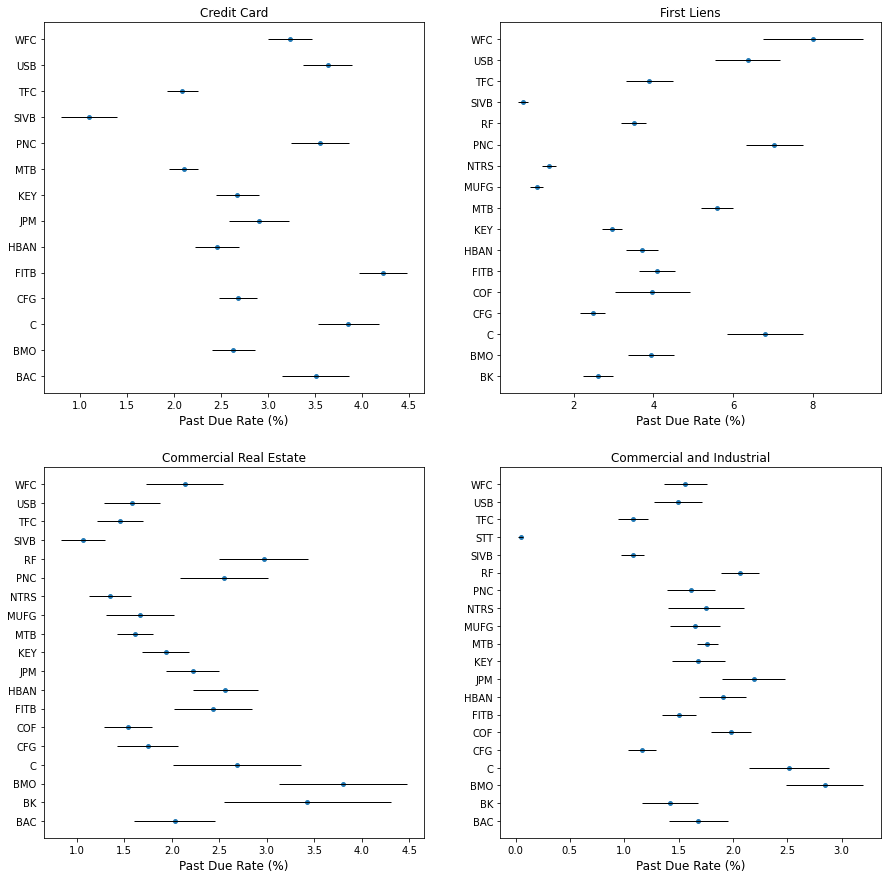}
    \caption{Past due rates (winsorized) by bank and loan category. The dots show mean values and each horizontal bar corresponds to $\pm 1.96$ standard errors.} 
    \label{fig:past_due_het_w}
\end{figure}

\subsection{Heterogeneity in Slopes and Intercepts}
\label{s:heterogeneity}

We use the bank data to approximate our
theoretical framework through the specification
\begin{equation}
\textit{LossRate}_{s,t} = \alpha_s + \beta_s \textit{PastDueRate}_{s,t-l} 
+ \gamma_s \textit{MacroPC}_{t-l}
+ \epsilon_{s,t},
\label{pdreg}
\end{equation}
for bank $s$ in quarter $t$, where \textit{MacroPC} is the principal component of the
macro variables introduced in Section~\ref{s:macro}.
(In Section~\ref{appendix:robust-precovid}, we also include allowances in (\ref{pdreg})
as a robustness check.)
The lag $l$ is four quarters to mimic the stress testing's forward-looking framework. 
We estimate separate coefficients for each of the four loan categories, for each bank,
and the observations are loan balance and stress weighted using (\ref{eqn:weight}). 
Because these are bank-specific regressions, we do not add bank-specific controls.

For each loan category, we want to test for heterogeneity in parameters across banks.
When we test for heterogeneity, the null hypothesis states that slopes for all banks are equal,
\begin{equation}\label{test:beta}
    H_0: \beta_1 = \cdots = \beta_{\bar{S}},
\end{equation}
or that the intercepts are equal,
\begin{equation}\label{test:alpha_beta}
    H_0: \alpha_1 = \cdots = \alpha_{\bar{S}}.
\end{equation}
The alternative hypothesis in each case states that 
the indicated parameters are not identical across banks.
We will run these tests with different subsets of the variables in (\ref{pdreg}) included
and interpret the coefficients in (\ref{test:beta}) accordingly.

To test these hypotheses for a particular loan category, 
let $X_s$ be the $n_s$ by $k$ data matrix for bank $s$,
where $n_s$ is the number of observations for bank $s$ in the loan category,
and $k=$1 or 2 is the number of variables included on the right side of (\ref{pdreg}).
Let $\tilde{X}_s = (1, X_s)$ be $X_s$ concatenated with a column of 1s,
and let $X^*$ be the diagonal block matrix 
$X^* = \text{diag}(\tilde{X}_1, .., \tilde{X}_S)$. 
Let $\theta^* = (\alpha_1, \beta_1^\top, ..., \alpha_S, \beta_S^\top)^\top, 
\epsilon^*=(\epsilon_1, ..., \epsilon_S)$, 
where $\epsilon_s$ is a column vector of length $n_s$. 
Our unrestricted model can be written as 
\begin{equation}\label{eqn:SUR_rep}
    Y = X^* \theta^* + \epsilon^*,
\end{equation}
and the restrictions in (\ref{test:beta}) and (\ref{test:alpha_beta}) impose
linear constraints on the parameter $\theta^*$.

We apply the Wald test to test linear constraints on $\theta^*$ 
in (\ref{eqn:SUR_rep}) under various assumptions
on the error covariance matrix. We consider (i) bank-clustered errors, 
which allows correlation in errors across quarters for each bank, but no correlation across banks; 
(ii) time-clustered errors, 
which allows correlation in errors across banks in each quarter, but no correlation across time.

Table~\ref{tab:full_het_1y} reports $p$-values for the tests when
different subsets of variables are included on the right side of (\ref{pdreg}),
for a forecast horizon of one year.
All tests indicate strong evidence of heterogeneity in the
intercepts, the coefficients for past due rates, and macro variables.

Next we examine the impact of heterogeneity. 
Table~\ref{tbl:1y_reg} compares pooled and FEO coefficients for \textit{PastDueRate} 
using a one-year lag when \textit{MacroPC} is and is not included in (\ref{pdreg}).
We estimate $\beta_{Pool}$ in a pooled panel regression, 
and $\beta_F$ in a panel regression with bank fixed effects included. 
Both regressions are weighted using (\ref{eqn:weight}).

In Table~\ref{tbl:1y_reg}, the columns labeled ``diff'' show the difference
in estimates $\beta_F - \beta_{Pool}$, serving as 
a measure of the impact of addressing heterogeneity
in choosing an industry model.
The table also shows $p$-values for tests of
$H_0: \beta_{Pool} = \beta_F$ (or $\gamma_{Pool} = \gamma_F$ in the case
of the macro variable). 
To calculate these $p$-values, we estimate the pooled
and FEO models simultaneously, as follows.
Let $\tilde{X}_{Pool} = (1, X)$ be $X$ concatenated with a column of 1s,
and let $\tilde{X}_F = (U, X) $ be $X$ concatenated with columns
corresponding to centered bank identity variables $U$.
Let $X_*$ be the diagonal block matrix $X_* = \text{diag}(\tilde{X}_{Pool}, \tilde{X}_F)$
and $\theta_* = (\alpha_{Pool}, \beta_{Pool}^\top, \delta_1, \delta_2, \dots, \delta_{\bar{S}}, \beta_F^\top)^\top$. 
Then we have 
\[
Y = X_* \theta_* + \epsilon_*,
\]
and testing $H_0$ is equivalent to testing linear constraints 
on the parameters $\theta_*$,
for which we apply the Wald test.
The macro variable captures common variability over time,
so we cluster errors by bank.

The results in Table~\ref{tbl:1y_reg} show that 
the differences between the pooled and FEO estimates
are significant in three of the four loan categories.
Moreover, the differences can be material.
For example, for first lien loans in the top panel of Table~\ref{tbl:1y_reg},
an absolute difference of 0.015 translates to a relative difference of $24\%$ 
($=|\beta_{Pool} - \beta_{F}|/\beta_{F}$),
which can have a large relative impact on predicted losses.
From Table~\ref{tbl:descriptive}, we see that 
the average bank has a past due rate of 4.01\%
on FL loans. The difference $0.015 \times 4.01\% = 0.060\%$ 
is 27\% of the average FL loss rate of 0.22\% in Table~\ref{tbl:descriptive}.
The additional capital required to offset the higher predicted loss rate would
be 27\% of the capital required to offset the average loss rate.

To further analyze the differences in forecasts,
we consider the relative prediction differences given by 
$|(\hat{Y}_{Pool}(x) - \hat{Y}_{F}(x))/Y(x)|$, where the denominator
is the observed loss rate.
Table~\ref{tbl:pred_diff} reports the mean and median of these 
relative differences
in each of the four loan categories, for all banks in all quarters. 
In each loan category, the mean relative difference is large; moreover, in each
case the mean is appreciably larger than the median, reflecting the presence
of some very large relative prediction differences, which could be particularly important.
To illustrate the differences,
Figure~\ref{fig:fitted} plots histograms of 
the FEO and pooled fitted values for Citigroup's first lien loans.
The comparison shows, in particular, that the frequency of the largest and smallest
predicted loss rates differ between the two methods.

Our main results use data through 2021. As a robustness check, we
run our analysis using data through 2019. This truncation serves two purposes.
It ensures that our conclusions are not driven by a few extreme values
during the COVID period 2020--2021, and it accounts for a change in
how banks measure allowances (the Current Expected Credit Losses
methodology) beginning at the end of 2019. 
We also consider including banks' allowances for losses as another proxy 
for the portfolio risk.
Because allowances are not consistently reported separately by loan
category, we use banks' total allowances across all loan types.
That is, for each bank-quarter we calculate 
$\textit{AllowanceRate}_{s,t}$ using (\ref{eqn:normal}),
but normalizing by the total loans in all categories.
We repeat our tests with pre-COVID data and the 
addition of allowance rates. 
The results, reported in Section~\ref{appendix:robust-precovid}, 
are similar to those reported in this section.

Our results document evidence of bank heterogeneity and
its potential impact on loss forecasts.
We have not sought to identify the drivers of heterogeneity;
that would require a very different investigation,
particularly since some of the most interesting potential drivers
(a bank's management, the quality of its IT systems) are
difficult to measure. Guerrieri and Harkrader \cite{guehar},
for example, find that bank-specific factors account for a sizable
fraction of the variation in bank performance.
But they measure the bank-specific component as the residual
in a regression that removes the effect of macroeconomic and banking-wide factors;
they do not identify specific bank features that influence performance.
Some examples of bank features used as controls in stress testing models
can be found in Hirtle et al.\ \cite{class},
Kapinos and Mittnic \cite{kapmit}, and Kupiec \cite{kupiec}.
These are balance sheet features, and they are usually found to be
more relevant to forecasting revenues than losses.
We discuss revenue models in Section~\ref{appendix:ppnr}.

\begin{table}[]
    \centering
    \begin{tabular}{l|rrrrrrrrrrrr}
\toprule
Covariance & \multicolumn{4}{|c|}{$\alpha$} & 
\multicolumn{4}{|c|}{$\beta_{PDR}$} & \multicolumn{4}{|c}{$\gamma$}\\
Estimation &    \multicolumn{1}{|c}{CC} &   FL &  CRE &     CI &     \multicolumn{1}{|c}{CC} &   FL &  CRE &     CI &      \multicolumn{1}{|c}{CC} &     FL &    CRE &     CI \\
\midrule
bank clustered &  0.00 &  0.00 &  0.00 &  0.00 &  0.00 &  0.00 &  0.00 &  0.00 &       &       &       &       \\
time clustered &  0.00 &  0.00 &  0.00 &  0.00 &  0.00 &  0.00 &  0.00 &  0.00 &       &       &       &       \\ \hline
bank clustered &  0.00 &  0.00 &  0.00 &  0.00 &  0.00 &  0.00 &  0.00 &  0.00 &  0.00 &  0.00 &  0.00 &  0.00 \\
time clustered &  0.00 &  0.00 &  0.00 &  0.00 &  0.00 &  0.00 &  0.00 &  0.00 &  0.00 &  0.00 &  0.00 &  0.00 \\
\bottomrule
\end{tabular}
    \caption{$P$-values for heterogeneity tests. In each loan category, 
    the first two rows are for a model with \textit{PastDueRate} only, and the
    last two rows are for a model with \textit{PastDueRate} and \textit{MacroPC}.
    The two rows for each model show
    results under alternative assumptions on the error covariance matrix.}
    \label{tab:full_het_1y}
\end{table}

\begin{table}[]
\centering
\begin{tabular}{l|rrrlrrrl}
\toprule
Loan & \multicolumn{4}{|c|}{\textit{Past Due Rate}} & \multicolumn{4}{|c}{\textit{Macro PC}} \\
Type &  $\beta_{Pool}$ &  $\beta_F$ & diff &   \multicolumn{1}{c|}{$p$-value} & 
$\gamma_{Pool}$ & $\gamma_{F}$ & diff &             $p$-value \\
\midrule
 CC & 0.782 & 0.833 & -0.050 & 0.009$^{***}$ &       &       &        &       \\
 FL & 0.047 & 0.062 & -0.015 & 0.001$^{***}$ &       &       &        &       \\
CRE & 0.131 & 0.129 &  0.002 & 0.464         &       &       &        &       \\
 CI & 0.208 & 0.219 & -0.011 & 0.040$^{**}$  &       &       &        &       \\ \hline
 CC & 0.774 & 0.823 & -0.049 & 0.009$^{***}$ & 0.040 & 0.037 &  0.003 & 0.000$^{***}$ \\
 FL & 0.046 & 0.061 & -0.015 & 0.001$^{***}$ & 0.018 & 0.019 & -0.001 & 0.176 \\
CRE & 0.131 & 0.129 &  0.002 & 0.485 & 0.011 & 0.011 &  0.000 & 0.813 \\
 CI & 0.205 & 0.216 & -0.010 & 0.044$^{**}$  & 0.022 & 0.022 &  0.000 & 0.127 \\
\bottomrule 
\multicolumn{9}{r}{$^{*}$p$<$0.1; $^{**}$p$<$0.05; $^{***}$p$<$0.01} \\
\end{tabular}
\caption{Comparison of coefficients for one-year forecasts. We regress \textit{LossRate} on 
(i) \textit{PastDueRate} and (ii) \textit{PastDueRate} and \textit{MacroPC}.
Difference is calculated as $\beta_{Pool}-\beta_{F}$. $p$-values 
test
$H_0: \beta_{Pool} = \beta_F$ for \textit{PastDueRate} or 
$H_0: \gamma_{Pool} = \gamma_F$ for \textit{MacroPC}.}
\label{tbl:1y_reg}
\end{table}

\begin{table}[]
\centering
\begin{tabular}{lrrrr}
\toprule
  &  CC &  FL &  CRE &  CI \\
\midrule
     mean &      6.1 &        510.1 &                    52.0 &                       24.0 \\
   median &      2.4 &         89.5 &                     4.2 &                        3.7 \\
\bottomrule
\end{tabular}
\caption{Mean and median of relative prediction differences between the pooled and the FEO estimates (in \%).}
\label{tbl:pred_diff}
\end{table}

\begin{figure}
    \centering
    \includegraphics[width=0.6\linewidth]{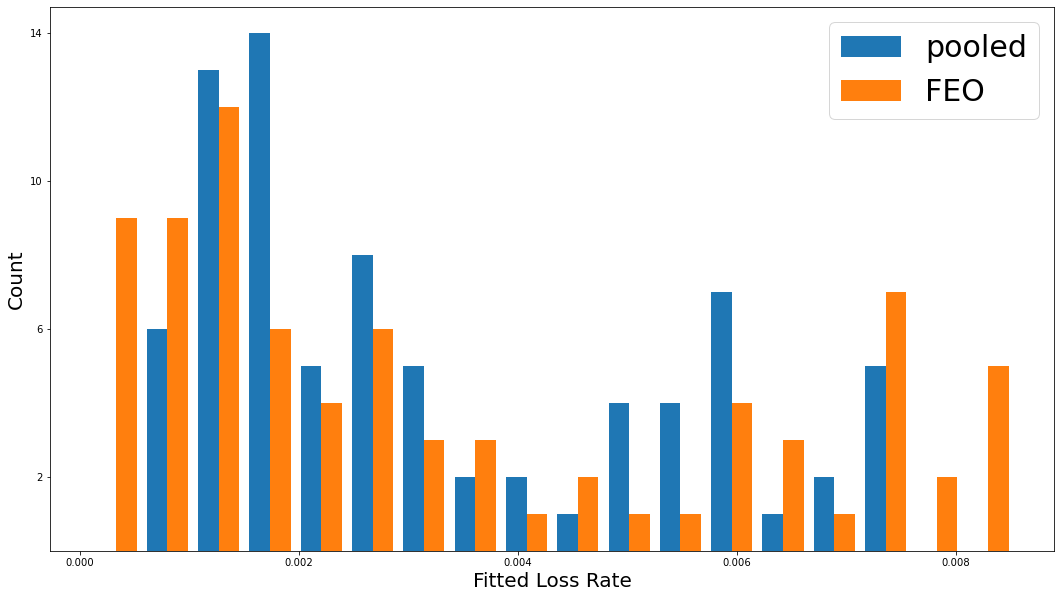}
    \caption{Pooled and FEO predicted loss rates for Citigroup's first lien loans.}
    \label{fig:fitted}
\end{figure}

\subsection{Nonlinear Models}
\label{s:robust-macro}

Building on Section~\ref{s:nonlinear},
we consider generalized additive models (GAMs) in which the effect of the 
past due rates and the macro variable are not restricted to be linear. 
More specifically, we consider the specifications
\begin{equation}\label{eqn:gam-P}
    Y_{s,t}^{Pool} = f^P_0 + f^P_1(PDR_{s,t-l}) + f^P_2(MacroPC_{t-l}) + f^P_3(PDR_{s,t-l} \times MacroPC_{t-l}) + \epsilon^P_{s,t}
\end{equation}
and 
\begin{equation}\label{eqn:gam-F}
    Y_{s,t}^F = f^F_0 + f^F_1(PDR_{s,t-l}) + f^F_2(MacroPC_{t-l}) + f^F_3(PDR_{s,t-l} \times MacroPC_{t-l}) + f^F_4(s) + \epsilon^F_{s,t},
\end{equation}
in which $f^{P/F}_i$, $i=1,2,3,$ are (possibly nonlinear) centered functions of
\textit{PastDueRate}, \textit{MacroPC}, and their interaction 
\textit{PastDueRate$\times$MacroPC}, respectively, and
$f^F_4$ measures centered bank fixed effects.
The pooled model (\ref{eqn:gam-P}) omits $f^P_4$; 
the FEO model (\ref{eqn:gam-F}) estimates $f^F_4$
but discards it in forecasting loss rates to satisfy equal treatment.
We consider the modeling of loss rates four quarters ahead, so $l=4$ in both cases.

We use the R package \texttt{gam} (Hastie \cite{hastie-r})
to fit (\ref{eqn:gam-P}) and (\ref{eqn:gam-F}),
taking the $f^{P/F}_i$, $i=1,2,3$, to be smoothing splines with 4 degrees of freedom.
We choose \texttt{gam} because it is a direct implementation
of the backfitting algorithm in Hastie and Tibshirani \cite{hastie},
which underpins the framework in Section~\ref{s:nonlinear}.
In particular, Proposition~\ref{p:non} applies to
FEO forecasts based on (\ref{eqn:gam-F}).

Model (\ref{eqn:gam-P}) is nested within model (\ref{eqn:gam-F}), so
we can use an $F$-test to compare the two.
In all four loan categories, the test rejects (with $p$-values smaller than 0.01)
the restriction to equal bank fixed effects ($f^P_4\equiv 0$) imposed in
the pooled model (\ref{eqn:gam-P}).
Figure~\ref{fig:gam-fe} plots the centered bank fixed effects $f^F_4$ 
for all four loan categories, expressed in percent.
We observe significant variability within each loan type. For example,
for credit card loans, JPM's fixed effect
is three percentage points larger than WFC's.
We also observe variability across loan types for individual banks.
For example, CFG has the highest fixed effect for first lien loans, 
but one of the lowest for credit card loans. 
These observations again reflect the notable heterogeneity among
the bank holding companies. 

Table~\ref{tbl:gam_pred_diff} reports the mean and median of 
the relative prediction differences between the FEO and the pooled predictions,
given, as in Section~\ref{sec:experiment}, 
by $|(\hat{Y}_{Pool}(x) - \hat{Y}_F(x))/Y(x)|$).
These summary statistics show that the relative prediction
differences can indeed be very large.
To further illustrate this point, Figure~\ref{fig:fitted-gam}
contrasts the prediction distributions 
for Citigroup's first lien loans using the pooled and FEO methods.
As in Proposition~\ref{p:non}, the pooled method may yield smaller prediction
errors overall, but it does so by implicitly misdirecting legitimate information.

\begin{figure}[H]
    \centering
    \includegraphics[width = 1.0\textwidth]{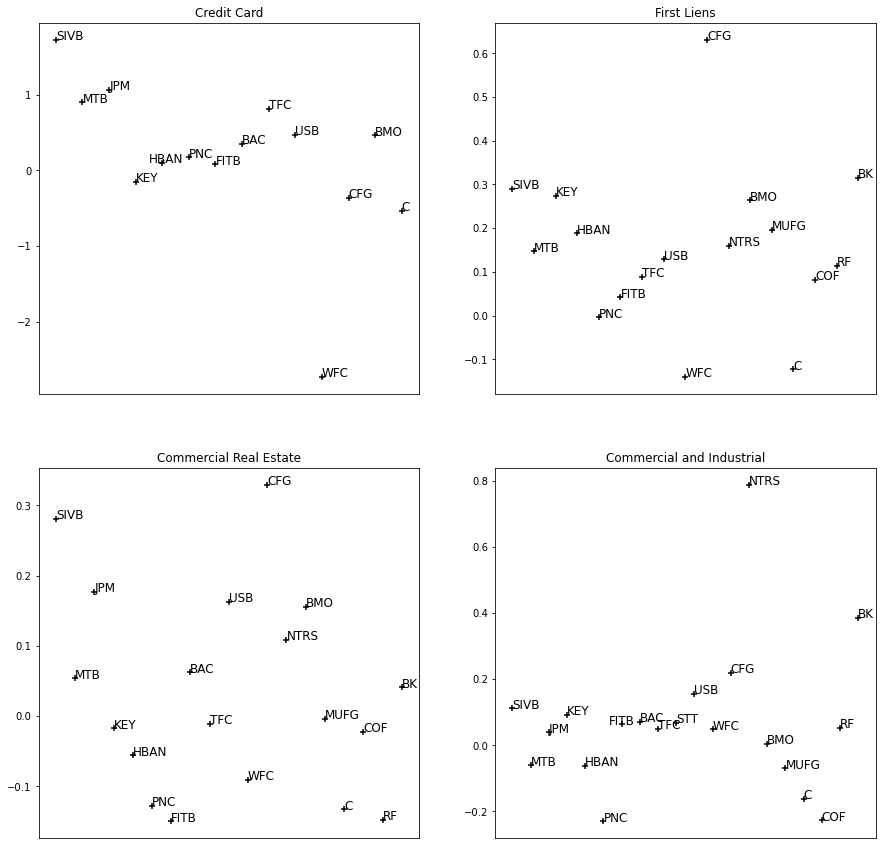}
    \caption{Banks' generalized fixed effects. Y-axis is in \%.} 
    \label{fig:gam-fe}
\end{figure}

\begin{table}[]
\centering
\begin{tabular}{lrrrr}
\toprule
  &  CC &  FL &  CRE &  CI \\
\midrule
mean   &         25.5 &        559.0 &                   106.6 &                       88.5 \\
median &         10.4 &        120.8 &                    14.2 &                        9.8 \\
\bottomrule
\end{tabular}
\caption{Mean and median of relative prediction differences between the pooled and the FEO estimates (in \%) for GAMs.}
\label{tbl:gam_pred_diff}
\end{table}

\begin{figure}
    \centering
    \includegraphics[width=0.6\linewidth]{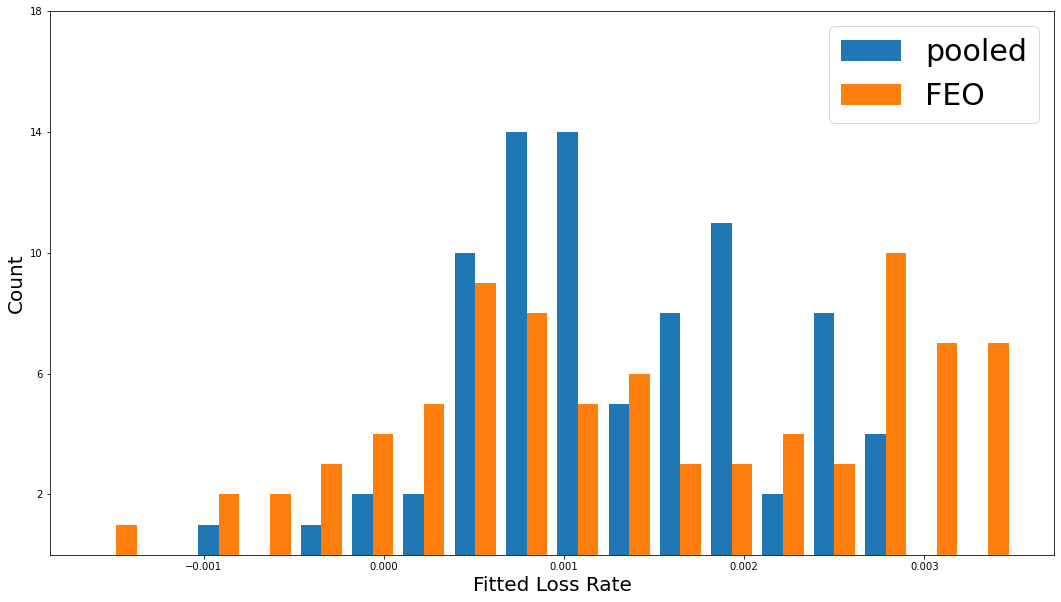}
    \caption{Pooled and FEO predicted loss rates for Citigroup's first lien loans.}
    \label{fig:fitted-gam}
\end{figure}

\section{Concluding Remarks}

The current practice of regulatory stress testing ignores bank heterogeneity
in loss models as a matter of policy and principle. We have argued that
simply pooling banks can distort coefficients on legitimate features and is
vulnerable to implicit misdirection of legitimate information to infer bank
identity. We have examined various ways of incorporating fairness
considerations and shown that estimating and discarding centered bank
fixed effects addresses the deficiencies of pooling --- and it does so in an
optimal sense.

Beyond this specific recommendation, the broader conclusion to be drawn
from our analysis is that accuracy and equal treatment can more effectively
be addressed by accounting for bank heterogeneity rather than ignoring it.
Although we have focused on the stress testing application, our analysis
applies more generally to settings requiring the fair aggregation
of individually tailored models into a single common model.
\bigskip

{\noindent\bf Acknowledgments.} For many helpful comments,
we thank the referees and our discussant (Matthew Plosser) at the
2022 Federal Reserve Stress Testing Conference.

% -------------------------

% -------------------------

\appendix

\section{Proofs}

\begin{proof}[Proposition~\ref{p:pool}]
Problem (\ref{sqloss}) is solved by the linear projection of $Y_S$
onto the span of 1 and $X_S$. 
If $\var[X_S]$ is invertible, then the coefficients
of the linear projection are given by (\ref{apool}) and
$$
\beta_{Pool} = \var[X_S]^{-1}\cov[Y_S,X_S];
$$
see, for example, Wooldridge \cite{wool}, p.25. 
In (\ref{vars})--(\ref{wdef}) we can write
$$
\var[X_S] = \sum_sp_sW_s = \sum_sp_s\Sigma_s + \var[\mu_S].
$$
This matrix is positive definite because we assumed that each $\Sigma_s$ is positive
definite, so $\var[X_S]=\E[W_S]$ is indeed invertible.
To evaluate $\cov[Y_S,X_S]$ for $Y_S$ in (\ref{ymix}), we first note that
$$
\cov[X_S,\epsilon_S] = \E[\cov[X_S,\epsilon_S|S]] + \cov[\E[X_S|S],\E[\epsilon_S|S]]
= \E[0] + \cov[\mu_S,0] = 0.
$$
It follows that 
\begin{eqnarray*}
\lefteqn{\cov[Y_S,X_S] } && \\
&=& \E[\cov[\alpha_S,X_S|S]] + \cov[\E[\alpha_S|S],\E[X_S|S]]
+ \E[\cov[\beta^{\top}_SX_S,X_S|S]] + \cov[\E[\beta^{\top}_SX_S|S],\E[X_S|S]] \\
&=& 0 + \cov[\alpha_S,\mu_S]
+ \E[\Sigma_S\beta_S] + \E[\var[\mu_S]\beta_S] \\
&=& \cov[\alpha_S,\mu_S] + \E[W_S\beta_S].
\end{eqnarray*}
\end{proof}

\begin{proof}[Proposition~\ref{p:ptf}]
By Proposition 3.4 of  Chzhen et al.\ \cite{chzhen} or Theorem 6 of Le Gouic et al.\ \cite{legouic},
the expected squared error is minimized subject to demographic parity by the rule
that assigns to bank $s$ with features $x$ the loss forecast
\begin{equation}
\hat{Y}_{PTF}(x,s) = \sum_i p_i F^{-1}_i(F_s(\alpha_s+\beta_s^{\top}x)),
\label{yptf}
\end{equation}
where $F_s$ is the cumulative distribution function of $\alpha_s + \beta_s^{\top}X_s$.
By construction, $F_s$ is then also the cumulative distribution function of 
$\alpha_s^o + \beta_s^{o\top}Z_s$, which is normal with mean $\alpha^o_s$
and variance $\|\beta_s^o\|^2$. Writing $\Phi$ for the standard normal distribution function,
we get
$$
F_s(y) = \Phi\left(\frac{y-\alpha_s^o}{\|\beta^o_s\|}\right), \quad
F^{-1}_i(q) = \alpha_i^o + \|\beta_i^o\|\Phi^{-1}(q).
$$
Making these substitutions in (\ref{yptf}) and writing $\alpha_s^o + \beta_s^{o\top}z_s$
for $\alpha_s+\beta_s^{\top}x$, with $z_s = \Sigma^{-1}_s(x-\mu_s)$, we get
\begin{eqnarray*}
\hat{Y}_{PTF}(x,s) 
&=& \sum_i p_i F^{-1}_i(F_s(\alpha_s^o + \beta_s^{o\top}z_s)) \\
&=& \sum_i p_i F^{-1}_i(\Phi(\beta_s^{o\top}z_s/\|\beta_s^o\|)) \\
&=& \sum_i p_i \{\alpha_i^o + \|\beta_i^o\|\Phi^{-1}(\Phi(\beta_s^{o\top}z_s/\|\beta_s^o\|))\} \\
&=& \sum_i p_i \{\alpha_i^o + \|\beta_i^o\| \beta_s^{o\top}z_s/\|\beta_s^o\|\},
\end{eqnarray*}
which is (\ref{ptfn}). (Demographic parity holds because the distribution of 
$\beta_s^{o\top}Z_s/\|\beta_s^o\|$ does not depend on $s$.)
Under (\ref{bprop}),
$\|\beta_i^o\| \beta_s^o/\|\beta_s^o\| = 
\|a_i\beta\| a_s\beta/\|a_s\beta\|
= a_i\beta = \beta_i^o$,
and we get (\ref{ptfs}).
\end{proof}

\begin{proof}[Proposition~\ref{p:feo}]
We can rewrite $\hat{Y}(x,s)$ in (\ref{ytilde}) as
$$
\hat{Y}(x,s) = \sum_{i=1}^{\bar{S}} a_i\mathbf{1}\{s=i\} + \beta^{\top}x,
$$
for suitable $a_i$. Minimizing (\ref{sqlossf}) over the $a_i$ and $\beta$
yields the same value for $\beta$ as minimizing (\ref{sqlossf}) using
(\ref{ytilde}) because the indicators $\mathbf{1}\{s=i\}$ have the same span as the $U_i(s)$
and a constant. Thus, the $\beta_F$ defined by (\ref{sqlossf}) is the coefficient of $X_S$ in
the regression of $Y_S$ on $X_S$ and the indicators $\mathbf{1}\{S=i\}$.
By the Frisch-Waugh-Lovell Theorem (as in Angrist and Pischke \cite{angpis}, pp.35--36),
we can therefore evaluate $\beta_F$ as the coefficient in the regression of
$Y_S$ on the component of $X_S$ orthogonal to the other variables,
which in our case are the indicators.
The projection of $X_S$ onto the indicators is given by $\sum_i\mu_i\mathbf{1}\{S=i\} = \mu_S$, so the orthogonal component is $X_S-\mu_S$.
We may therefore evaluate $\beta_F$ as the coefficient in the regression
of $Y_S-\E[Y_S]$ on $X_S-\mu_S$, which is (\ref{bfw}).
For the first factor in (\ref{bfw}), we have
$$
\var[X_S-\mu_S] = \E[\var[X_S-\mu_S|S]] + \var[\E[X_S-\mu_S|S]] = \E[\Sigma_S] + 0.
$$
For the second factor, we similarly have
$$
\cov[X_S-\mu_S,Y_S] = \E[\cov[X_S-\mu_S,Y_S|S]] = 
\E[\cov[X_S-\mu_S,\beta_S^{\top}X_S|S]] =
\E[\Sigma_S\beta_S],
$$
so (\ref{bfeo}) follows. The optimal $\alpha_F$ in (\ref{sqlossf}) ensures that
$\E[\hat{Y}(X_S,S)] = \E[Y_S]$, which yields (\ref{afeo}).
\end{proof}

\begin{proof}[Proposition~\ref{p:bbld}]
The minimization in (\ref{sqlossf}) yields coefficients $\alpha_F$,
$\delta$, and $\beta_F$, with which we can write
\begin{equation}
Y_S = \alpha_F + \sum_{i=1}^{\bar{S}-1}\delta_iU_i(S) + \beta_F^{\top}X_S + u,
\label{ysu}
\end{equation}
where the error $u$ has mean zero and is uncorrelated with $U(S)$ and $X_S$. We thus have
\begin{eqnarray*}
\beta_{Pool}
&=& \var[X_S]^{-1}\cov[X_S,Y_S] \\
&=& \var[X_S]^{-1}\{\cov[X_S,\beta_F^{\top}X_S] + \cov[X_S,\delta^{\top}U(S)]\} \\
&=& \var[X_S]^{-1}\{\var[X_S]\beta_F + \cov[X_S,U(S)]\delta\} \\
&=& \beta_F + \Lambda\delta,
\end{eqnarray*}
using the expression for $\Lambda$ in (\ref{lamdef}) for the last step.

Next, we evaluate $\delta$. Using (\ref{ysu}), we can derive $\delta$ as
the vector of coefficients
in a regression of $Y_S - \beta_F^{\top}X_S$ on $U(S)$.
Thus,
\begin{eqnarray}
\delta &=& 
\var[U(S)]^{-1}\cov[U(S),Y_S - \beta^{\top}_FX_S] \nonumber \\
&=& \var[U(S)]^{-1}\cov[U(S),Y_S]
-\var[U(S)]^{-1}\cov[U(S),X_S]\beta_F.
\label{drep}
\end{eqnarray}
To evaluate $\var[U(S)]^{-1}$, we first note that
$$
\var[U(S)] = 
\left(\begin{array}{cccc}
p_1-p_1^2 & -p_1p_2 & \cdots & -p_1p_{\bar{S}-1} \\
-p_2p_1     & p_2-p_2^2 & \cdots & -p_2p_{\bar{S}-1} \\
\vdots        &   \vdots               & \ddots  & \vdots \\
-p_{\bar{S}-1}p_1 &-p_{\bar{S}-1}p_2 & \cdots & p_{\bar{S}-1}- p^2_{\bar{S}-1}
\end{array}
\right);
$$
direct multiplication then verifies that
$$
\var[U(S)]^{-1} = 
\left(\begin{array}{cccc}
1/p_1+1/p_{\bar{S}} & 1/p_{\bar{S}} & \cdots & 1/p_{\bar{S}} \\
1/p_{\bar{S}}     & 1/p_2 + 1/p_{\bar{S}}& \cdots & 1/p_{\bar{S}} \\
\vdots        &   \vdots               & \ddots  & \vdots \\
1/p_{\bar{S}} &1/p_{\bar{S}} & \cdots & 1/p_{\bar{S}-1} +1/p_{\bar{S}}
\end{array}
\right).
$$
The vector $\cov[U(S),Y_S]$ has elements
$$
[\cov[U(S),Y_S]]_s = p_s(\E[Y_s]-\E[Y_S]), \quad s=1,\dots,\bar{S}-1;
$$
and row $s$ of the matrix $\cov[U(S),X_S]$ is given by $p_s(\mu_s-\bar{\mu})^{\top}$.
Thus, for $s=1,\dots,\bar{S}-1$, we have the vector elements
$$
(\var[U(S)]^{-1}\cov[U(S),Y_S])_s
=
(\E[Y_s]-\E[Y_S]) + \sum_{i=1}^{\bar{S}-1}p_i(\E[Y_i]-\E[Y_S])/p_{\bar{S}}
=\E[Y_s]-\E[Y_{\bar{S}}],
$$
and similarly row $s$ of the matrix $\var[U(S)]^{-1}\cov[U(S),X_S]$ is given by
\begin{equation}
(\mu_s-\bar{\mu})^{\top} + \sum_{i=1}^{\bar{S}-1}p_i(\mu_i-\bar{\mu})^{\top}/p_{\bar{S}}
=(\mu_s - \mu_{\bar{S}})^{\top}.
\label{msol}
\end{equation}
Combining these terms in (\ref{drep}) yields (\ref{delta}).
\end{proof}

\begin{proof}[Proposition~\ref{p:seo}]
We derived an expression for the rows of $M$ in (\ref{msol}),
and (\ref{mtu}) follows from that expression. By applying expression (\ref{bfw})
for $\beta_F$ in (\ref{yseo2}), we see that $\hat{Y}_{SEO}$ is the claimed
projection.
\end{proof}

\begin{proof}[Corollary~\ref{c:ptfseo}]
From (\ref{bfeo}) we know that if $\Sigma_s\equiv \Sigma$ then
$\beta_F = \E[\beta_S]$.  From (\ref{ptfs}), we get
$\bar{\beta}^o = \sum_ip_i\beta_i^o = \sum_ip_i\Sigma^{1/2}\beta_i = \Sigma^{1/2}\E[\beta_S] = \Sigma^{1/2}\beta_F$.
Thus, $\beta_F^{\top}(x-\mu_s) = \bar{\beta}^{o\top}\Sigma^{-1/2}(x-\mu_s) = 
\bar{\beta}^{o\top}z_s$. It follows that (\ref{ptfs}) and (\ref{yseo2}) coincide because
they have the same overall mean. If the distribution of $Z_s$ does not
depend on $s$, then (\ref{ptfs}) satisfies demographic parity.
\end{proof}

\begin{proof}[Proposition~\ref{p:unified}]
The claim for (i) simply restates Proposition~\ref{p:pool}. The constraint in (ii) is
demographic parity, so the optimizer follows from the definition of projection to fairness.
The constraint in (v) requires $\cov[\lambda^{\top}U(S)+\beta^{\top}X_S,U(S)]=0$.
Rearranging this equation, we get $\lambda = -(\var[U(S)])^{-1}\cov[U(S),X_S]\beta$;
i.e., $\lambda = -M\beta$. Making this substitution in the form of $\hat{Y}_S$
in row (v), (\ref{sqloss2}) becomes
\begin{equation}
\E[(Y_S - \alpha - \lambda^{\top}U(S) - \beta^{\top}X_S)^2]=
\E[(Y_S - \alpha -\beta^{\top}[X_S-M^{\top}U(S)])^2].
\label{Mloss}
\end{equation}
Minimizing this expression over $\alpha$ and $\beta$ yields the coefficients in a linear regression
of $Y_S$ on a constant $X_S-M^{\top}U(S)$. In light of Proposition~\ref{p:seo}, the optimal $\beta$ in (\ref{Mloss}) is then the coefficient on $X_S-\mu_S$ in a regression of $Y_S$ on 
a constant $X_S-\mu_S$. It follows from (\ref{bfw}) that the optimal $\beta$ in (\ref{Mloss})
is therefore $\beta_F$.
Because $\E[U(S)]=0$, the minimizing $\alpha$ in (\ref{Mloss}) is the $\alpha_F$
defined by (\ref{sqlossf}). We have thus shown that the optimal forecast in row (v) is
$$
\hat{Y}_s = \alpha_F + \beta_F^{\top}(X_s - M^{\top}U) = 
\alpha_F + \lambda^{\top}U(s) + \beta_F^{\top}X_s.
$$
In case (iv), by applying (\ref{mtu}) we see that the constraint requires $\cov[Y_S-\beta^{\top}X_S,X_S-M^{\top}U(S)]=0$,
so $\beta = (\cov[X_S,X_S-M^{\top}U(S)])^{-1}\cov[Y_S,X_S-M^{\top}U(S)]$.
Using the fact that $X_S-M^{\top}U(S)$ is orthogonal to $U(S)$, we get
\begin{eqnarray*}
\lefteqn{
\cov[X_S,X_S-M^{\top}U(S)] } && \\
&=& \cov[X_S-M^{\top}U(S),X_S-M^{\top}U(S)] + \cov[M^{\top}U(S),X_S-M^{\top}U(S)] \\
&=& \var[X_S-M^{\top}U(S)],
\end{eqnarray*}
and therefore $\beta = (\var[X_S-M^{\top}U(S)])^{-1}\cov[Y_S,X_S-M^{\top}U(S)]$.
In other words, the optimal $\beta$ in (iv) is the coefficient in a linear regression of $Y_S$
on $X_S-M^{\top}U(S)$. As noted in the discussion of (v), this is $\beta_F$, and it follows
from $\E[U(S)]=0$ that the optimal $\alpha$ in (iv) is $\alpha_F$.
\end{proof}

\begin{proof}[Proposition~\ref{p:nmli}]
By construction, the least-squares projection of $Y_S$ onto a constant and $X_S$
is given by the pooled forecast, so
$$
Y_S = \E[Y_S] + \beta_{Pool}^{\top}(X_S-\bar{\mu}) + \epsilon_P,
$$
for some orthogonal error $\epsilon_P$ with a
variance $\sigma^2_P$ that does not depend on $\gamma$.
We therefore have
\begin{eqnarray*}
\E[(\hat{Y}_{\gamma}(X_S)-Y_S)^2] 
&=& 
\E[\{\hat{Y}_{\gamma}(X_S) - \E[Y_S] - \beta_{Pool}^{\top}(X_S-\bar{\mu})\}^2] + \sigma^2_P \\
&=& \E[\{(\gamma - \Lambda\delta)^{\top}(X_S-\bar{\mu})\}^2]+ \sigma^2_P,
\end{eqnarray*}
from which (i) follows.  

Using the linear projection of $Y_S$ onto $(1,U(S),X_S)$ in (\ref{ylp}), we can write
$$
Y_S= \E[Y_S] + \delta^{\top}U(S) + \beta^{\top}_F(X_S-\bar{\mu}) + \epsilon,
$$
for some orthogonal error $\epsilon$ with a variance $\sigma^2_{\epsilon}$
that does not depend on $\gamma$.
We therefore have
\begin{eqnarray*}
\E[(\hat{Y}_{\gamma}(X_S)-Y_S)^2] 
&=& \E[\{\gamma^{\top}(X_S-\bar{\mu}) - \delta^{\top}U(S)\}^2] 
+\sigma^2_{\epsilon} \\
&=& \E[\{\gamma^{\top}(X_S-\mu_S) + \gamma^{\top}(\mu_S-\bar{\mu}) - \delta^{\top}U(S)\}^2]
+\sigma^2_{\epsilon} \\
&=& \E[\{\gamma^{\top}(X_S-\mu_S) + (\gamma^{\top}M^{\top} - \delta^{\top})U(S)\}^2] 
+\sigma^2_{\epsilon}\\
&=& \E[\{\gamma^{\top}(X_S-\mu_S)\}^2] + \E[\{(\gamma^{\top}M^{\top} - \delta^{\top})U(S)\}^2]+\sigma^2_{\epsilon},
\end{eqnarray*}
where the third equality uses (\ref{mtu}), and the last equality
uses the orthogonality of $X_S-\mu_S$ and $U(S)$.
If this expression is smaller than the corresponding value with $\gamma=0$,
then (ii) must hold.
\end{proof}

\begin{proof}[Proposition~\ref{p:ate}]
We saw in the proof of Proposition~\ref{p:feo} that $X_S-\mu_S$ is
uncorrelated with the centered indicators $U_i(S)$. 
It is also uncorrelated with $V_S-\nu_S$
because
$$
\E[(X_S-\mu_S)(V_S-\nu_S)] = \sum_sp_s\E[(X_s-\mu_s)(V_s-\nu_s)]=0,
$$
under our assumption that $X_s$ and $V_s$ are uncorrelated. Similarly,
$$
\E[(X_S-\mu_S)U_i(S)V_S] = p_i\E[(X_i-\mu_i)V_i] - p_i\E[(X_S-\mu_S)V_S] = 0,
$$
so $X_S-\mu_S$ is uncorrelated with the interaction terms. 
Thus, $X_S-\mu_S$ is uncorrelated with all the elements of
$\mathcal{O} = \{1,U(S), V_S-\nu_S, U_1(S)V_S,\dots,U_{\bar{S}}(S)V_S\}$.

Starting from the representation of (\ref{ysz}) as
$$
Y_S = 
\sum_{i=1}^{\bar{S}}\mathbf{1}\{S=i\}\{\alpha_i +
\beta^{\top}_iX_S
+ \gamma^{\top}_iV_S + \epsilon_i\},
$$
we may write
\begin{eqnarray*}
Y_S 
&=& 
\beta^{\top}_S(X_S-\mu_S) + 
\sum_{i=1}^{\bar{S}}\mathbf{1}\{S=i\}(\alpha_i +\beta^{\top}_i\mu_i)
+\sum_{i=1}^{\bar{S}}\mathbf{1}\{S=i\}\gamma_i^{\top}V_S + \epsilon_S \\
&\equiv & 
\beta^{\top}_S(X_S-\mu_S) +\tilde{Y} + \epsilon_S,
\end{eqnarray*}
which expresses $Y_S$ as the sum of three mutually orthogonal terms.
As $X_S-\mu_S$ is uncorrelated with $\mathcal{O}$, and $\tilde{Y}$ is
uncorrelated with $X_S-\mu_S$, we may calculate the projection of $Y_S$
onto the span of $X_S-\mu_S$ and $\mathcal{O}$ by projecting
$\beta^{\top}_S(X_S-\mu_S)$ onto $X_S-\mu_S$ 
and projecting $\tilde{Y}$ onto $\mathcal{O}$.

We know from (\ref{bfw}) that the projection of
$\beta^{\top}_S(X_S-\mu_S)$ onto $X_S-\mu_S$ 
is $\beta_F^{\top}(X_S-\mu_S)$; in other words, including $V_S$
and the interaction terms does not change $\beta_F$.

For the projection of $\tilde{Y}$ onto $\mathcal{O}$,
let $a_i = \alpha_i + \beta_i^{\top}\mu_i + \bar{\gamma}^{\top}\nu_i$
and $\bar{a} = \sum_ip_ia_i$.
Then,
\begin{eqnarray*}
\tilde{Y}
&=&
\sum_{i=1}^{\bar{S}}\mathbf{1}\{S=i\}(\alpha_i + \beta^{\top}_i\mu_i)
+\sum_{i=1}^{\bar{S}}\mathbf{1}\{S=i\}\gamma^{\top}_iV_S \\
&=&
\sum_{i=1}^{\bar{S}}\mathbf{1}\{S=i\}(\alpha_i + \beta^{\top}_i\mu_i) +
\sum_{i=1}^{\bar{S}}U_i(S)\gamma^{\top}_iV_S + \sum_{i=1}^{\bar{S}}p_i\gamma^{\top}_iV_S \\
&=&
\sum_{i=1}^{\bar{S}}\mathbf{1}\{S=i\}(\alpha_i + \beta^{\top}_i\mu_i+ \bar{\gamma}^{\top}\nu_i) +
\sum_{i=1}^{\bar{S}}U_i(S)\gamma^{\top}_iV_S + \sum_{i=1}^{\bar{S}}p_i\gamma^{\top}_i(V_S -\nu_S)\\
&=&
\bar{a} + \sum_{i=1}^{\bar{S}-1}U_i(S)(a_i-a_{\bar{S}})
+ \sum_{i=1}^{\bar{S}}U_i(S)\gamma^{\top}_iV_S + \bar{\gamma}^{\top}(V_S -\nu_S).
\end{eqnarray*}
Thus, $\tilde{Y}$ is in the span of $\mathcal{O}$, and its coefficient
on $V_S-\nu_S$ is $\bar{\gamma}$.
With all $\var[V_s]$ having full rank, $V_S-\nu_S$ is not spanned
by the other elements of $\mathcal{O}$, so its coefficient
$\bar{\gamma}$ is uniquely determined.
\end{proof}

\begin{proof}[Proposition~\ref{p:non}]
For the first claim, we have
\begin{eqnarray*}
\cov[\hat{Y}_F(X_S)-Y_S,X_S - \E[X_S|S]]
&=&
-\E[(f_1(S)+\epsilon)(X_S - \E[X_S|S])] \\
&=&
-\E[f_1(S)(X_S - \E[X_S|S])] - \E[\epsilon X_S] + \E[\epsilon\E[X_S|S]] \\
&=&
0 + \E[\E[\epsilon|S]\E[X_S|S]] -\E[\E[\epsilon|X_S] X_S]  = 0.
\end{eqnarray*}
For the second claim, we have
\begin{eqnarray*}
\E[(\hat{Y}_{\gamma}(X_S)-Y_S)^2]
&=& \E[(\gamma(X_S) - f_1(S)-\epsilon)^2]\\
&=& \E[(\gamma(X_S) - f_1(S))^2] + \E[\epsilon^2].
\end{eqnarray*}
The last step uses
$$
\E[(\gamma(X_S)-f_1(S))\epsilon]
= \E[(\gamma(X_S)-f_1(S))\E[\epsilon|S]]=0.
$$
It now follows that if $\gamma$ reduces the expected squared forecast error
then $\E[\gamma(X_S)f_1(S)]>0$, which implies (\ref{covnon2}) and (\ref{covnon3}).
\end{proof}

\clearpage

\setcounter{section}{0}
\renewcommand{\thesection}{EC.\arabic{section}}
\renewcommand{\thefigure}{EC.\arabic{figure}}
\renewcommand{\thetable}{EC.\arabic{table}}

\begin{center}
\LARGE

Electronic Companion to \\[18pt]

{\LARGE\bf Should Bank Stress Tests Be Fair?}
\end{center}

\vspace*{0.5in}

\noindent This Electronic Companion covers the following topics.
Section~\ref{s:cross} examines how changes in bank-specific parameters
affect parameters in an aggregated model. We show that in many cases
these parameter externalities favor FEO over a pooled model;
Section~\ref{a:sens} provides supporting analysis.
Section~\ref{a:cxcomb} argues that the only parameter aggregation
rules satisfying some simple 
%properties 
conditions
are convex combinations,
again supporting the FEO model over the pooled model.
Section~\ref{appendix:data} provides some additional information
on the data used in Section~\ref{sec:experiment}, and
Section~\ref{appendix:robust} provides some robustness checks
supporting Section~\ref{sec:experiment}.
Section~\ref{appendix:ppnr} discusses bank heterogeneity
in revenue models.

\section{Cross-Bank Parameter Externalities}
\label{s:cross}

As a consequence of aggregating bank-specific results
into a single industry model, changes at one bank can 
affect loss forecasts at other banks, and the results
are sometimes counterintuitive.
In this section, we argue that these cross-bank externalities
are generally more reasonable under FEO forecasts than
under the pooled method.

For simplicity, we consider a setting with a single scalar feature $x$.
More generally, we can think of this as a feature that is uncorrelated
with all other features. We adopt the convention that this feature is
nonnegative, and that higher values of $x$ are associated with higher losses.
Thus, for each bank $s$ we assume $\mu_s\ge 0$ and $\beta_s\ge 0$.
In reducing $\mu_s$, a bank improves its portfolio quality;
in reducing $\beta_s$, a bank improves its ability to manage portfolio risk;
and in reducing $\alpha_s$, a bank improves unobserved features to reduces its losses. 
We examine how these improvements --- reductions in $\mu_s$, $\alpha_s$,
and $\beta_s$ --- affect stress test results for bank $s$ and other banks $l$.

We can write the FEO loss forecast (\ref{yfeo}) for bank $l$
evaluated at $X_l=x$ as
\begin{equation}
\hat{Y}_{F,l}(x) = \hat{Y}_F(x) = 
\sum_sp_s(\alpha_s+\beta_s\mu_s) + \beta_F(x-\bar{\mu}),
\label{yfeol}
\end{equation}
with $\beta_F = \sum_ip_i\sigma^2_i\beta_i/\sum_ip_i\sigma^2_i$,
as in (\ref{bfeo1}). The forecast is the same for all banks $l$ because
FEO satisfies equal treatment. It is now easy to see that
\begin{equation}
\frac{\partial\hat{Y}_F(x)}{\partial\mu_s}
= p_s\beta_s - p_s\beta_F \ge 0, \quad\mbox{if and only if $\beta_s\ge\beta_F$};
\label{yexm}
\end{equation}
\begin{equation}
\frac{\partial\hat{Y}_F(x)}{\partial\alpha_s} = p_s\ge 0;
\label{yexa}
\end{equation}
and
\begin{equation}
\frac{\partial\hat{Y}_F(x)}{\partial\beta_s}
= p_s\mu_s + (x-\bar{\mu})p_s\sigma^2_s/\sum_ip_i\sigma^2_i \ge 0,
\quad\mbox{if $x>\bar{\mu}$}.
\label{yexb}
\end{equation}
In (\ref{yexm}) we see that if bank $s$ has above-average (relative to $\beta_F$)
sensitivity to feature $x$, then reducing its average exposure to that feature $\mu_s$
reduces loss forecasts for all banks. Equation (\ref{yexa}) shows a similar
overall benefit if bank $s$ improves on the other dimensions captured by
$\alpha_s$. In (\ref{yexm}) we see that an improvement in risk management at bank $s$, 
corresponding to a reduction in $\beta_s$, reduces loss forecasts
at above-average levels of $x$. If $x$ is part of the stress scenario,
then large values of $x$ are particularly relevant.

The directional effects in (\ref{yexm})--(\ref{yexb}) are fairly simple
and reasonable, considering that cross-bank effects are inevitable
in an industry model. If the industry improves its performance (perhaps
because of improvements at one bank) we generally expect loss forecasts
to decrease. (A decrease in a forecast corresponds to a positive derivative because
we are considering a decrease $\mu_s$, $\alpha_s$, or $\beta_s$.)
Counterparts to (\ref{yexm})--(\ref{yexb}) continue to hold
if we replace $\beta_F$ in (\ref{yfeol}) with any convex combination
of the $\beta_s$, as in the WATE model. However, the pooled method
behaves quite differently.

The pooled forecast $\hat{Y}_P(x)$ can be written in the same
form as (\ref{yfeol}) but with $\beta_F$ replaced by $\beta_{Pool}$
in (\ref{bpool}). We now get
$$
\frac{\partial\hat{Y}_P(x)}{\partial\mu_s}
= p_s(\beta_s -\beta_{Pool}) + (x-\bar{\mu})\frac{\partial\beta_{Pool}}{\partial\mu_s}.
$$
The sign of the last term is not determined by a simple condition, so
the overall directional effect is difficult to predict.
The sign of
$$
\frac{\partial\hat{Y}_P(x)}{\partial\alpha_s}
= p_s + p_s \frac{ (\mu_s - \bar{\mu}) (x - \bar{\mu})}{\sum_s p_s \sigma_s^2 + \var(\mu_S)} \beta_s,
$$
depends on the magnitudes of $\mu_s$ and $x$, relative to $\bar{\mu}$.
For the sensitivity to $\beta_s$, we can write
$$
\frac{\partial\hat{Y}_P(x)}{\partial\beta_s}
= p_s\mu_s + (x-\bar{\mu})\frac{\partial\beta_{Pool}}{\partial{\beta_s}},
\quad
\frac{\partial\beta_{Pool}}{\partial{\beta_s}}
=\frac{p_s(\sigma^2_s +\mu_s(\mu_s-\bar{\mu}))}{\sum_i p_i \sigma_i^2 + \var(\mu_S)}.
$$
Among the most troubling aspects of the pooled model
is that the last term could be negative:
a reduction in $\beta_s$ could produce an increase
in $\beta_{Pool}$.
In particular, $\sigma_s^2+\mu_s(\mu_s-\bar{\mu})$ is negative
for a bank with below-average exposure to feature $x$
(so $\mu_s<\bar{\mu}$) and low variability $\sigma^2_s$ in this exposure.
Under the pooled model, it is therefore possible for
an improvement in risk management at one bank (a reduction in $\beta_s$)
to produce an \emph{increase} in loss forecasts at all banks.

The top panel of Table~\ref{t:sens} shows sufficient conditions
for positive sensitivities of $\hat{Y}_F(x)$ and $\hat{Y}_P(x)$.
The middle and bottom panels show corresponding results for the
expected forecasts $\E[\hat{Y}_l] = \E[\hat{Y}(X_l)]$
and for the bias $\E[\hat{Y}(X_l)-Y_l]$.
Supporting details for the second and third cases are provided in Section~\ref{a:sens}.
We have tried to provide simple sufficient conditions,
and in most cases the conditions are not necessary.
All of the conditions for FEO extend to WATE with $\beta_F$ replaced
by the weighted average coefficient. 

Some counterintuitive and undesirable cases can arise at
empirically plausible parameter values.
For example, in equation (\ref{eqn:sens-a-mean-loss}) 
we derive an expression for 
$\partial\E[\hat{Y}_P(X_l)]/\partial\alpha_s$.
Using estimated parameters for the credit card data in Section~\ref{sec:experiment},
we find that this derivative is negative when 
$l$ is Citigroup and $s$ is JPMorgan Chase. 
In other words, an improvement at JPMorgan Chase would
result in a higher expected loss forecast at Citigroup under the pooled model.

The bias sensitivities in Table~\ref{t:sens} are more complicated
than the other cases because the bias involves the difference
between the predicted and actual loss rates.
A reduction in the predicted loss rate can increase or decrease bias,
depending on whether the initial forecast is too low or too high.

\begin{table}
\centering
\begin{tabular}{l|l|l}
\multicolumn{1}{l}{$\hat{Y}(x)$} & \multicolumn{1}{c}{FEO}  & \multicolumn{1}{c}{Pool}   \\
\hline 
$\mu_s \downarrow$        &  $\downarrow$ iff $\beta_s > \beta_F$  & no simple rule\\
$\alpha_s \downarrow$     & $\downarrow$  & $\downarrow$ if $(\mu_s-\bar{\mu})(x-\bar{\mu})>0$  \\
$\beta_s \downarrow$      & $\downarrow$ if $x>\bar{\mu}$ & $\downarrow$ if $[\sigma_s^2 + \mu_s(\mu_s - \bar{\mu})](x-\bar{\mu}) > 0$ \\ \hline
\multicolumn{3}{c}{}\\[-6pt]
\multicolumn{3}{l}{$\E[\hat{Y}(X_l)]$}\\
\hline
$\mu_s \downarrow$  & $l=s$: $\downarrow$ & no simple rule \\
& $l\ne s$: $\downarrow$ iff $\beta_s > \beta_F$ &  \\
$\alpha_s \downarrow$ & $\downarrow$  & $l=s$: $\downarrow$ \\
& & $l \ne s$: $\downarrow$ if $(\mu_s - \bar{\mu})(\mu_l - \bar{\mu}) > 0$ \\
$\beta_s \downarrow$  & $\downarrow$ if $\mu_s+\mu_l>\bar{\mu}$   & $\downarrow$ if $[\sigma_s^2 + \mu_s(\mu_s - \bar{\mu})](\mu_l-\bar{\mu})  > 0$ \\
&    & or if $\mu_s$ sufficiently large \\ \hline
\multicolumn{3}{c}{}\\[-6pt]
\multicolumn{3}{l}{$\mathsf{bias}(l)$}\\
\hline 
$\mu_s\downarrow$        & $l=s$: $\downarrow$ iff $\beta_s < \beta_F$  & no simple rule \\
& $l \ne s$: $\downarrow$ iff $\beta_s > \beta_F $ & \\
$\alpha_s\downarrow$     & $l=s$: $\uparrow$ & $l=s$: no simple rule \\
& $l \ne s$: $\downarrow$ &  $l\ne s$: $\downarrow$ if $(\mu_s - \bar{\mu})(\mu_l - \bar{\mu}) > 0$ \\ 
$\beta_s\downarrow$      &  $l=s$: $\uparrow$ if $\mu_s < \bar{\mu}$  & no simple rule \\
&  $l\ne s$: $\downarrow$ if $\mu_s + \mu_l>\bar{\mu}$ & $\downarrow$ if $[\sigma_s^2 + \mu_s(\mu_s - \bar{\mu})](\mu_l-\bar{\mu})>0$ \\ \hline
\end{tabular}
\caption{Sensitivity of results for bank $l$ in response to a decrease in parameter
$\mu_s$, $\alpha_s$, or $\beta_s$ for bank $s$. Sensitivities shown are for
predicted loss $\hat{Y}_l(x)$ (top), mean predicted loss $\E[\hat{Y}(X_l)]$ (middle),
and the bias $\E[\hat{Y}(X_l) - Y_l]$.}
\label{t:sens}
\end{table}

\section{Sensitivity Analysis}
\label{a:sens}

This section provides supporting details for Section~\ref{s:cross}, particularly the
conclusions summarized in the middle and bottom panels of Table~\ref{t:sens}.
We begin with an analysis of forecast bias that is of independent interest.

\subsection{Forecast Bias}
\label{a:bias}

If losses at different banks are described by different models, then forecast
bias becomes inevitable when we apply a single model to all banks.
But the distribution of bias across banks may differ under different
choices of the single model.

Let $\hat{Y}_s$ be any of the forecasts for bank $s$ in Table~\ref{t:uni},
and, as in (\ref{ys}), let $Y_s$ denote the actual loss rate for bank $s$.
Both $\hat{Y}_s$ and $Y_s$ are evaluated at $X_s$. Define the
forecast bias for bank $s$ to be
\begin{equation}
\mathsf{bias}(s) = \E[\hat{Y}_s - Y_s].
\label{bias}
\end{equation}
The expectation integrates over the distribution of the error $\epsilon_s$
in (\ref{ys}) and the features $X_s$.

\begin{proposition}
For each forecast in Table~\ref{t:uni}, the bias is as follows.
\begin{itemize}
\item[(i)] Pooled: $\mathsf{bias}(s)=\E[Y_S] -\E[Y_s] + \beta_{Pool}^{\top}(\mu_s-\bar{\mu})$;
\item[(ii)] PTF in (\ref{ptfn}): $\mathsf{bias}(s)=\E[Y_S] -\E[Y_s]$;
\item[(iii)] Conditional expectation: $\mathsf{bias}(s)=\E[\hat{Y}_C(X_s)] -\E[Y_s]$;
\item[(iv)] FEO: $\mathsf{bias}(s)=\E[Y_S] -\E[Y_s] + \beta_F^{\top}(\mu_s-\bar{\mu})$;
\item[(v)] SEO: $\mathsf{bias}(s)=\E[Y_S] -\E[Y_s]$.
\end{itemize}
\label{p:bias}
\end{proposition}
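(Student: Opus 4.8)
The plan is to handle each row of Table~\ref{t:uni} separately, substituting the explicit forecast formula already derived for that method and taking expectations over the distributions of $X_s$ and $\epsilon_s$. The one fact used throughout is that, by (\ref{ys}) and (\ref{epcon}), $\E[Y_s] = \alpha_s + \beta_s^{\top}\mu_s$; since the claimed biases in (i), (ii), (iv), and (v) are all stated relative to $\E[Y_S] - \E[Y_s]$, no further manipulation of $\E[Y_s]$ is needed.

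For the linear equal-treatment forecasts (i) and (iv), write $\hat{Y}_s = \alpha + \beta^{\top}X_s$ with $(\alpha,\beta)$ equal to $(\alpha_{Pool},\beta_{Pool})$ or $(\alpha_F,\beta_F)$. Then $\E[\hat{Y}_s] = \alpha + \beta^{\top}\mu_s$, and applying the common intercept normalization $\alpha = \E[Y_S] - \beta^{\top}\bar{\mu}$ from (\ref{apool}) and (\ref{afeo}) gives $\E[\hat{Y}_s] = \E[Y_S] + \beta^{\top}(\mu_s - \bar{\mu})$; subtracting $\E[Y_s]$ yields the expressions in (i) and (iv). For SEO, substitute the explicit form (\ref{yseo2}), $\hat{Y}_{SEO}(X_s,s) = \alpha_F + \beta_F^{\top}(X_s - \mu_s + \bar{\mu})$; since $\E[X_s - \mu_s] = 0$, the mean-deviation term vanishes, leaving $\E[\hat{Y}_{SEO}(X_s,s)] = \alpha_F + \beta_F^{\top}\bar{\mu} = \E[Y_S]$, which gives (v).

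Case (ii) (PTF) requires a little more care. Evaluating (\ref{ptfn}) at $x = X_s$ makes $z_s = Z_s = \Sigma_s^{-1/2}(X_s - \mu_s)$, and the forecast splits into the deterministic constant $\sum_i p_i\alpha_i^o$ and the term $\bigl(\sum_i p_i\|\beta_i^o\|/\|\beta_s^o\|\bigr)\,\beta_s^{o\top}Z_s$; the scalar multiplier here is nonrandom and factors out of the expectation, and $\E[Z_s] = 0$, so only the constant survives. By (\ref{stab}), $\alpha_i^o = \alpha_i + \beta_i^{\top}\mu_i = \E[Y_i]$, hence $\sum_i p_i\alpha_i^o = \E[Y_S]$ and $\mathsf{bias}(s) = \E[Y_S] - \E[Y_s]$, matching SEO. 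Case (iii) is immediate from the definition $\hat{Y}_C(x) = \E[Y_S\mid X_S = x]$: no simplification of $\E[\hat{Y}_C(X_s)]$ is available, because the outer expectation averages $\E[Y_S\mid X_S = \cdot]$ against the law of $X_s$, not the mixture law of $X_S$, so the tower property does not collapse it to $\E[Y_S]$. I do not expect any genuine obstacle; the only step demanding attention is recognizing, in case (ii), that the coefficient multiplying $\beta_s^{o\top}Z_s$ is a deterministic scalar and that the standardization pins down the mean-zero contribution.
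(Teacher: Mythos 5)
Your proposal is correct and follows essentially the same route as the paper's proof: take expectations of each explicit forecast formula, use the intercept normalization $\alpha = \E[Y_S] - \beta^{\top}\bar{\mu}$ for the pooled and FEO cases, observe that $\E[\hat{Y}_{PTF}] = \sum_i p_i\alpha_i^o = \E[Y_S]$ since $\E[Z_s]=0$, and note that (iii) holds by definition. The only cosmetic difference is that the paper obtains (v) by subtracting $\beta_F^{\top}(\mu_s-\bar{\mu})$ from the FEO bias while you substitute directly into (\ref{yseo2}); these are trivially equivalent.
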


\begin{proof}
For (i), we have, using the definition of $\alpha_{Pool}$ in (\ref{apool}),
\begin{eqnarray*}
\E[\hat{Y}_s - Y_s]
&=& \E[\alpha_{Pool} + \beta^{\top}_{Pool}X_s - Y_s]\\
&=& (\E[Y_S] - \beta^{\top}_{Pool}\bar{\mu}) + \beta^{\top}_{Pool}\mu_s
- \E[Y_s] \\
&=& \E[Y_S]-\E[Y_s] + \beta^{\top}_{Pool}(\mu_s-\bar{\mu}).
\end{eqnarray*}
For the PTF forecast, (\ref{stab}) and (\ref{ptfn}) yield
$$
\E[\hat{Y}_s] = \bar{\alpha}^o = \sum_sp_s\alpha_s^o = \sum_sp_s\E[Y_s]
= \E[Y_S],
$$
and the bias in (ii) follows. 
The expression in (iii) holds by definition.
The argument for (iv) is the same as the argument for (i).
The bias in (v) follows from (iv) because we see from (\ref{yseo})
that the SEO forecast for bank $s$ subtracts $\beta^{\top}_F(\mu_s-\bar{\mu})$ from the FEO forecast.
\end{proof}

In every case of Proposition~\ref{p:bias}, the average bias
$\sum_sp_s\mathsf{bias}(s)$ is zero, but the methods differ
in how they distribute bias across banks. 
We saw previously that the PTF and SEO methods
go the farthest in equalizing differences; we now see that the bias for each
of these methods is the difference $\E[Y_S]-\E[Y_s]$ between
the average loss rate for all banks and the average for an
individual bank.

Using the relationship $\beta_{Pool} = \beta_F + \Lambda\delta$
from (\ref{ovb}), we see that the difference between
the expressions in (i) and (iv) is
$$
\mathsf{bias}_{Pool}(s) - \mathsf{bias}_{FEO}(s)
= \delta^{\top}\Lambda^{\top}(\mu_s - \bar{\mu}).
$$
In light of the discussion in Section~\ref{s:feo},
this difference is the expected disparate impact on bank $s$
of using the pooled model.

\subsection{Improvement in Intercept $\alpha_s$}

By taking the expectation of (\ref{yfeol}), we get
\begin{equation}
\E[\hat{Y}_F(X_l)] = \sum_sps(\alpha_s + \beta_s\mu_s) + \beta_F(\mu_l-\bar{\mu}),
\label{ey}
\end{equation}
and the same holds for the expected pooled forecast with $\beta_F$ replaced by $\beta_{Pool}$.
It follows that, for any banks $s$ and $l$,
\[
\fracp{\E[\hat{Y}_F(X_l)]}{\alpha_s} = p_s > 0.
\]
In other words, all expected forecasts decrease following a reduction in $\alpha_s$.

In contrast, for the pooled model we get
\begin{equation}\label{eqn:sens-a-mean-loss}
    \fracp{\E[\hat{Y}_P(X_l)]}{\alpha_s} = p_s - \frac{{\partial \cov(\alpha_S, \mu_S)}/{\partial\alpha_s}}{\sum_t p_t \sigma_t^2 + \var(\mu_S)}\beta_s (\bar{\mu} - \mu_l) 
= p_s + p_s \frac{ (\mu_s - \bar{\mu}) (\mu_{l} - \bar{\mu})}{\sum_s p_s \sigma_s^2 + \var(\mu_S)} \beta_s.
\end{equation}
Bank $s$ benefits from its reduction of $\alpha_s$, in the sense that the derivative with $l=s$ is positive.
For $l\neq s$, the sign of (\ref{eqn:sens-a-mean-loss}) does not admit a simple description.
In particular, it may be negative when $\mu_s$ and $\mu_l$ are on opposite sides of $\bar{\mu}$,
meaning that one bank's loans are riskier than average and the other bank's loans are less risky
than average.

For the bias under FEO we have
\[
\fracp{\mathsf{bias}_F(l)}{\alpha_s}  = p_s - \1 \{l=s\}
\]
It is then immediate that
\[
\fracp{\mathsf{bias}_F(l)}{\alpha_s} > 0 \text { if } l \ne s \quad \mbox{and}\quad
\fracp{\mathsf{bias}_F(s)}{\alpha_s} < 0.
\]
The direction of change makes sense. 
If the bias for a bank is positive, meaning that the industry model overestimates its losses,
then improvements at other banks will reduce loss forecasts and thus reduce the bias.
The bank's own improvements will increase the bias by reducing the bank's own losses
by more than they reduce the model's forecasts.
The situation is reversed for a bank with a negative bias.

However, for the pooled regression method,
\[
\fracp{\mathsf{bias}_P(l)}{\alpha_s} 
= p_s + p_s \frac{ (\mu_s - \bar{\mu}) (\mu_{l} - \bar{\mu})}{\sum_s p_s \sigma_s^2 + \var(\mu_S)} \beta_s
- \1 \{l=s\},
\]
and the direction of change is unclear.

\subsection{Improvement in Loan Quality}

Now suppose bank $s$ improves the quality of its loan portfolio, resulting in a
smaller $\mu_s$.
This has no effect on $\beta_F$, which makes sense --- changing one bank's
loan quality should not change the sensitivity of losses to loan quality.
However, it is evident from (\ref{bpool}) that $\beta_{Pool}$ does change with $\mu_s$.

Under FEO, the mean the mean predicted loss rate satisfies
\[
\fracp{\E \hat{Y}_F(X_l)}{\mu_s} = p_s \beta_s + \beta_F (\1\{l=s\} - p_s),
\]
which is always positive if $l=s$. 
This means that an improvement in bank $l$'s loan quality (a reduction in $\mu_l$)
reduces bank $l$'s mean predicted losses.
In the pooled model,
\[
\fracp{\E \hat{Y}_{P}(X_l)}{\mu_s} = p_s \beta_s + \beta_{Pool} (\1\{l=s\} - p_s) + (\mu_s - \bar{\mu}) \fracp{\beta_{Pool}}{\mu_s};
\]
this expression could be negative, even with $l=s$, meaning that a bank could be penalized
(through a higher mean predicted loss rate) as a result of improving its loan quality.

The sensitivity of the bias under FEO is given by
\[
\fracp{\mathsf{bias}_F(l)}{\mu_s} = (\1 \{l = s \} - p_s) (\beta_F - \beta_s);
\]
in particular, the bias for bank $l$ moves in opposite directions with respect to changes
in $\mu_l$ and $\mu_s$, $s\not=l$.
Suppose industry model overestimates bank $l$'s losses, in the sense that the bias is positive,
and suppose the industry model overestimates bank $l$'s sensitivity to loan quality,
in the sense that $\beta_F>\beta_l$. Then bank $l$ will benefit (in the sense of reducing the
bias) from improving its loan quality by reducing $\mu_l$.

For the pooled regression,
\[
\fracp{\mathsf{bias}_P(l)}{\mu_s} = 
(\beta_{Pool}-\beta_s) (\1\{l=s\} - p_s) + (\mu_s - \bar{\mu}) \fracp{\beta_{Pool}}{\mu_s}.
\]
The sign of this expression does not admit a simple condition.

\subsection{Improvement in Loan Management}

Now suppose bank $s$ improves its abilities in loan management,
resulting in a reduction in $\beta_s$.
The mean predicted loss rate under FEO satisfies
\[
\fracp{\E[\hat{Y}_{F}(X_l)]}{\beta_s} = p_s (\mu_s + \mu_{l} - \bar{\mu}),
\]
and is positive if $\mu_s+\mu_l>\bar{\mu}$.
In the pooled model
\[
\fracp{\E[\hat{Y}_{P}(X_l)]}{\beta_s} = p_s \mu_s + 
\frac{p_s(\sigma^2_s + \mu_s(\mu_s-\bar{\mu}))}{\sum_ip_i(\sigma^2_i + \mu_i(\mu_i-\bar{\mu}))}
(\mu_{l} - \bar{\mu}),
\]
so $[\sigma^2_s + \mu_s(\mu_s-\bar{\mu})](\mu_l-\bar{\mu})>0$ is a sufficient condition
for the sensitivity to be positive. Regardless of the value of $\mu_l$, the sensitivity
is positive for all sufficiently large $\mu_s$.

For $l\not=s$, the sensitivity of the bias for bank $l$ with respect to $\beta_s$ equals
the sensitivity of the mean predicted loss because the actual expected loss $\E[Y_l]$
is unaffected by $\beta_s$. We therefore focus on the case $l=s$.
Under FEO,
\[
\fracp{\mathsf{bias}_F(s)}{\beta_s} 
= (p_s - 1) \mu_s + p_s (\mu_s - \bar{\mu}),
\]
which is guaranteed to be negative if $\mu_s<\bar{\mu}$.
Under the pooled model, the sign of
\[
\fracp{\mathsf{bias}_P(s)}{\beta_s} 
= (p_s - 1) \mu_s + 
\frac{p_s(\sigma^2_s + \mu_s(\mu_s-\bar{\mu}))}{\sum_ip_i(\sigma^2_i + \mu_i(\mu_i-\bar{\mu}))}
(\mu_s - \bar{\mu})
\]
does not admit a simple characterization.

\section{Convex Combinations of Coefficients}
\label{a:cxcomb}

Equation (\ref{bpool1}) aggregates the individual scalar slopes $\beta_s$
into a single value. We can generalize this perspective 
and ask what properties we would like in an aggregation function,
meaning a function $f:\R^{\bar{S}}\to\R$,
$$
\beta_* = f(\beta_1,\dots,\beta_{\bar{S}}),
$$
that combines bank-specific coefficients $\beta_s$ into an
``industry'' parameter $\beta_*$.

We consider the following properties:
\begin{itemize}
\item[(i)] $f(kb_1,\dots,kb_{\bar{S}}) = kf(b_1,\dots,b_{\bar{S}})$,
for all $k$, $b_1,\dots,b_{\bar{S}}\in\R$;
\item[(ii)] $f(b,\dots,b)=b$, for at least one nonzero $b\in\R$;
\item[(iii)]  $b_s>0$, for all $s$, implies $f(b_1,\dots,b_{\bar{S}}) \ge 0$;
\item[(iv)] $f$ is differentiable at zero.
\end{itemize}

Property (i) is needed for the aggregation to perform sensibly
under a change of units in the measurement of $X_s$:
if we divide each $X_s$ by $k$, each $\beta_s$ increases by a factor of $k$,
and it is natural to require that $\beta_*$ scale accordingly.
Properties (ii) and (iii) are also very modest requirements.
Property (iv) is harder to motivate but not unreasonable.
These properties constrain the aggregation function as follows:

\begin{proposition}
If (i)--(iv) hold, then $f(\beta_1,\dots,\beta_{\bar{S}})$ is a 
convex combination of its arguments.
\label{p:agg}
\end{proposition}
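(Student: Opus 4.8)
The plan is to extract linearity of $f$ from properties (i) and (iv), and then read off the convexity of the weights from (ii) and (iii).

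First, note that homogeneity (i) with $k=0$ forces $f(0,\dots,0)=0$. By (iv), $f$ has a derivative at the origin; call it $L$, which is a linear functional on $\mathbb{R}^{\bar{S}}$, say $L(b)=\sum_s w_s b_s$. I would then argue that $f=L$: for fixed $b\in\mathbb{R}^{\bar{S}}$ and $t>0$, differentiability gives $f(tb)=L(tb)+o(\|tb\|)=t\,L(b)+o(t)$, while (i) gives $f(tb)=t\,f(b)$; dividing by $t$ and letting $t\downarrow 0$ yields $f(b)=L(b)$. Since $b$ was arbitrary, $f(\beta_1,\dots,\beta_{\bar{S}})=\sum_s w_s\beta_s$ for constants $w_1,\dots,w_{\bar{S}}$.

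It then remains to check that these weights are nonnegative and sum to one. Summing to one follows from (ii): if $f(b,\dots,b)=b$ for some $b\neq 0$, then $b\sum_s w_s=b$, hence $\sum_s w_s=1$. Nonnegativity follows from (iii): if some $w_j<0$, I would take the strictly positive vector with $b_j=n$ and $b_s=1$ for $s\neq j$, so that $f$ of this vector equals $n w_j+\sum_{s\neq j}w_s\to-\infty$ as $n\to\infty$, contradicting (iii). Thus $w_s\ge 0$ for all $s$ and $\sum_s w_s=1$, so $f$ returns a convex combination of its arguments.

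The only mildly delicate point is the linearity step, where one must confirm the remainder behaves, i.e., that $o(\|tb\|)/t\to 0$ as $t\downarrow 0$ for fixed $b$; this holds because $\|tb\|=t\|b\|$ for $t>0$. Everything else is a one-line computation, so I expect no real obstacle.
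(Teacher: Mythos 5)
Your proof is correct and follows essentially the same route as the paper's: both extract linearity of $f$ from homogeneity (i) together with differentiability at the origin (iv) — the paper by differentiating $g(t)=f(t\beta)=tf(\beta)$ at $t=0$, you by dividing the first-order expansion by $t$ and letting $t\downarrow 0$, which is the same computation — and then read off $\sum_s w_s=1$ from (ii) and $w_s\ge 0$ from (iii). No gaps.
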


\begin{proof}
Fix $\beta \in \R^{\bar{S}}$. 
Let $g(t) = f(t \beta)$. By condition (iv), $g'(0) = \beta^\top f'(0)$.
Condition (i) and (ii) imply $g(t) = t f(\beta)$,
so $g'(t) = f(\beta)$ for any $t$. Thus, 
$f(\beta) = g'(0) = \beta^\top f'(0)=\sum_{i=1}^{\bar{S}} f_i'(0)\beta_i$.
Condition (ii) now implies $\sum_{i=1}^{\bar{S}} f_i'(0) = 1$,
and condition (iii) implies $f_i'(0) \ge 0$, for all $i$.
Thus, $f(\beta) = \sum_{i=1}^{\bar{S}} f_i'(0)\beta_i$ is a convex
combination of the components of $\beta$.
\end{proof}

The scalar FEO coefficient in (\ref{bfeo1}) is a convex combination 
of the bank-specific coefficients $\beta_s$,
but the pooled coefficient (\ref{bpool1}) is generally not.
This property of the FEO model extends to the multivariate case 
under additional conditions. If all the bank-specific covariance matrices 
$\Sigma_s$, $s=1,\dots,\bar{S}$, coincide,
then in (\ref{bfeo}) we get $\beta_F = \E[\beta_S] = \sum_sp_s\beta_s$.
If all $\Sigma_s$ are diagonal (but not necessarily identical), 
then the representation of the scalar
FEO coefficient in (\ref{bfeo1}) applies to each coordinate of $\beta_F$.
If all $\Sigma_s$ have the same eigenvectors, 
then we can transform the original features $X_s$
into uncorrelated features using principal components. 
Using these transformed features, each coordinate of $\beta_F$ 
is a convex combination of bank-specific coefficients.

\section{Additional Information on Empirical Analysis}
\label{appendix:data}

Table~\ref{t:names} lists the bank holding companies included in 
our empirical analysis and the symbols we use to refer to them.
The companies are listed in order of size by total assets.

\begin{table}[H]
\centering
\begin{tabular}{ll}
\toprule
\textbf{Ticker}	&	\textbf{Bank Name}	\\ 
\midrule
    JPM &                     JPMORGAN CHASE \& CO. \\
    BAC &              BANK OF AMERICA CORPORATION \\
      C &                           CITIGROUP INC. \\
    WFC &                    WELLS FARGO \& COMPANY \\
     GS &           GOLDMAN SACHS GROUP, INC., THE \\
     MS &                           MORGAN STANLEY \\
   SCHW &          CHARLES SCHWAB CORPORATION, THE \\
    USB &                             U.S. BANCORP \\
    PNC &  PNC FINANCIAL SERVICES GROUP, INC., THE \\
    TFC &             TRUIST FINANCIAL CORPORATION \\
     TD &                 TD GROUP US HOLDINGS LLC \\
     BK & BANK OF NEW YORK MELLON CORPORATION, THE \\
    COF &        CAPITAL ONE FINANCIAL CORPORATION \\
    STT &                 STATE STREET CORPORATION \\
   HSBC &         HSBC NORTH AMERICA HOLDINGS INC. \\
   SIVB &                      SVB FINANCIAL GROUP \\
   FITB &                      FIFTH THIRD BANCORP \\
   USAA &   UNITED SERVICES AUTOMOBILE ASSOCIATION \\
    BMO &                      BMO FINANCIAL CORP. \\
    CFG &           CITIZENS FINANCIAL GROUP, INC. \\
    AXP &                 AMERICAN EXPRESS COMPANY \\
    KEY &                                  KEYCORP \\
   NTRS &               NORTHERN TRUST CORPORATION \\
   ALLY &                      ALLY FINANCIAL INC. \\
    AMP &               AMERIPRISE FINANCIAL, INC. \\
     RY &                RBC US GROUP HOLDINGS LLC \\
   HBAN &       HUNTINGTON BANCSHARES INCORPORATED \\
     RF &            REGIONS FINANCIAL CORPORATION \\
   MUFG &       MUFG AMERICAS HOLDINGS CORPORATION \\
    BCS &                          BARCLAYS US LLC \\
    SAN &             SANTANDER HOLDINGS USA, INC. \\
    MTB &                     M\&T BANK CORPORATION \\
  BNPQY &                    BNP PARIBAS USA, INC. \\
     DB &                       DB USA CORPORATION \\
    DFS &              DISCOVER FINANCIAL SERVICES \\
\bottomrule
\end{tabular}
\caption{Symbols and names of included bank holding companies.}
\label{t:names}
\end{table}

We construct the loss rates, past due rates, and allowance rates using the entries in FR Y-9C forms outlined in Table~\ref{tbl:fry9c}.
\begin{table}[H]
\centering
\begin{tabular}{l|c|cc}
\toprule
\textbf{Variables} & \textbf{Loan Types} & \textbf{2007Q1 – Present} & \textbf{2003Q1 – 2006Q4} \\
\hline
\multirow{5}{*}{Loan Amount} & CC & BHCKB538 & BHCKB538 \\
 & FL & BHDM5367 & BHDM5367 \\
 & CRE & \begin{tabular}[c]{@{}l@{}}Owned: BHCKF160\\ Other: BHCKF161\end{tabular} & BHDM1480 \\
 & CI & BHCK1763 & BHCK1763 \\
 & Total & BHCK2122 & BHCK2122 \\
 \hline
\multirow{4}{*}{Charge-Offs} & CC & BHCKB514 & BHCKB514 \\
 & FL & BHCKC234 & BHCKC234 \\
 & CRE & \begin{tabular}[c]{@{}l@{}}Owned: BHCKC895\\ Other: BHCKC897\end{tabular} & BHCK3590 \\
 & CI & BHCK4645 & BHCK4645 \\
 \hline
\multirow{4}{*}{Recoveries} & CC & BHCKB515 & BHCKB515 \\
 & FL & BHCKC217 & BHCKC217 \\
 & CRE & \begin{tabular}[c]{@{}l@{}}Owned: BHCKC896\\ Other: BHCKC898\end{tabular} & BHCK3591 \\
 & CI & BHCK4617 & BHCK4617 \\
 \hline
\multirow{4}{*}{Past Due: 30-89 days and accruing} & CC & BHCKB575 & BHCKB575 \\
 & FL & BHCKC236 & BHCKC236 \\
 & CRE & \begin{tabular}[c]{@{}l@{}}Owned: BHCKF178\\ Other: BHCKF179\end{tabular} & BHCK3502 \\
 & CI & BHCK1606 & BHCK1606 \\
 \hline
\multirow{4}{*}{Past Due: 90 days and accruing} & CC & BHCKB576 & BHCKB576 \\
 & FL & BHCKC237 & BHCKC237 \\
 & CRE & \begin{tabular}[c]{@{}l@{}}Owned: BHCKF180\\ Other: BHCKF181\end{tabular} & BHCK3503 \\
 & CI & BHCK1607 & BHCK1607 \\
 \hline
\multirow{4}{*}{Past Due: non-accrual} & CC & BHCKB577 & BHCKB577 \\
 & FL & BHCKC229 & BHCKC229 \\
 & CRE & \begin{tabular}[c]{@{}l@{}}Owned: BHCKF182\\ Other: BHCKF183\end{tabular} & BHCK3504 \\
 & CI & BHCK1608 & BHCK1608 \\
\bottomrule
\end{tabular}
\caption{Loan variables and FR Y-9C form correspondence.}\label{tbl:fry9c}
\end{table}

\section{Robustness Checks}
\label{appendix:robust}

\subsection{Pre-COVID Data and Allowance Rate}
\label{appendix:robust-precovid}
We repeat the analysis of Section~\ref{sec:experiment}, 
limiting the data to 2001--2019.
This serves two purposes. It addresses the possibility that our results are
driven by a few extreme values during the COVID period 2020--2021. 
It also accounts for a change in how banks measure allowances 
(the Current Expected Credit Losses methodology) that began to take effect at the end of 2019. 
We also consider adding allowance rate as another proxy for portfolio risks. 
Tables~\ref{tbl:1y_het_precovid} and ~\ref{tbl:1y_precovid} report the results for 
heterogeneity tests and differences of the parameter estimates under this setting. 
The evidence for heterogeneity and its impact is generally 
at least as strong using the pre-COVID data as using data through 2021.

\begin{landscape}
\begin{table}[]
    \centering
   \begin{tabular}{l|rrrrrrrrrrrrrrrr}
\toprule
Covariance & \multicolumn{4}{|c|}{$\alpha$} & 
\multicolumn{4}{|c|}{$\beta_{PDR}$} & \multicolumn{4}{|c}{$\beta_{AR}$} & \multicolumn{4}{|c}{$\gamma$}\\
Estimation &    \multicolumn{1}{|c}{CC} &   FL &  CRE &     CI &     \multicolumn{1}{|c}{CC} &   FL &  CRE &     CI &     \multicolumn{1}{|c}{CC} &   FL &  CRE &     CI &     \multicolumn{1}{|c}{CC} &     FL &    CRE &     CI \\
\midrule
bank clustered &  0.00 &  0.00 &  0.00 &  0.00 &  0.00 &  0.00 &  0.00 &  0.00 &       &       &       &       &       &       &       &       \\
time clustered &  0.00 &  0.00 &  0.00 &  0.00 &  0.00 &  0.00 &  0.00 &  0.00 &       &       &       &       &       &       &       &       \\ \hline 
bank clustered &  0.00 &  0.00 &  0.00 &  0.00 &  0.00 &  0.00 &  0.00 &  0.00 &  0.00 &  0.00 &  0.00 &  0.00 &       &       &       &       \\
time clustered &  0.00 &  0.00 &  0.00 &  0.00 &  0.00 &  0.00 &  0.00 &  0.00 &  0.00 &  0.00 &  0.00 &  0.00 &       &       &       &       \\ \hline 
bank clustered &  0.00 &  0.00 &  0.00 &  0.00 &  0.00 &  0.00 &  0.00 &  0.00 &       &       &       &       &  0.00 &  0.00 &  0.00 &  0.00 \\
time clustered &  0.00 &  0.00 &  0.00 &  0.00 &  0.00 &  0.00 &  0.00 &  0.00 &       &       &       &       &  0.00 &  0.00 &  0.00 &  0.00 \\ \hline 
bank clustered &  0.00 &  0.00 &  0.00 &  0.00 &  0.00 &  0.00 &  0.00 &  0.00 &  0.00 &  0.00 &  0.00 &  0.00 &  0.00 &  0.00 &  0.00 &  0.00 \\
time clustered &  0.00 &  0.00 &  0.00 &  0.00 &  0.00 &  0.00 &  0.00 &  0.00 &  0.00 &  0.00 &  0.00 &  0.00 &  0.00 &  0.00 &  0.00 &  0.00 \\
\bottomrule
\end{tabular}
    \caption{Heterogeneity tests using pre-COVID data with allowance rate as an additional proxy for banks' portfolio risks.}
    \label{tbl:1y_het_precovid}
\end{table}

\begin{table}[]
\centering
\begin{tabular}{l|rrrlrrrlrrrl}
\toprule
Loan & \multicolumn{4}{|c|}{\textit{Past Due Rate}} & 
\multicolumn{4}{|c|}{\textit{Allowance Rate}} & \multicolumn{4}{|c}{\textit{Macro PC}} \\
Type &  $\beta_{Pool}$ &  $\beta_F$ & diff &   \multicolumn{1}{c|}{$p$-value} & 
$\beta_{Pool}$ & $\beta_{F}$ & diff &              \multicolumn{1}{c|}{$p$-value} & 
$\gamma_{Pool}$ & $\gamma_{F}$ & diff &             $p$-value \\
\midrule
 CC & 0.781 & 0.838 & -0.057 & 0.006$^{***}$ &        &        &        &               &       &       &        &       \\
 FL & 0.045 & 0.061 & -0.016 & 0.002$^{***}$ &        &        &        &       &       &       &        &       \\
CRE & 0.135 & 0.133 &  0.002 & 0.449         &        &        &        &       &       &       &        &       \\
 CI & 0.207 & 0.217 & -0.011 & 0.069$^{*}$   &        &        &        &       &       &       &        &       \\ \hline 
 CC & 0.794 & 0.852 & -0.058 & 0.109         & -0.029 & -0.032 &  0.003 & 0.938 &       &       &        &       \\
 FL & 0.010 & 0.021 & -0.011 & 0.306         &  0.335 &  0.307 &  0.028 & 0.424 &       &       &        &       \\
CRE & 0.095 & 0.068 &  0.026 & 0.052$^{*}$   &  0.127 &  0.191 & -0.064 & 0.090$^{*}$   &       &       &        &       \\ 
 CI & 0.178 & 0.196 & -0.018 & 0.000$^{***}$ &  0.064 &  0.054 &  0.010 & 0.514         &       &       &        &       \\\hline 
 CC & 0.595 & 0.633 & -0.039 & 0.001$^{***}$ &        &        &        &               & 0.417 & 0.397 &  0.020 & 0.000$^{***}$ \\
 FL & 0.037 & 0.052 & -0.015 & 0.001$^{***}$ &        &        &        &               & 0.110 & 0.106 &  0.003 & 0.329 \\
CRE & 0.129 & 0.126 &  0.002 & 0.477         &        &        &        &               & 0.057 & 0.057 &  0.000 & 0.601 \\
 CI & 0.155 & 0.155 & -0.001 & 0.886         &        &        &        &               & 0.190 & 0.190 &  0.000 & 0.981 \\\hline 
 CC & 0.577 & 0.618 & -0.041 & 0.046$^{**}$  &  0.037 &  0.032 &  0.005 & 0.846         & 0.420 & 0.399 &  0.021 & 0.000$^{***}$ \\
 FL & 0.006 & 0.018 & -0.012 & 0.146         &  0.315 &  0.268 &  0.047 & 0.041$^{**}$  & 0.095 & 0.094 &  0.001 & 0.768 \\
CRE & 0.097 & 0.078 &  0.019 & 0.182         &  0.102 &  0.147 & -0.045 & 0.265         & 0.052 & 0.050 &  0.002 & 0.267 \\
 CI & 0.140 & 0.153 & -0.012 & 0.035$^{**}$  &  0.032 &  0.007 &  0.025 & 0.105         & 0.189 & 0.189 & -0.001 & 0.583 \\
\bottomrule
\multicolumn{13}{r}{$^{*}$p$<$0.1; $^{**}$p$<$0.05; $^{***}$p$<$0.01} \\
\end{tabular}
\caption{Comparison of pooled and FEO coefficients using pre-COVID data with allowance rate as an additional proxy for banks' portfolio risks.} 
\label{tbl:1y_precovid}
\end{table}
\end{landscape}

\section{Revenue Models}\label{appendix:ppnr}

The Federal Reserve's stress testing framework includes models of revenues
as well as models of losses. We have focused on loan portfolio loss models
because they fit most clearly within the Fed's policy of equal treatment
and its preference for industry models. In this section, we show that
the heterogeneity documented for loss models in Section~\ref{sec:experiment}
extends to revenue models, referred to in the Fed's framework as
models of pre-provision net revenue or PPNR.

The Fed uses a suite of PPNR models to forecast difference sources of revenue.
These models differ from the loss models in at least two important respects:
they are typically autoregressive (AR) models, and, 
unlike the portfolio loss models, they do not rule out
bank fixed effects; see \cite{frb}. 
This feature points to the presence of unmodeled 
bank heterogeneity in the revenue forecasts.
Our goal in this section is to check for heterogeneity in a simple PPNR model 
and to compare coefficient estimates in the pooled and fixed-effect models.

We consider the modeling of trading revenue, which is one of the PPNR components
in the Fed's framework. We compare AR models with fixed-effects,
\begin{equation}\label{eqn:ppnr_fe}
    Y_{s,t}^{FE} = \alpha_s + \rho_{FE} Y_{s, t-1} + \beta_{FE} X_{s,t} + \gamma_{FE} \textit{VIX}_{t} + \epsilon^{FE}_{s,t}
\end{equation}
or pooled without fixed effects,
\begin{equation}\label{eqn:ppnr_p}
Y_{s,t}^P = \alpha_P + \rho_P Y_{s, t-1} + \beta_P X_{s,t} + \gamma_P \textit{VIX}_t + \epsilon_{s,t}^P.
\end{equation}
In both models, $Y$ is trading revenue normalized by total trading assets;
this choice of normalization is consistent with \cite{frb}. 
For the AR term, we use either a one-quarter lag $Y_{s,t-1}$ or
a four-quarter average lag, in which case we use
$1/4 \sum_{j=1}^4 Y_{s,t-j}$ in place of $Y_{s, t-1}$ in 
(\ref{eqn:ppnr_fe}) and (\ref{eqn:ppnr_p}) to capture 
average performance over the past year.

For $X_{s,t}$ we use the size of bank $s$ in quarter $t$,
as measured by the log of total assets.
For the macro variable, we use the \textit{VIX}$_t$, the market volatility index
taken from the Federal Reserve's stress testing historical dataset. 
Market volatility is expected to have a direct impact on trading revenue,
and indeed we observe a more significant effect of  \textit{VIX}$_t$ than
\textit{MacroPC}$_t$ (from Section~\ref{sec:experiment}) in this setting. 

Models (\ref{eqn:ppnr_fe}) and (\ref{eqn:ppnr_p}) differ in their intercept terms:
(\ref{eqn:ppnr_fe}) captures banks' fixed effects, but (\ref{eqn:ppnr_p}) 
requires the same intercept across all banks. 
The fixed-effect coefficient estimates $\rho_{FE}, \beta_{FE}$ and $\gamma_{FE}$ 
are identical to those of FEO; 
the methods differ in their forecasts:
FEO uses the average fixed effect, rather than bank-specific fixed effects in its forecasts.

As in Section~\ref{sec:experiment}, we use Y-9C financial
reporting data for the top 35 banks by total asset size (as of year-end 2021),
and we include only banks with at least 18 years of data to ensure our panel is 
mostly balanced. 
We fit the models using weighted least squares, weighting each observation
by quarter stress and bank asset balance. As in (\ref{eqn:weight}),
we choose the weights so that, for the same asset level, 
the quarters with the highest market volatility get twice the weight as
the quarters with the lowest market volatility.
As in the AR models in \cite{class}, we cluster standard errors by time.

Table~\ref{tbl:ppnr-trading-rev} reports the results. 
The first two columns correspond to the one-quarter AR setting,
and the last two columns are the one-year-average AR setting.
The numbers in parentheses are standard errors.

As expected, both the lagged response and the
\textit{VIX} term are statistically significant. 
The volatility term is more significant in the one-year-average AR setting,
presumably because the market environment changes less over one quarter, 
and its effect is partly captured by the lagged response.

The pooled and fixed-effect methods result in different estimates of $\rho$,
and the differences in estimates in all settings are more than 20\%.
We tested the hypotheses that all banks share the same (i) intercept,
(ii) AR coefficient term, (iii) \textit{VIX} coefficient, 
or (iv) coefficients of total asset size,
following the approach used in Section~\ref{s:heterogeneity};
all tests strongly reject that banks have identical model coefficients,
with $p$-values less than 0.01,
extending what we found for loss models. 
We have also examined the forecasts of trading interest income and
trading interest expense (two other components of PPNR),
and observed similar patterns across all three components.

\begin{table}[!htbp] 
\centering
\begin{tabular}{@{\extracolsep{5pt}}lcccc}
\hline \\[-1.8ex]
& \multicolumn{4}{c}{\textit{Normalized Trading Revenue}} \
\cr \cline{4-5}
\hline \\[-1.8ex]
 AR & 0.466$^{***}$ & 0.588$^{***}$ & 0.406$^{***}$ & 0.596$^{***}$ \\
  & (0.134) & (0.111) & (0.078) & (0.094) \\
 \textit{VIX} & -0.024$^{**}$ & -0.022$^{*}$ & -0.027$^{***}$ & -0.022$^{***}$ \\
  & (0.011) & (0.012) & (0.008) & (0.008) \\
 Log Assets & -0.003$^{}$ & -0.007$^{***}$ & 0.000$^{}$ & -0.006$^{***}$ \\
  & (0.004) & (0.001) & (0.003) & (0.002) \\
  \hline
 Bank FE & included & -- & included & -- \\
\hline \\[-1.8ex]
 & \multicolumn{4}{r}{$^{*}$p$<$0.1; $^{**}$p$<$0.05; $^{***}$p$<$0.01} \\
\end{tabular}
\caption{Coefficient estimates for the AR models. The first two columns use a one-quarter lag, and the last two use a one-year average lag. Columns 1 and 3 correspond to AR models with bank fixed effects, and columns 2 and 4 are pooled AR models.}
\label{tbl:ppnr-trading-rev}
\end{table}


\begin{thebibliography}{99}

\bibitem{agarwal}
Agarwal, S., An, X., Cordell, L., and Roman, R.A. (2020)
Bank stress test results and their impact on consumer credit markets.
Working paper 20-30, Federal Reserve Bank of Philadelphia.

\bibitem{agueh}
Agueh, M., and Carlier, G. (2011)
Barycenters in the Wasserstein space,
{\it SIAM Journal on Mathematical Analysis} 43(2), 904--924

\bibitem{angpis}
Angrist, J.D. and Pischke, J.S. (2008) 
{\it Mostly Harmless Econometrics},
Princeton University Press, Princeton, New Jersey.

\bibitem{anskoh}
Ansley, C.F., and Kohn, R. (1994)
Convergence of the backfitting algorithm for additive models,
{\it Journal of the Australian Mathematical Society (Series A)}
57, 316--329.

\bibitem{arneson}
Arneson, R. (2015)
Equality of opportunity,
{\it Stanford Encyclopedia of Philosophy} (Summer 2015 edition), 
Edward N. Zalta, ed. 
%https://plato.stanford.edu/archives/sum2015/entries/equal-opportunity/.

\bibitem{fairmlbook}
Barocas, S., Hardt, M., and Narayanan, A. (2019) 
{\it Fairness in Machine Learning}, https://fairmlbook.org/

\bibitem{basber}
Bassett, W.F., and Berrospide, J.M. (2018)
The impact of post stress test capital on bank lending.
Working paper 2018-097, Federal Reserve Board,
Washington, D.C.

\bibitem{bcbs}
BCBS (2019)
Overview of Pillar 2 supervisory review practices and approaches,
Bank for International Settlements, Basel, Switzerland.

\bibitem{frb}
Board of Governors (2021)
Dodd-Frank Act Stress Test 2021: Supervisory Stress Test
Methodology. Federal Reserve System, Washington, D.C.

\bibitem{ace}
Breiman, L., and Friedman, J.H. (1985)
Estimating optimal transformations for multiple regression and correlation.
{\it Journal of the American Statistical Association} 80, 580--598.

\bibitem{breuer}
Breuer, T., Jandacka, M., Rheinberger, K., and Summer, M. (2009) 
How to find plausible, severe, and useful stress scenarios. 
{\it International Journal of Central Banking} 5, 205--224.

\bibitem{chzhen}
Chzhen, E., Denis, C., Hebiri, M., Oneto, L, and Pontil, M. (2020)
Fair regression with Wasserstein barycenters,
{\it Advances in Neural Information Processing Systems 33 (NeurIPS 2020)}.

\bibitem{cope}
Cope, D., Hsu, C., Lively, C., Morgan, J., Schuermann, T., and Sekeris, E. (2022) 
Stress testing for commercial, investment, and custody banks.
{\it Handbook of Financial Stress Testing}, 247--270,
Cambridge University Press.

\bibitem{covas}
Covas, F.B., Rump, B., and Zakrajsek, E. (2014)
Stress-testing US bank holding companies: A dynamic panel quantile regression approach.
{\it International Journal of Forecasting} 30, 691--713.

\bibitem{dwork}
Dwork, C., Hardt, M., Pitassi, T., Reingold, O., and Zemel, R. (2012) 
Fairness through awareness, pp.214--226, in 
{\it Proceedings of the 3rd Innovations in Theoretical Computer Science Conference}.

\bibitem{fip}
Fernandes, M., Igan, D., and Pinheiro, M. (2020) 
March madness in Wall Street: (What) does the market learn from stress tests? 
{\it Journal of Banking and Finance} 112, 105250.

\bibitem{flannery}
Flannery, M.J. (2019)
Transparency and model evolution in stress testing,
Available at SSRN: \url{https://ssrn.com/abstract=3431679}.

\bibitem{fhk}
Flannery, M., Hirtle, B., and Kovner, A. (2017) 
Evaluating the information in the Federal Reserve stress tests. 
{\it Journal of Financial Intermediation} 29, 1--18.

\bibitem{flood}
Flood, M.D., Jones, J., Pritsker, M., and Siddique, A. (2022)
The role of heterogeneity in scenario design for financial stability stress testing.
{\it Handbook of Financial Stress Testing}, 98--127,
Cambridge University Press.

\bibitem{flokor}
Flood, M.D. and Korenko, G.G. (2015) 
Systematic scenario selection: stress testing and the nature of uncertainty. 
{\it Quantitative Finance} 15, 43--59.

\bibitem{ggkk}
Georgescu, O.-M., Gross, M., Kapp, D., and Kok, C. (2017)
Do stress tests matter? Evidence from the 2014 and 2016 stress test.
Working paper 2054, European Central Bank, Frankfurt, Germany.

\bibitem{gkk}
Glasserman, P., Kang, C., and Kang, W. (2015) 
Stress scenario selection by empirical likelihood. 
{\it Quantitative Finance} 15, 25--41.

\bibitem{glatan}
Glasserman, P., and Tangirala, G. (2016) 
Are the Federal Reserve’s stress test results predictable? 
{\it Journal of Alternative Investments: Systemic Risk Special Edition} 18, 82--97.

\bibitem{green}
Green, B. (2022)
Escaping the impossibility of fairness: from formal to substantive algorithmic fairness.
{\it Philosophy \& Technology} 35:90, 1--32.

\bibitem{grune}
Gr{\H u}new{\H a}lder, S.,  and Khaleghi, A. (2021)
Oblivious data for fairness with kernels,
{\it Journal of Machine Learning Research} 22, 1--36.

\bibitem{guehar}
Guerrieri, L., and Harkrader, J. (2021)
What drives bank performance?,
Working paper 2021-009, Federal Reserve Board, Washington, D.C.

\bibitem{guemod}
Guerrieri, L., and Modugno, M. (2021)
The information content of stress test announcements.
Working paper 2012-012, Federal Reserve Board, Washington, D.C.

\bibitem{guewel}
Guerrieri, L., and Welch, M. (2012)
Can macro variables used in stress test forecast the performance of banks?
Working paper 2012-49, Federal Reserve Board, Washington, D.C.

\bibitem{hastie}
Hastie, T., and Tibshirani, R. (1986)
Generalized additive models,
{\it Statistical Science} 1(3), 297--318.

\bibitem{hastie-r}
Hastie, T. (2023) gam: Generalized Additive Models, R package version 1.22-1,
\url{https://CRAN.R-project.org/package=gam}.
  
\bibitem{hutmit}
Hutchinson, B., and Mitchell, M. (2019)
50 years of test (un) fairness: Lessons for machine learning. 
In {\it Proceedings of the Conference on Fairness, Accountability, and Transparency}, 49--58.

\bibitem{class}
Hirtle, B., Kovner, A., Vickery, J., and Bhanot, M. (2016)
Assessing financial stability: The Capital and Loss Assessment under Stress Scenarios (CLASS) model.
{\it Journal of Banking and Finance} 69, S35--S55.

\bibitem{jfs}
Johnson, K., Foster, D., and Stine, R. (2020)
Impartial predictive modeling: ensuring group fairness in arbitrary models,
arXiv:1608.00528.

\bibitem{kapmit}
Kapinos, P., and Mitnik, O.A. (2016)
A top-down approach to stress-testing banks. 
{\it Journal of Financial Services Research} 49, 229--264.

\bibitem{kohlia}
Kohn, D., and Liang, N. (2019)
Understanding the effects of the U.S. stress tests.
Brookings Institution, Washington, D.C.

\bibitem{kupiec}
Kupiec, P. (2020)
Policy uncertainty and bank stress testing.
{\it Journal of Financial Stability} 51, 100761.

\bibitem{legouic}
Le Gouic, T., Loubes, J.-M., and Rigollet, P. (2020),
Projection to fairness in statistical learning,
arXiv:2005.11720

\bibitem{lipton}
Lipton, Z.,  Chouldechova, A., and McAuley, J. (2018) 
Does mitigating ML's impact disparity require treatment disparity?, 
{\it 32nd Conference on Neural Information Processing Systems},
Montr\'{e}al, Canada.

\bibitem{madras}
Madras, D., Creager, E., Pitassi, T., and Zemel, R. (2018)
Learning adversarially fair and transferable representations,
arXiv:1802.06309

\bibitem{mps}
Morgan, D.P., Peristiani, S., and Savino, V. (2014)
The information value of the stress test.
{\it Journal of Money, Credit and Banking} 46, 1479--1500.

\bibitem{parphi}
Parlatore, C., and Philippon, T. (2022)
Designing stress scenarios. Working paper w29901,
National Bureau of Economic Research, Cambridge, Mass.

\bibitem{ppc}
Philippon, T., Pessarossi, P., and Camara, B. (2017) 
Backtesting european stress tests. Working paper w23083, National Bureau of Economic Research,
Cambridge, Mass.

\bibitem{prince}
Prince, A.E., and Schwarcz, D. (2019) 
Proxy discrimination in the age of artificial intelligence and big data. 
{\it Iowa Law Review} 105, 1257--1358.

\bibitem{pritsker}
Pritsker, M.G. (2017)
Choosing stress scenarios for systemic risk through dimension reduction.
Risk and Policy Analysis Unit Paper No. RPA 17-4, 
Federal Reserve Bank of Boston.

\bibitem{shn}
Sahin, C., de Haan, J., and Neretina, E. (2020) 
Banking stress test effects on returns and risks. 
{\it Journal of Banking and Finance} 117, 105843.

\bibitem{til}
Schuermann, T. (2020)
Capital adequacy pre- and postcrisis and the role of stress testing. 
{\it Journal of Money, Credit and Banking} 52, 87--105.

\bibitem{wool}
Wooldridge, J.M. (2010)
{\it Econometric Analysis of Cross Section and Panel Data},
Second Edition, MIT Press.

\end{thebibliography}
\end{document}